%% file: paper.tex
\documentclass[]{TEAI}
\usepackage{helvet}

\usepackage{amsmath} 
\usepackage{natbib}
\usepackage{graphicx}
\usepackage{subcaption} 

\usepackage[toc,page,header]{appendix}
\usepackage[utf8]{inputenc} % allow utf-8 input
\usepackage[T1]{fontenc}    % use 8-bit T1 fonts
\usepackage{hyperref}       % hyperlinks
\usepackage{url}            % simple URL typesetting
\usepackage{booktabs}       % professional-quality tables
\usepackage{lmodern}        % scalable Latin Modern fonts
\usepackage{amsfonts}       % blackboard math symbols
\usepackage{nicefrac}       % compact symbols for 1/2, etc.
\usepackage{microtype}      % microtypography
\usepackage{wrapfig}
\usepackage{amssymb}  % 用于特殊符号如♠♣等
\usepackage{fontawesome}  % 用于邮箱图标 \faEnvelope
\usepackage{url}  % 用于 \url 命令

\usepackage{titletoc}
\usepackage{algorithm}
\usepackage{tikz}  % 用于绘制彩色标记符号
\usepackage{comment}  % 用于注释备用版本
\usepackage{tabularx}  % 如果需要自动调整列宽
\usepackage{booktabs}  % 用于更美观的表格线条(可选)
\usepackage{minitoc}

\usepackage{booktabs}
\usepackage{array}
\usepackage{etoolbox}

\definecolor{lightblue}{RGB}{200, 230, 255}  
\definecolor{headerblue}{RGB}{150, 200, 255} 

\usepackage{pgfplots}
\usepackage[utf8]{inputenc} % allow utf-8 input
\usepackage[T1]{fontenc}    % use 8-bit T1 fonts
\usepackage{hyperref}       % hyperlinks
\usepackage{url}            % simple URL typesetting
\usepackage{booktabs}       % professional-quality tables
\usepackage{amsfonts}       % blackboard math symbols
\usepackage{nicefrac}       % compact symbols for 1/2, etc.
\usepackage{microtype}      % microtypography
\usepackage{xcolor}         % colors
\usepackage{graphicx}
\usepackage{float}
\usepackage{comment}
\usepackage{multirow} % For multi-row cells
\usepackage{amsmath} % For \text command if needed inside math mode\Delta
\usepackage{makecell} % For multi-line cells and better vertical spacing in cells
\usepackage{siunitx}  % For better number alignment (optional but recommended)
\usepackage{tikz}
\usepackage{pgf-pie} % Package for creating pie charts
\usepackage{subcaption}
\usepackage{wrapfig}
\usepackage[export]{adjustbox}

\usepackage{ragged2e}      % for \RaggedRight in tabularx
\usepackage{tabularx}       % For tables with fixed total width and auto-adjusting columns
\usepackage{array}          % For advanced column formatting (like >{\centering\arraybackslash}X)
\usepackage{caption}        % Recommended for figures/tables, but we'll do simple text below images here.
\usepackage{enumitem}
\usepackage{pifont}
\usepackage[hang,flushmargin]{footmisc} % 更好的脚注处理
\usepackage{algorithmic}
\usepackage{tcolorbox}
\usepackage{amsthm}
\newtheorem{theorem}{Theorem} 
\newtheorem{proposition}[theorem]{Proposition}
\usepackage{tcolorbox}    % 1. 先加载 tcolorbox 主包
\tcbuselibrary{breakable}  % 2. 显式加载 breakable 库
\tcbuselibrary{skins}      % 3. 显式加载 skins 库 (这个库提供 topruleatbreak 等选项)
\usepackage{tabularx}
\usepackage{listings}

\title{Preference Score Distillation: Leveraging 2D Rewards to Align Text-to-3D Generation with Human Preference}

\author{
    Jiaqi Leng\textsuperscript{1}, Shuyuan Tu\textsuperscript{1}, Haidong Cao\textsuperscript{1}, Sicheng Xie\textsuperscript{1}, Daoguo Dong\textsuperscript{1}, Zuxuan Wu\textsuperscript{1,$\dagger$}, Yu-Gang Jiang\textsuperscript{1} 
}

\affiliation[1]{\mbox{Fudan University}} 
\begin{document}

\settitlefigure{%
    \includegraphics[width=0.92\textwidth]{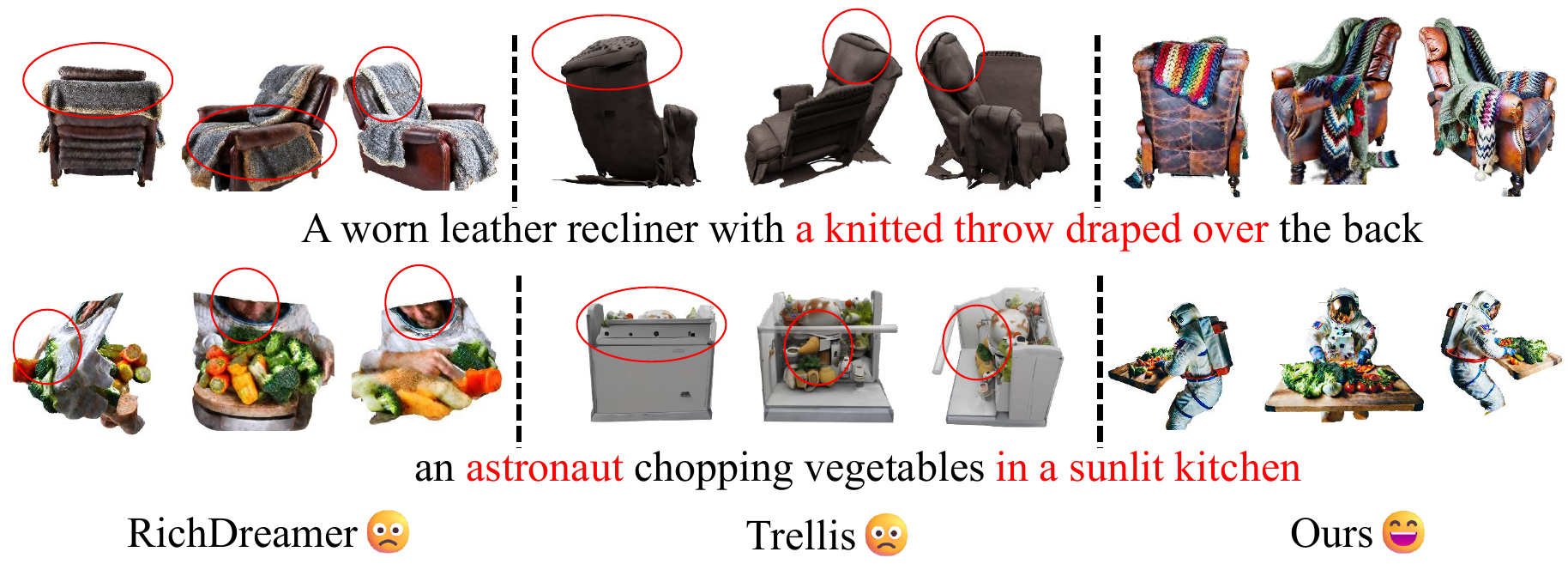}%
    \captionof{figure}{Comparisons with state-of-the-art methods RichDreamer \citep{richdreamer} and Trellis \citep{trellis}. Even when compared to methods that leverage stronger 3D priors, our method achieves significantly higher text alignment and enhanced visual quality, highlighting the critical role of preference alignment in text-to-3D generation.}%
    \label{first}%
}

\abstract{

\begin{abstract}

Human preference alignment presents a critical yet underexplored challenge for diffusion models in text-to-3D generation. Existing solutions typically require task-specific fine-tuning, posing significant hurdles in data-scarce 3D domains. To address this, we propose Preference Score Distillation (PSD), an optimization-based framework that leverages pretrained 2D reward models for human-aligned text-to-3D synthesis without 3D training data. Our key insight stems from the incompatibility of pixel-level gradients: due to the absence of noisy samples during reward model training, direct application of 2D reward gradients disturbs the denoising process. 
Noticing that similar issue occurs in the naive classifier guidance in conditioned diffusion models, we fundamentally rethink preference alignment as a classifier-free guidance (CFG)-style mechanism through our implicit reward model. Furthermore, recognizing that frozen pretrained diffusion models constrain performance, we introduce an adaptive strategy to co-optimize preference scores and negative text embeddings. By incorporating CFG during optimization, online refinement of negative text embeddings dynamically enhances alignment. To our knowledge, we are the first to bridge human preference alignment with CFG theory under score distillation framework. Experiments demonstrate the superiority of PSD in aesthetic metrics, seamless integration with diverse pipelines, and strong extensibility.

\end{abstract}
}
\maketitle
\renewcommand{\thefootnote}{}
\footnotetext{$^\dagger$Corresponding authors.}
\renewcommand{\thefootnote}{\arabic{footnote}}

% Catalogue (Need \newpage)
% \newpage
% \tableofcontents
% \newpage

\vspace{-1.5em}

\input{section/introduction}

\input{section/main}
\input{section/conclusion}

\clearpage

\bibliographystyle{plainnat}
\bibliography{main}

\clearpage

\input{section/appendix}

% \clearpage

% \newpage

% \beginappendix

% \startcontents[app]
% \begingroup
%   \renewcommand{\contentsname}{Appendix Contents}
%   \section*{\contentsname}
%   \printcontents[app]{}{1}{}
% \endgroup
% \newpage

% \input{section/appendix}

\end{document}

%% file: section/introduction.tex
\section{Introduction}

Diffusion models \citep{diffusion1,score,diffusion3,tu2024motioneditor,tu2024motionfollower,tu2025stableanimator, tu2025stableavatar, tu2025stableanimator++}, trained on web-scale datasets, demonstrate exceptional capability in generating high-fidelity images \citep{image1, sd}. Motivated by this success, researchers have sought to transfer pretrained 2D generative priors to data-scarce modalities. Score Distillation Sampling (SDS) \citep{sds} pioneered this cross-modal knowledge transfer, leveraging pretrained text-to-image diffusion models to optimize 3D differentiable representations through gradient-based maximum likelihood estimation. By circumventing the need for 3D training data, SDS has established text-to-3D generation as a prominent research direction and enabled applications beyond 3D synthesis, including one-step diffusion distillation \citep{dmd, dmd2, dmd_viedo},  character animation \citep{character}, and metric depth prediction \citep{depth}. Despite extensive efforts to refine SDS \citep{vsd, bridge, connect3d, jointdreamer, DreamMesh}, recent studies \citep{dreamreward, dreamalign, dreamdpo} reveal that SDS-synthesized 3D assets often exhibit misalignment with human preferences — a limitation shared by other diffusion models.

% Therefore, to address this issue, reinforcement learning from human feedback (RLHF) is introduced into the pipeline to enhance generative alignment. However, these works usually require additional training of the reward model using 3D data, which undermines the advantage of not requiring 3D data, or inevitably leads to visual artifacts. We argue that it is because the problematic pixel-wise gradient of the reward model disturbs the diffusion process. Among these attempts, dreamdpo [] attempts to construct a DPO-like objective to replace the pixel-wise gradient, but we find it still restricts to the maximum likelihood framework and has limited improvements on alignment. 
% Since the core idea of DPO is to derive an implicit reward model, we draw inspiration from the widely used classifier-free guidance (CFG) [] and question whether \textit{the implicit reward model in score distillation can be considered a special form of guidance that is compatible with PF-ODE.}

To address this misalignment, Reinforcement Learning from Human Feedback (RLHF) has been incorporated into text-to-3D pipelines. However, existing RLHF-based methods \citep{dreamreward, dreamrewardx, dreamcs} typically require training 3D-specific reward models, which fundamentally undermines the core advantage of 3D-data-free synthesis and may induce visual artifacts. Critically, since reward models are exclusively trained on clean images, directly applying their gradients to update 3D representations under high noise levels induces gradient misalignment. Given the established connection between score distillation \citep{sdi,cfd,connect3d,consistent3d} and the \textit{Probability Flow ODE} (PF-ODE) \citep{score}, we hypothesize this originates from pixel-level conflicts between reward gradients and diffusion dynamics. While DreamDPO \citep{dreamdpo} attempts to avoid pixel-wise gradients via DPO-inspired objectives, its disconnection from denoising dynamics limits extensibility to iterative refinement processes.

Motivated by DPO's implicit reward modeling and the efficacy of Classifier-Free Guidance (CFG) \citep{cfg} in conditional diffusion, we propose a fundamental rethinking: \textbf{\textit{Could the implicit reward in score distillation function as a PF-ODE-compatible guidance signal?}}
In this work, we introduce Preference Score Distillation (PSD), a novel framework that harnesses gradients from an implicit reward model to align score distillation with human preferences. To bridge preference learning with guidance mechanisms, we formalize human preference as a  binary variable $\mathcal{S}_{\text{pref}}$, to obtain a preference score guidance term $\nabla_{\boldsymbol{x}_t}\log p\left(\mathcal{S}_{\text{pref}}\mid\boldsymbol{x}_t,y\right)$ that decomposes into interpretable gradient components. Crucially, we identify that suppressing pixel-wise artifacts requires theoretically connecting this preference gradient to the score estimates. This insight motivates a reformulation of RLHF under the score distillation paradigm, where we rewrite the KL-divergence with a dynamic reference distribution tied to the current rendering. Eventually, through our derivation and constructing contrastive sample pair on-the-fly, we formulate a CFG-like guidance that is able to increase the likelihood towards preferred completions. Empirical results proves that it is compatible with existing diffusion dynamics and improve various aesthetics scores directly.

Moreover, noting that the pretrained diffusion model is frozen, we design an algorithm that adaptively updates the preference score and negative text embeddings \citep{neg1, neg2, reneg}. In each denoising step: the preference score is first computed; subsequently, the negative embedding (projected into the continuous text embedding space) is optimized as trainable parameters via reward score backpropagation, and integrated with CFG. Our experiments demonstrate that our approach has strong compatibility and can generate highly photorealistic, preference-aligned 3D assets.

To show the superiority of our method, we compare with state-of-the-art methods that utilizes stronger 3D priors in Fig. \ref{first}. While comparison method \citep{richdreamer, trellis} applies more 3d priors (Normal-Depth diffusion model, physically-based rendering materials) or large-scale training with 3d data, we only distillate diffusion models for image synthesis (MVDream \cite{mvdream} and Stable Diffusion v2.1 \citep{sd}) but still yield better text alignment and aesthetics, which domesticates the novelty of this work.

Our contribution can be summarized as :
\begin{itemize}
    \item We propose a preference alignment method for score distillation, Preference Score Distillation (PSD). To the best of our knowledge, we are the first to demonstrate that \textbf{preference alignment can be directly formulated as a CFG-type guidance} and can produce gradients towards increasing the likelihood of preferred samples via constructing contrastive sample pairs during the optimization process.
    \item We propose a strategy that alternatively updates the preference score and negative embeddings. In this strategy, optimizing continuous negative embeddings can achieve the effect of updating pretrained diffusion parameters.
    \item  Extensive experiments prove the ability of PSD to improve aesthetics scores. The results shows PSD outperforms other comparing methods in scores of 4 human preference assessment models, 1 visual question answering (VQA) model and delivers highly impressive qualitative comparison.
\end{itemize}

%% file: section/main.tex
\section{Preliminaries and Notations}

\textbf{Score Based Diffusion Models.} The process of diffusing a data sample into random noise can be described as \textit{Probability Flow Ordinary Differential Equation} (PF-ODE) \citep{score}. For an arbitrary data point $\boldsymbol{x}_{0} \sim p_{data}$, if we gradually add noise
\begin{equation}
    \boldsymbol{x}_{t} = \alpha_{t}\boldsymbol{x}_{0} + \sigma_{t}\boldsymbol{\epsilon}, \quad \boldsymbol{\epsilon} \sim \mathcal{N}(\mathbf{0}, \mathbf{I}),
\end{equation}
the PF-ODE that has the same marginal distribution can be written as
\begin{equation}
\begin{aligned}
\mathrm{d}(\frac{\boldsymbol{x}_t}{\alpha_t})&=\mathrm{d}(\frac{\sigma_t}{\alpha_t})(-\sigma_t\nabla_{x_t}\log p(\boldsymbol{x}_t))\\
&=\mathrm{d}(\frac{\sigma_t}{\alpha_t})\cdot{\boldsymbol{\epsilon}}_\phi(\boldsymbol{x}_t, t)
\end{aligned}
\label{ode}
\end{equation}
where ${\boldsymbol{\epsilon}}_\phi (\cdot)\approx -\sigma_t\nabla_{\boldsymbol{x}_t}\log p(\boldsymbol{x}_t)$ is our trained diffusion models. Our notations of diffusion models are consistent with \citep{edm, fsd, cfd}.

\textbf{Classifier-free Guidance.} In order to generate text-aligned contents, a technique termed \textit{Classifier-free Guidance} (CFG) \citep{cfg} pushes the samples towards higher likelihood through the gradient of an implicit classifier

\begin{equation}
    \tilde{\boldsymbol{\epsilon}}_{\phi}(\boldsymbol{x}_t, y, t) = \underbrace{\boldsymbol{\epsilon}_\phi(\boldsymbol{x}_t, t)}_{\text{unconditional}} +\underbrace{\gamma(\boldsymbol{\epsilon}_\phi(\boldsymbol{x}_t, y, t) - \boldsymbol{\epsilon}_\phi(\boldsymbol{x}_t, t))}_{\text{implicit classifier } \delta_{cls}}
    \label{cfg}
\end{equation}

where $y$ is the conditioning text embedding and $\gamma$ is a scaling factor, and we denote gradient ${\boldsymbol{\epsilon}}_\phi(\boldsymbol{x}_t, y, t) - {\boldsymbol{\epsilon}}_\phi(\boldsymbol{x}_t, t)$ produced by implicit classifier as $\delta_{cls}$. Additionally, negative prompting has become a common technique to improve generation quality. It replaces $\boldsymbol{\epsilon}_\phi(\boldsymbol{x}_t, t)$ with $\boldsymbol{\epsilon}_\phi(\boldsymbol{x}_t, n,t)$ conditioned by negative embedding $n$. We regard the embedding as a set of the model parameters and thus simplify it as $\boldsymbol{\epsilon}_\phi(\boldsymbol{x}_t, t)$.

\textbf{RLHF on Score Distillation.} Typically, \textit{Reinforcement Learning from Human Feedback} (RLHF) fine-tunes the diffusion models by maximizing expected rewards while regularizing the KL-divergence from a reference distribution \citep{rlhf}. We define a similar objective for score distillation:

\begin{equation}
    \max_{\phi} \mathbb{E}_{\boldsymbol{x}_{t}\sim p_{\phi}(\boldsymbol{x}_{t}|y)} [r(y, \boldsymbol{x}_{t})] - \beta \mathbb{D}_{\text{KL}} [p_{\phi}(\boldsymbol{x}_{t}|y)||q_{\theta}(\boldsymbol{x}_{t}|\boldsymbol{x}_{0}=g_{\theta}(\boldsymbol{c})) ],\label{rlhf_ours}
\end{equation}
where $\theta$ is the learnable parameters of differentiable representation $g$, $\boldsymbol{c}$ is the rendering camera view,  $q_{\theta}(\cdot)$ is the marginal distribution and $\mathbb{D}_{\text{KL}}[\cdot]$ is the KL-divergence. The difference between our definition and standard RLHF \citep{rlhf} is we modify the reference model in KL term into marginal distribution of current rendering $g_{\theta}(\boldsymbol{c})$ and we seek to optimize for an arbitrary timestep $t$. Justifications are presented in Appendix \ref{justification}, where existing works \citep{dreamreward, dreamdpo} can be related to our definition.

\begin{figure}[t] 
\centering 
\includegraphics[width=\textwidth]{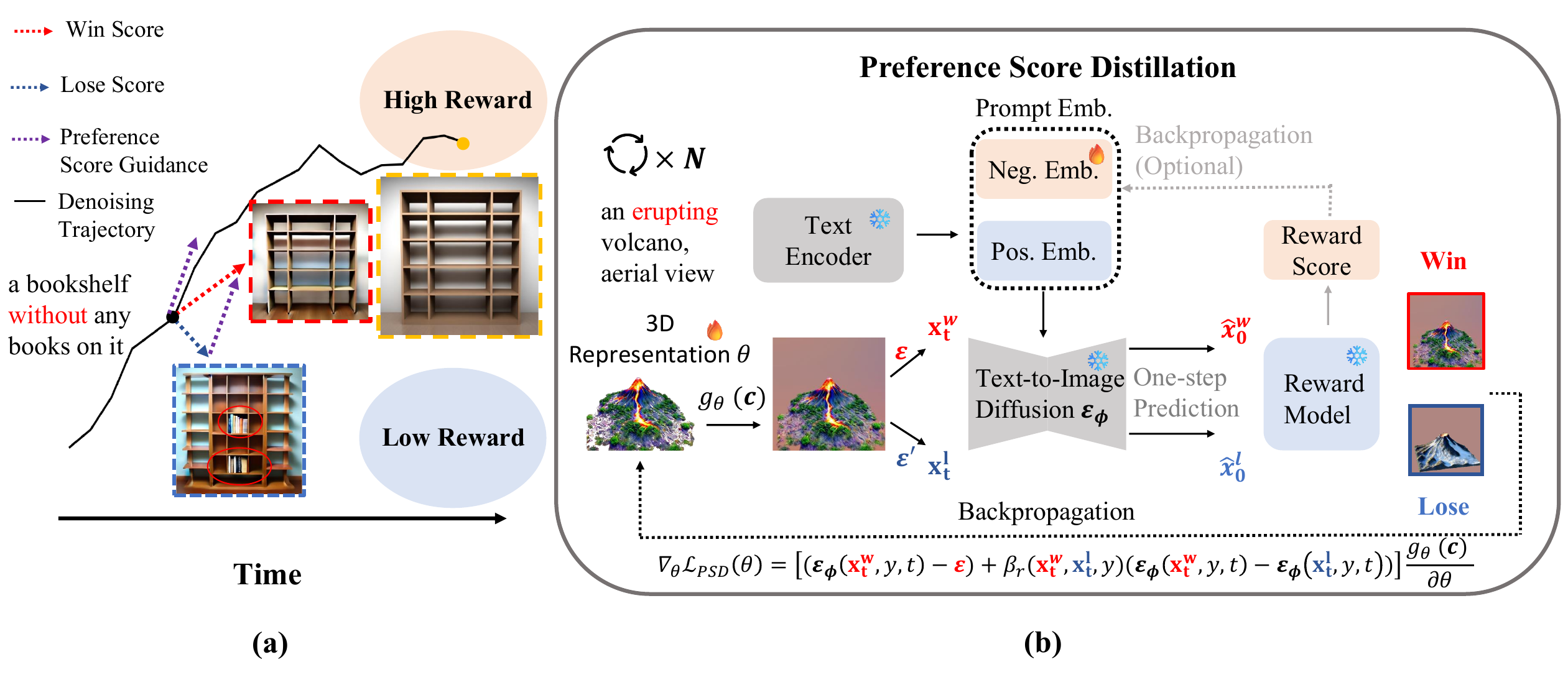} 
\caption{Overall illustration of Preference Score Distillation. a) Win (red) and lose (blue) samples are constructed on-the-fly to calculate win and lose scores, then preference score guidance (purple) pushes the denoising trajectory towards high-reward regions and finally improve alignment with reward.  b) In each step, two noise is added to the rendering images $g_{\theta}(\boldsymbol{c})$ and reward model determines win/lose based on one-step prediction of pretrained diffusion models $\boldsymbol{\epsilon}_\phi$. 3D representation $\theta$ and negative embedding $n$ are updated by our objective $\mathcal{L}_{\text{PSD}}(\theta)$ and reward score respectively.} 
\label{overall} 
\end{figure}

\section{Approach}
\label{approach}

In Section \ref{link}, we first establish the connection between preference and guidance by deriving the preference score guidance. In Section \ref{psd}, we present the proposed preference score distillation (PSD) method. In Section \ref{update}, we introduce a novel adaptive strategy for updating the preference score and negative embedding to improve the quality of generation. Due to the limited space, we present overall pseudo code of PSD in Algorithm \ref{alg_psd}.

\subsection{Linking Preference to Guidance}
\label{link}

The foundation of our framework is built upon a recent insight \citep{connect3d,cfd,consistent3d} into the connection between differential representation optimization and the denoising process of diffusion models. In Eq. \ref{insight}, $\mathrm{d}(\frac{\sigma_t}{\alpha_t})$ can be viewed as the learning rate $lr$ of an optimizer and $\boldsymbol{\epsilon}_\phi(\cdot)$ can be viewed as the gradient $\nabla \mathcal{L}$ of $\mathrm{d}(\frac{\boldsymbol{x}_t}{\alpha_t})$. In order to guiding $\boldsymbol{x}_t$ with preference, we formally introduce a binary variable $\mathcal{S}_{\text{pref}}$ as human-preferred properties for constrained conditions. 

\begin{equation}
\mathrm{d}(\frac{\boldsymbol{x}_t}{\alpha_t})=\underbrace{\mathrm{d}(\frac{\sigma_t}{\alpha_t})}_{-lr}\cdot\underbrace{\boldsymbol{\epsilon}_\phi(\boldsymbol{x}_t,y,\mathcal{S}_{\text{pref}},t)}_{\nabla \mathcal{L}}.
\label{insight}
\end{equation}

and apply Bayes’ rule 

\begin{equation}
  \nabla_{\boldsymbol{x}_t}\log p_\phi\left(\boldsymbol{x}_t\mid y,\mathcal{S}_{\text{pref}}\right)=\nabla_{\boldsymbol{x}_t}\log p_\phi\left(\boldsymbol{x}_t\mid y\right)+\nabla_{\boldsymbol{x}_t}\log p_{\phi}\left(\mathcal{S}_{\text{pref}}\mid\boldsymbol{x}_t,y\right).  
\label{guidance}
\end{equation}

A naive solution is to train a "classifier" (reward model) to estimate the probability of $\boldsymbol{x}_t$ aligning with human preference $p_{\phi}\left(\mathcal{S}_{\text{pref}}\mid\boldsymbol{x}_t,\boldsymbol{c}\right)$, but the drawback majorly involves: 1) introducing additional training to produce appropriate gradients and 2) reward models needs to perform at arbitrary timestep, but it can be only trained with clean images so we have to approximate clean data during early steps when noise is large \citep{image1}. Thus, we seek to formulate CFG-type guidance.

To achieve this, our high-level idea is to construct gradient via win-lose pair similar to DPO \citep{dpo,dspo} such that the denoising process is pushed to increase the likelihood of wining sample. We first introduce Bradley-Terry (BT) model \citep{bt} for human preference

\begin{equation}
\begin{aligned}
    p\left(\mathcal{S}_{\text{pref}} \mid \boldsymbol{x}_t, y \right)
& = p\left(\boldsymbol{x}_t^{\text{w}}\succ \boldsymbol{x}_t^{\text{l}} \mid \boldsymbol{x}_t, y \right)
 = \sigma\left( \Delta r_t \right),
\end{aligned}
\label{bt}
\end{equation}
where $\sigma(\cdot)$ represents the sigmoid function, $\Delta r_t=r\left(y, \boldsymbol{x}_t^{\text{w}} \right) - r\left(y, \boldsymbol{x}_t^{\text{l}} \right)$, $r(\cdot)$ is reward model, and $\boldsymbol{x}_t^{\text{w}}\succ \boldsymbol{x}_t^{\text{l}}$ represents  $\boldsymbol{x}_t^{\text{w}}$ and $\boldsymbol{x}_t^{\text{l}}$ are win-lose pair examples respectively. Different from off-line preference optimization, {we construct $\boldsymbol{x}_t^{\text{w}}$ and $\boldsymbol{x}_t^{\text{l}}$ in Eq. \ref{bt} online in order to link preference with inference-time guidance}. Consequently, replace $p_{\phi}\left(\mathcal{S}_{\text{pref}}\mid\boldsymbol{x}_t,y\right)$ in Eq. \ref{guidance} with Eq. \ref{bt} then Eq. \ref{insight} becomes

\begin{equation}
\mathrm{d}\left(\frac{\boldsymbol{x}_t}{\alpha_t}\right) = \mathrm{d}\left(\frac{\sigma_t}{\alpha_t}\right) \cdot \Big( \boldsymbol{\epsilon}_\phi(\boldsymbol{x}_t, y, t) - \sigma_t \left( 1 - \sigma (\Delta r_t) \right) \nabla_{\boldsymbol{x}_t}( \Delta r_t) \Big).
\label{preference}
\end{equation}

The primary challenge of solving Eq. \ref{preference} is term $\nabla_{\boldsymbol{x}_t}(\Delta r_t)$ being not tractable. Fortunately, we can rewrite the reward using the unique global optimal solution $p^{*}_{\phi}$ of Eq. \ref{rlhf_ours} varied from \citep{dpo}, which is

\begin{equation}
    r(y,\boldsymbol{x}_{t}) = \beta \log \frac{p^*_{\phi}(\boldsymbol{x}_{t}|y)}{q_{\theta}(\boldsymbol{x}_{t}|\boldsymbol{x}_{c}=g_{\theta}(\boldsymbol{c}))} + \beta \log Z(\boldsymbol{x}_{c}),
\label{reward}
\end{equation}
where $Z(\boldsymbol{x}_{0})$ is a trivial partition function since it eliminates directly. Then, by plugging Eq. \ref{reward} into Eq. \ref{preference}, we can propose our preference guided ODE.

\begin{equation}
    \mathrm{d}\left(\frac{\boldsymbol{x}_t}{\alpha_t}\right) = {} \mathrm{d}\left(\frac{\sigma_t}{\alpha_t}\right) \cdot \Biggl( \underbrace{\boldsymbol{\epsilon}_\phi(\boldsymbol{x}_t,y,t)}_{(A)} + \beta\sigma\left(- \Delta r_t \right) \underbrace{\left(\boldsymbol{\epsilon}_\phi(\boldsymbol{x}_t^w,y,t)-\boldsymbol{\epsilon}_\phi(\boldsymbol{x}_t^l,y,t)\right)}_{(B)} \Biggr).
\label{expansion}
\end{equation}

We neglect some of the less important terms for simplicity. Full details are in Appendix \ref{derivation}. Observe Eq. \ref{expansion}, {if we force a frozen pretrained diffusion model (substituting $p^*_{\phi}$ with $p_{\phi}$)}, term (A) will be analogous to unconditional score in Eq. \ref{cfg}, and newly introduced term (B) will be analogous to the implicit classifier $\delta_{cls}$. As for term $\sigma\left( - \Delta r_t \right)$, it weights the guidance term by how  incorrectly the implicit reward model ranking the win-lose pair. Generally, we name the term (B) as \textit{preference score guidance} and denote as $\delta_{pref}$. 

In practice, we calculate the reward on noisy steps $r\left(y, \boldsymbol{x}_t \right)$ with the approximation $r\left(y, \hat{\boldsymbol{x}}_0 \right)$ based on Tweedie's formula \citep{tweedie} $\hat{\boldsymbol{x}}_{0} = \frac{\boldsymbol{x}_t - \sigma_t \boldsymbol{\epsilon}_\phi(\boldsymbol{x}_t, y, t)}{\alpha_{t}}$, and choose the sample with higher score to be $\boldsymbol{x}_t^w$. Furthermore, to reduce the number of forward passes in each denoising step, we replace term (A) with $\boldsymbol{\epsilon}_\phi(\boldsymbol{x}_t^w,y,t)$ and apply CFG to calculate the win/lose score in $\delta_{pref}$ as well.

\subsection{Preference Score Distillation}
\label{psd}

A key feature while formulating Eq. \ref{expansion} is the introduction of on-the-fly win-lose pair. 
% Classic solvers for diffusion models such as DDIM [] and DPM-Solver [] are not directly applicable even in 2D scenarios. Fortunately, 
For score distillation, \citep{dreamdpo} has shown the use of different noise to construct win-lose pair. In our Eq. \ref{expansion}, if we define

\begin{equation}
	\boldsymbol{x}_{t}^w = \alpha_{t}\boldsymbol{x}_{c} + \sigma_{t}\boldsymbol{\epsilon}, \quad \boldsymbol{x}_{t}^l = \alpha_{t}\boldsymbol{x}_{c} + \sigma_{t}\boldsymbol{\epsilon}',
	\label{win-lose}
\end{equation}
where $\boldsymbol{x}_{c}$ is a non-noisy sample (whether it is one-step predicted samples from \citep{sdi} or clean space samples from \citep{csd,fsd}), $\boldsymbol{\epsilon}, \boldsymbol{\epsilon}' \sim \mathcal{N}(\mathbf{0}, \mathbf{I})$ are two independent noise, then our preference guidance $\delta_{pref}$ will become a velocity field that pushes the samples towards high reward regions in all conditions. Eventually, with the \textit{change-of-variable} \citep{sdi, fsd, cfd} technique already discussed in previous works, guiding 3D generation using Eq. \ref{expansion} can be formulated as objective

\begin{equation}
	\nabla_{\theta} \mathcal{L}_{\text{PSD}}(\theta) = \mathbb{E}_{t,\boldsymbol{c}} \left[ \left( \delta_{gen} + \gamma \delta_{cls} + \beta_{r}\delta_{pref} \right) \frac{\partial g_{\theta}(\boldsymbol{c})}{\partial \theta} \right],
    \label{objective}
\end{equation}

where unconditional prior $\delta_{gen}={\boldsymbol{\epsilon}}_\phi(\boldsymbol{x}_t^w, t) - \boldsymbol{\epsilon}$ and we set $\beta_{r}=\gamma\frac{||\delta_{cls}||_2}{||\delta_{pref}||_2}\cdot\sigma\left( -\Delta r_t \right)$ to balance the gradient produced by CFG and our preference score guidance. Notice that $\delta_{gen}$ is a general formulation for score distillation \citep{sds, cfd, bridge, csd, segment} methods although their derivations may differ. We present the illustration of overall mechanism in Fig. \ref{overall}.

\subsection{Adaptive Update of Preference Score and Negative Embeddings} 
\label{update}

\begin{wrapfigure}{r}{9cm}
\centering
\includegraphics[width=\linewidth]{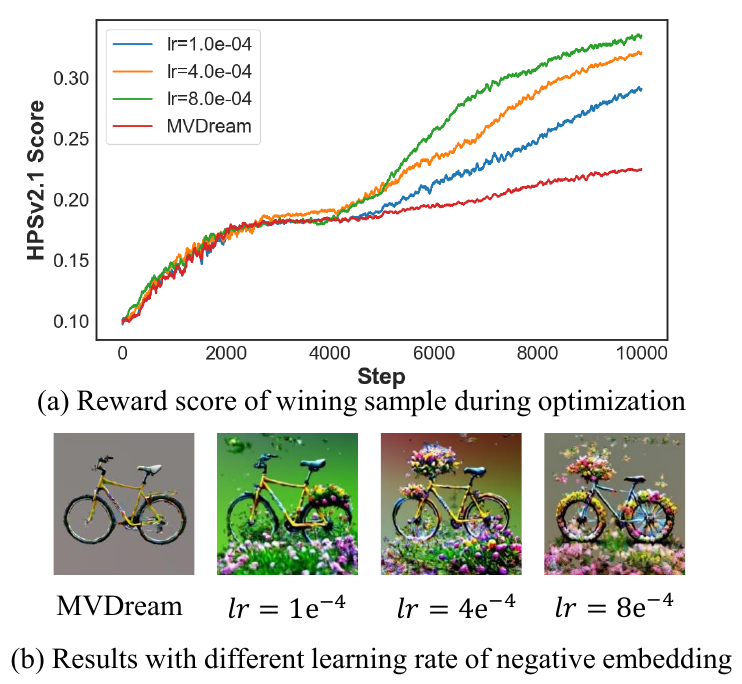}
\caption{Effect of negative embedding optimization strategy on single prompt. Employ negative embedding optimization can significantly improve aesthetic score, but overlarge learning rate will harm visual quality. }
\label{lr}
\end{wrapfigure}

In Section \ref{link}, we assumed a frozen diffusion model $p_{\phi}$. However, it can impose limitations on the effectiveness of our proposed preference guidance, especially when 3D generation via score distillation usually requires in thousands of denoising steps. The reason behind is  $p_{\phi}$ is never updated so that the optimal solution $p_{\phi}^*$ can never be approached. A high-level theoretical analysis is presented in Appendix \ref{appendix_proof}. On the other hand, since we only perform prompt-specific optimization, it is not rational to update the pretrained model itself even only part of the parameters (e.g.  LoRA \citep{lora} implementations in VSD \citep{vsd}).

Therefore, inspired by ReNeg \citep{reneg}, which regards the negative embedding as part of the model parameters, we develop an algorithm which adaptively update the preference score and negative embeddings.  Specifically, we initialize the negative embedding with hand-crafted negative descriptors and negative embedding $n$ is updated by maximizing

\begin{equation}
	\mathcal{L}_{\text{Emb}}(n) = \mathbb{E}_{\boldsymbol{c}} \left[ r(y, \hat{\boldsymbol{x}}_0) \right],
    \label{embedding}
\end{equation}

To enable training with negative embedding, $\hat{\boldsymbol{x}}_0$ is the same one-step prediction used in Eq. \ref{expansion}, so that negative embedding can be involved while incorporating with CFG. We find it enough to share the same negative embedding for all viewing direction $\boldsymbol{c}$.

For better viewing the effect of our proposed negative embedding optimization technique, Fig. \ref{lr} illustrates the curves of target reward score during optimization with different learning rates and their respective results. Comparing the end-point score and the visual quality, larger learning rate will bring benefits of higher score but result in "reward hacking" \citep{test, hacking}. Thus, we make several practical trade-off (presented in Appendix \ref{regularizer}). Eventually, we achieve aesthetic score improvements that align with human perception.

\section{Experiments}
\label{experiment}

\subsection{Experimental Setups}

\begin{figure}[t] 
\centering 
\includegraphics[width=\textwidth]{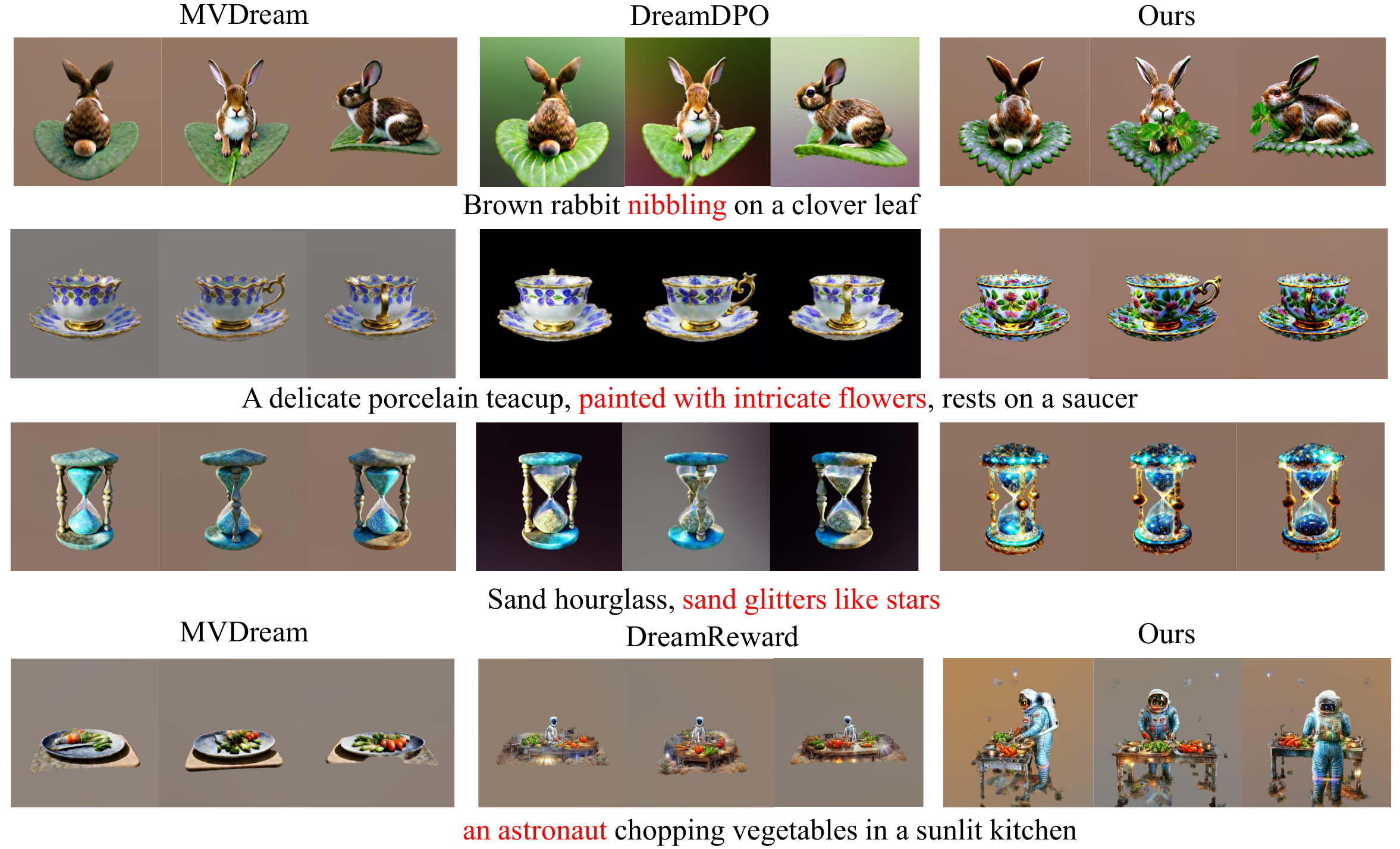} 
\caption{Qualitative comparison of single-stage distillation of MVDream \citep{mvdream} ($256\times256$). Our PSD significantly improve text alignment (red) and visual quality against comparing method DreamReward \citep{dreamreward} and DreamDPO \citep{dreamdpo}.} 
\label{mvdream_fig} 
\end{figure}

\begin{wraptable}{r}{10cm}

{
 \caption{Quantitative Comparison of single-stage distillation of MVDream \citep{mvdream} across 200 prompts in Eval3d \citep{eval3d} with different rewards. Higher values are better ($\uparrow$). The best performance is in bold. }
 \label{mvdream}
}
 \setlength{\tabcolsep}{0.5mm}{
\begin{tabular}{c|lccccc}
\toprule
\textbf{Rewards} &\textbf{Methods} & \textbf{I.R. $\uparrow$} &  \textbf{Pick. $\uparrow$} & \textbf{Aes. $\uparrow$}  & \textbf{MPS} $\uparrow$ & \textbf{T.A.} $\uparrow$\\
\midrule
\multirow{3}*{HPSV2.1}& MVDream & -0.22  & 20.55 & 5.79 & 9.30 & 53.14\\
% CFD & -0.43 & 0.21 & 20.34  & & & 73.63\\
% JointDreamer & \\
&DreamDPO & -0.28 & 20.48 & 5.80 & 9.00 & 75.68\\
&Ours &\textbf{ 0.12}  & \textbf{20.99} & \textbf{5.92} & \textbf{10.26} & \textbf{75.70} \\

\midrule
\multirow{2}*{Reward3D}&DreamReward & 1.78  & 21.40 & 6.15 & 10.19 & 74.37 \\
&Ours & \textbf{1.80}  & \textbf{21.49} & \textbf{6.25} & \textbf{10.40} & \textbf{90.63} \\
% \, + CFD \\
\bottomrule
\end{tabular}
}
\end{wraptable}

To assess the performance, our experiments include various settings under the same codebase \textit{threestudio} \citep{threestudio}. For direct comparisons with existing score distillation preference alignment methods, we use 200 test prompts from Eval3d \citep{eval3d} to perform a one-stage distilling of MVDream \citep{mvdream}. To justify our proposed PSD on high-resolution generation, we further evaluate on several more complex pipelines. In the 2-stage NeRF \citep{nerf} synthesis and 3-stage DMTet \citep{dmtet} synthesis, we first distill MVDream and then Stable Diffusion v2.1 \citep{sd}. Since these pipelines require more optimization time, we evaluate on a more difficult filtered 40-prompt subset presented in Appendix \ref{prompt}. For target reward, we apply HPSv2.1 \citep{hps} if without specification. As for evaluation metrics, we assess with human preference reward ImageReward \citep{imagereward} (I.R.), PickScore \citep{pickscore} (Pick.), Aesthetic scores \citep{laion} (Aes.), Multi-dimensional Preference Score \citep{mps} (MPS) and VQA model Qwen2.5-VL-7B \citep{qwen} for text-3D alignment (T.A.) using the question-answer pair generated in Eval3d. More implementation details are presented in Appendix \ref{details}.

\subsection{Results}
\label{results}

\textbf{Quantitative comparisons.} Shown in Tab. \ref{mvdream} and \ref{complex}, our PSD outperforms all other methods, which indicates improvements on generation quality as well as alignment with human preference. Specifically, comparing with DreamDPO \citep{dreamdpo}, a method that uses pretrained 2D reward, we achieve a more significant improvements, highlighting our ability to unleash 2D reward. As for cooperating with method that requires additional training represented by DreamReward \citep{dreamreward}, our results still showcases our advantage on text-3D alignment. Besides, Reward3D finetuned in DreamReward enforces 4 input view and is not memory feasible for high-resolution generation. Additionally, we compare with RichDreamer \citep{richdreamer}, a method that introduce extra multi-view normal, depth, albedo diffusion priors. The results proves that misalignment of human preference commonly exists  even  when stronger diffusion priors are employed.

\begin{figure}[t] 
\centering 
\includegraphics[width=\textwidth]{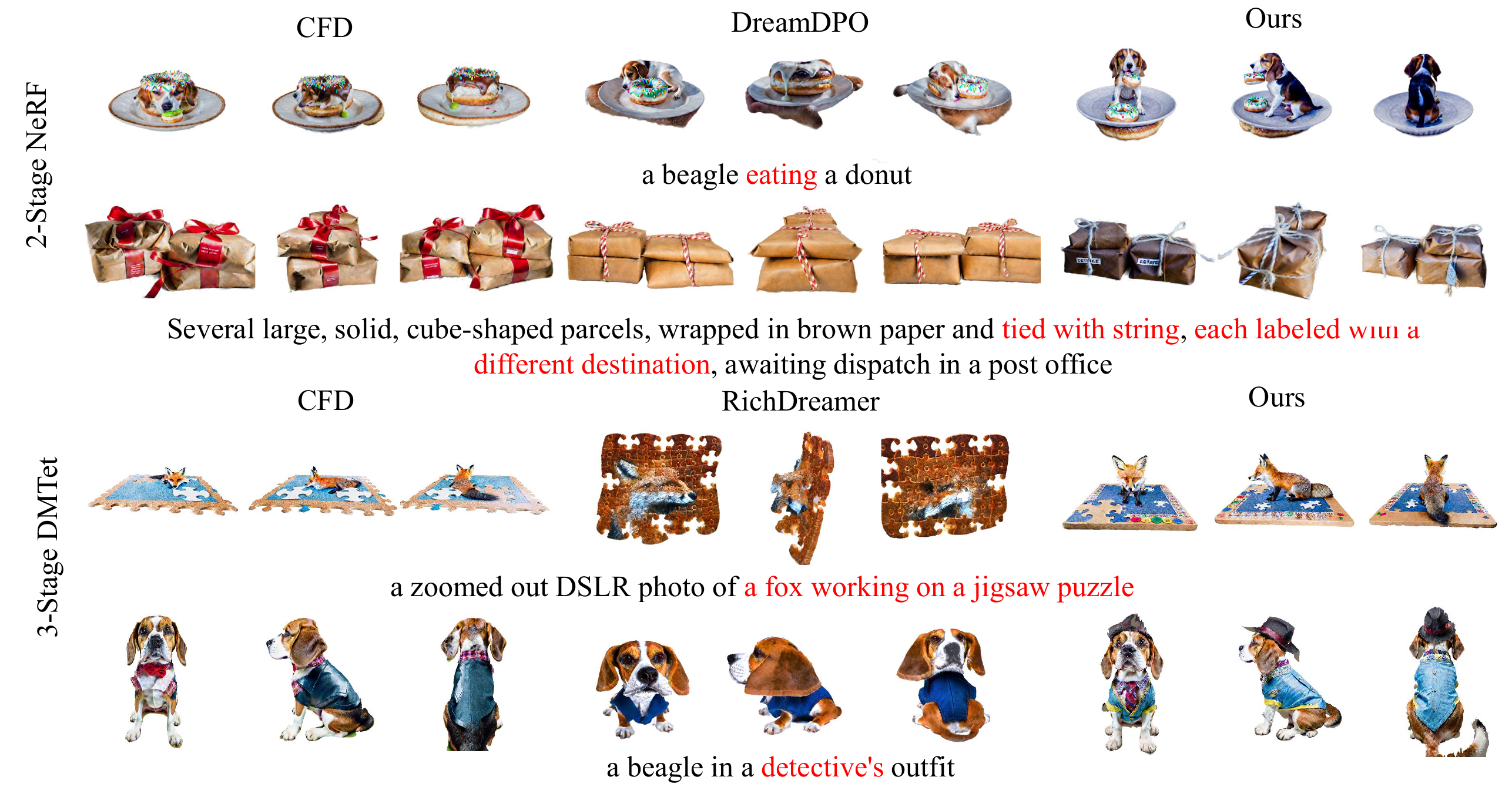} 
\caption{Qualitative comparison of 2-stage NeRF \citep{nerf} ($512\times512$) and 3-Stage DMTet \citep{dmtet} ($1024\times1024$) generation. PSD improves alignment with the prompts in red.} 
\label{2stage_fig} 
\end{figure}

\begin{table}[!t]

\centering
{
 \caption{Quantitative Comparison of 2-stage NeRF \citep{nerf} and 3-Stage DMTet \citep{dmtet} generation across 40-prompt subset. Higher values are better ($\uparrow$). The best performance is in bold. Comparison method DreamReward \citep{dreamreward} suffer from out-of-memory (OOM). * indicates method uses stronger diffusion priors. }
 \label{complex}
}
 \setlength{\tabcolsep}{1.8mm}{
\begin{tabular}{c|lccccc}

\toprule
\textbf{Pipeline} &\textbf{Methods} & \textbf{I.R. $\uparrow$} &  \textbf{Pick. $\uparrow$} & \textbf{Aes. $\uparrow$}  & \textbf{MPS} $\uparrow$ & \textbf{T.A.} $\uparrow$\\
\midrule
\multirow{4}*{2-Stage NeRF} & CFD  & -0.09  & 20.16  & 5.63  & 10.03 & 71.84 \\
&DreamReward & \multicolumn{5}{c}{OOM}\\
&DreamDPO & -0.06 & 20.18 & 5.40 & 10.04 & 76.18\\
&Ours & \textbf{0.01} & \textbf{20.27} &\textbf{ 5.95} & \textbf{10.22} & \textbf{81.34}\\
\midrule
\multirow{3}*{3-Stage DMTet}& CFD & -0.44 & 19.76 & 5.30  & 9.45 & 71.84\\
 & RichDreamer*& \textbf{0.02} & 19.68 &\textbf{ 5.92} & 8.24 & 76.45 \\
 & Ours  & -0.40 & \textbf{19.92} & 5.34 &\textbf{ 9.64} & \textbf{81.81} \\

\bottomrule
\end{tabular}
}
\end{table}

\textbf{Qualitative comparisons.} We provide qualitative comparison in Fig. \ref{mvdream_fig} and \ref{2stage_fig}. Baselines and previous methods deviate from given prompt text, while employing our PSD can lead to macroscopic improvements on text alignment (marked in red) and visual details. Also, noticing DreamReward will introduce artifacts, we provide supplementary evaluations of geometry quality in Appendix \ref{geometry}. Noticing the misalignment of quantitative and qualitative comparisons, we believe this is also due to reward hacking. Limited by the space, more discussion is presented in Appendix \ref{comparision}.

\begin{wrapfigure}{r}{9cm}
\centering
\includegraphics[width=\linewidth]{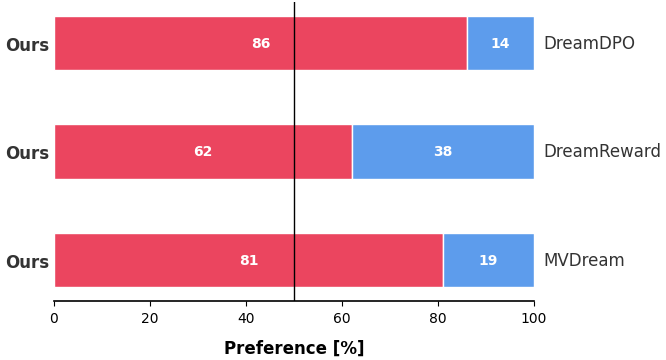}
\caption{{User preference study of comparing PSD with MVDream, DreamDPO, and DreamReward.} }
\label{user}
\end{wrapfigure}

\textbf{User study.} To validate the efficacy of our proposed method to real human users, we present a user study involving 24 participants. They are required to choose the better one for each comparison across four dimensions: Appearance Quality, Structure Quality, Text Alignment, and Overall Performance. The survey consists of 30 pairs of videos created from MVDream, DreamDPO, DreamReward, and our PSD. Shown in Fig. \ref{user}, our PSD received higher preference score, which is consistent with the results previously given by reward models. More details of this user study is included in Appendix \ref{app_user}.

\subsection{Ablation Study}
\label{ablation}

\begin{figure}[t] 
\centering 
\includegraphics[width=0.9\linewidth]{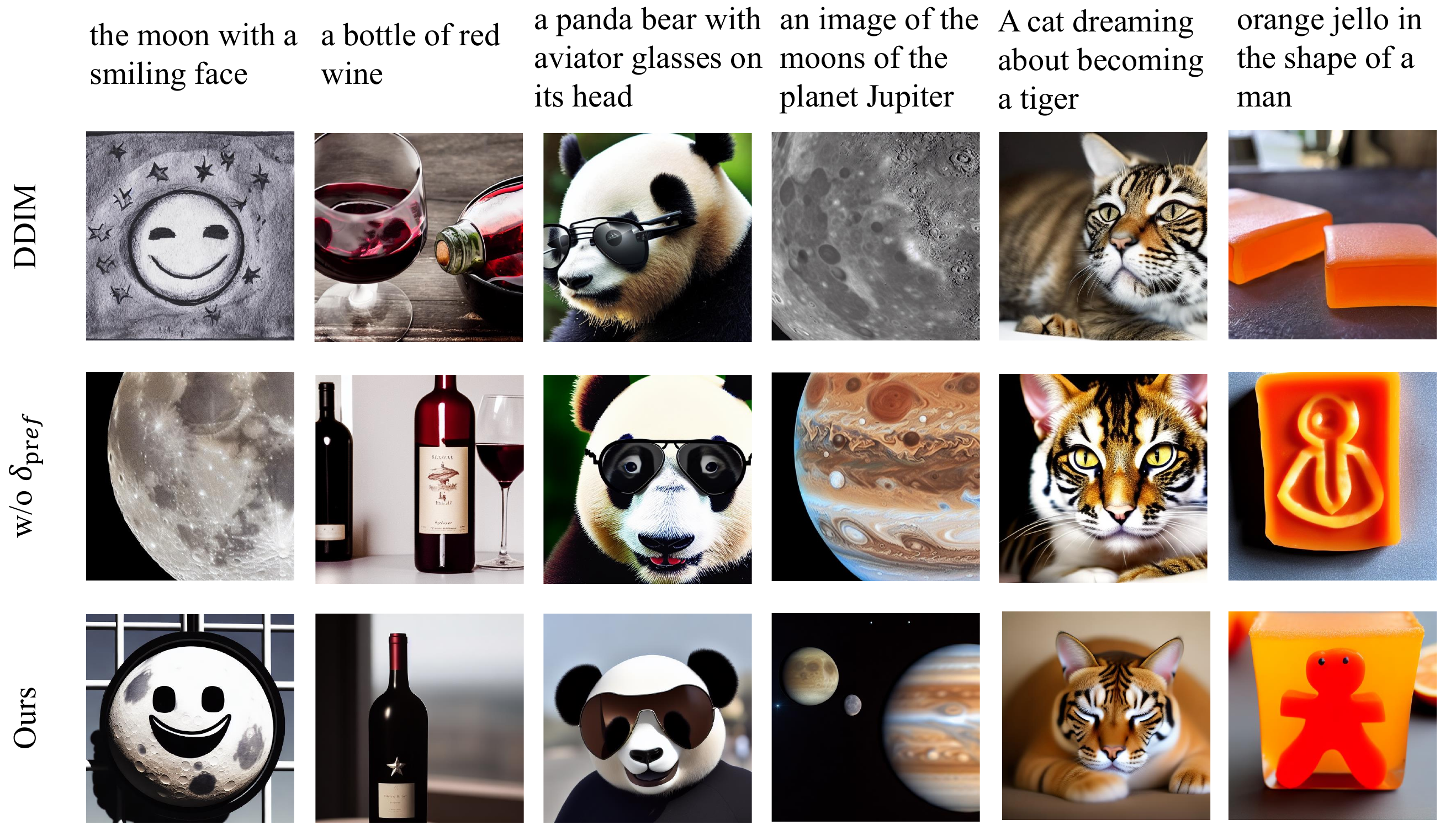} 
\caption{Preference score guidance on image generation. } 
\label{2dvisiual} 
\end{figure}

\begin{table}[!t]

    \centering
    {
\caption{Ablations of preference score on 2D image generation.  Since our final results and experiments without $\delta_{pref}$ are performed using optimization, standard DDIM \citep{diffusion3} are marked in gray. \label{tab_2d} }
}
    \begin{tabular}{lcccc}
\toprule
 \multirow{2}*{\textbf{Methods}} & {\textbf{Target}} & \multicolumn{3}{c}{\textbf{Unseen Rewards}}  \\
\cmidrule(lr){2-2} \cmidrule(lr){3-5} 
 & \textbf{HPSv2.1 $\uparrow$} & \textbf{I.R. $\uparrow$} & \textbf{Pick. $\uparrow$} & \textbf{MPS $\uparrow$}   \\
\midrule
\color{gray}DDIM & \color{gray}0.25 & \color{gray}\textbf{0.22} & \color{gray}\textbf{21.30} & \color{gray}9.77\\
\midrule
w/o $\delta_{pref}$ & 0.25 & 0.04 & 21.15 & 9.51\\
Ours (Eq. \ref{objective}) & \textbf{0.26} & \textbf{0.22} & 21.23 & \textbf{10.30} \\
\bottomrule
\end{tabular}
\end{table}

\textbf{Ablations on negative embedding optimization.} In Fig. \ref{lr}, we analysis the behavior of our negative embedding optimization strategy on single prompt. Quantitative ablation is presented in Tab. \ref{ablation_neg}. We evaluate on the 40-prompt subset with MVDream, PSD with preference score guidance only (w/o $n^*$) and PSD with negative embedding learning rate $4e^{-4}$ ($lr=4e^{-4}$). The results verify that our practical trade-off is necessary such that appropriate negative optimization can benefit improving aesthetic scores.

\textbf{Ablations on different reward models.} Theoretically, PSD is compatible with any pretrained 2D reward model. To present the difference, we incorporate ImageReward \citep{imagereward}. Shown in FIg. \ref{ablation_reward}, different reward models may benefit performance (capturing concept like "streaming" and "growing on a log"), but for the sake of fair comparison, we don't introduce any new models and inherit from the comparison methods instead.

\begin{wrapfigure}{r}{6cm}
\centering
\includegraphics[width=\linewidth]{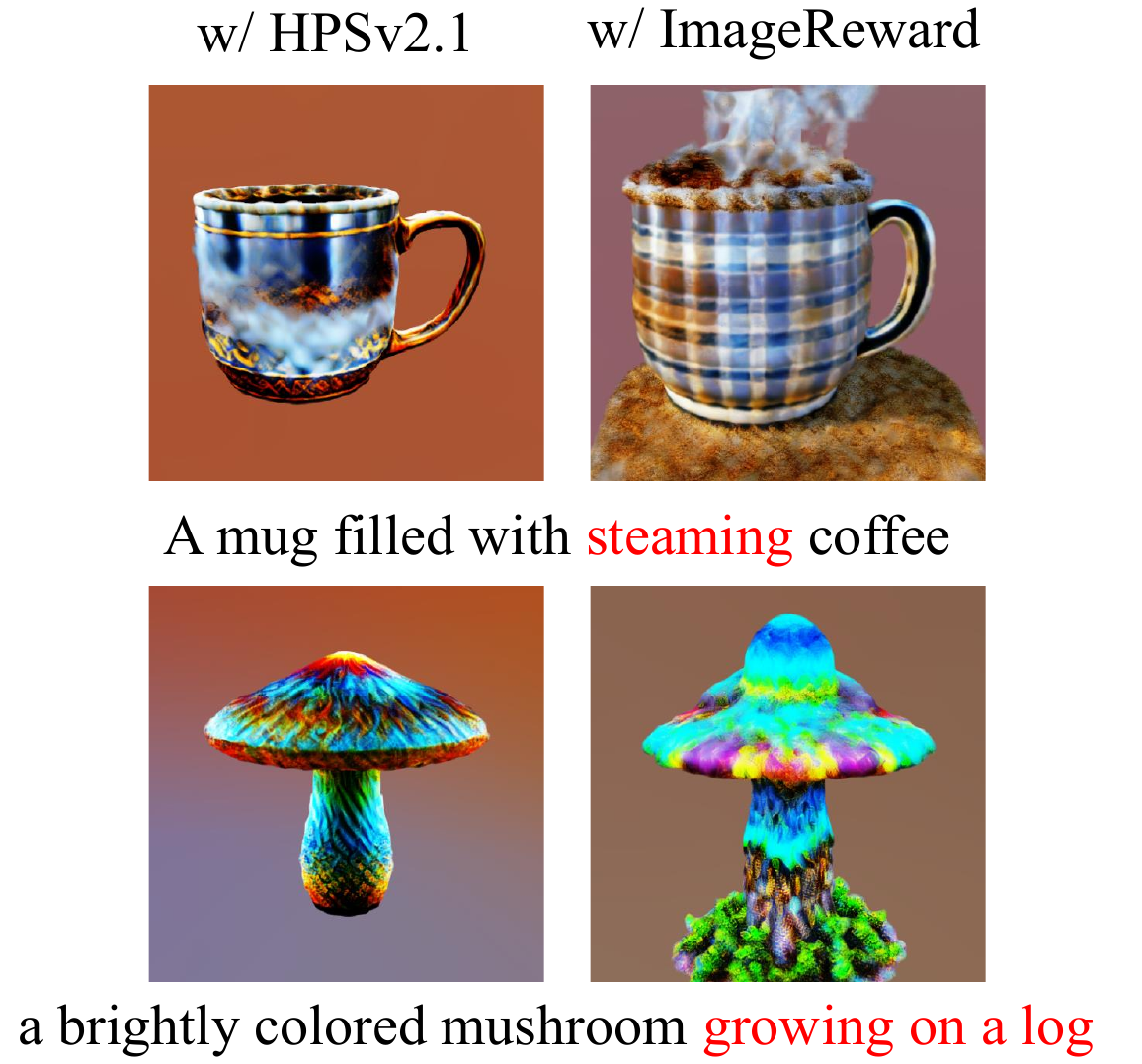}
\caption{Ablation of choosing different reward models. }
\label{ablation_reward}
\end{wrapfigure}

\textbf{Ablations on the preference score.} To justify the compatibility of our preference guidance with PF-ODE, we directly use Eq. \ref{objective} to guide the diffusion process of parameterized images. Following common configurations of DDIM \citep{diffusion3}, we update the latents 50 steps but with a Adam \citep{adam} optimizer. This is based on the fact that image can also be parametrized \citep{dreamsampler, centric}.  Specially, to share the same dynamics with DDIM, we replace one of 
$\boldsymbol{\epsilon}, \, \boldsymbol{\epsilon}'$ in Eq. \ref{win-lose} with the noise $\boldsymbol{\epsilon}_\phi(\boldsymbol{x}_s,y,s)$ predicted at more noisy timestep $s$ as discussed in \citep{sdi}.
Evaluations with Stable Diffusion v1.5 on Parti-Prompt dataset \citep{parti} is presented in Tab. \ref{tab_2d} and direct visual comparison is shown in Fig. \ref{2dvisiual} . Comparing with results of DDIM and optimizing Eq. \ref{objective} without $\delta_{pref}$, introducing the preference guidance will significantly improve reward scores and even outperform the original DDIM. It is a strong evidence to prove that our preference guidance can lead the generation process towards high-reward regions. Besides, comparing the scores of target and unseen rewards, using preference guidance alone will reduce the risk of reward hacking. Pseudo code of this experiment is listed in Appendix \ref{2d_details}.

\begin{table}[t]

    \centering
    {
\caption{Ablations on negative embedding optimization strategy. w/o n represents experiments without negative embedding optimization. $lr=4e^{-4}$ represents experiments with higher learning rate of learnable embedding. $1/5it^{-1}$ and $1/2it^{-1}$ represent performing preference guidance and negative embedding optimization with a interval of 2 and 5 steps respectively.\label{ablation_neg}}
}
    \begin{tabular}{lccccc}
\toprule
\multirow{2}*{\textbf{Experiments}} & {\textbf{Target}} & \multicolumn{3}{c}{\textbf{Unseen Rewards}} & \multirow{2}*{\textbf{Generation Speed}}  \\
\cmidrule(lr){2-2} \cmidrule(lr){3-5} 
 & \textbf{HPSv2.1 $\uparrow$} & \textbf{I.R. $\uparrow$} & \textbf{Pick. $\uparrow$} & \textbf{MPS $\uparrow$}   \\
\midrule
MVDream & 0.20 & -0.63 & 19.22 & 8.39 & \textbf{2.92 it/s}\\
DreamDPO &  0.19& -0.70 & 19.15 & 8.58 & 1.45 it/s\\
w/o $n$ & 0.20 & -0.60 & 19.28 & 8.51 & 1.45 it/s\\
$lr=4e^{-4}$& \textbf{0.24} & -0.49 & 19.85 & 9.98 & 1.19 it/s\\
\midrule
$1/5it^{-1}$ & 0.21 & -0.38  & 19.45 & 9.35 & 2.45 it/s\\
$1/2it^{-1}$ & 0.22 & -0.25 & 19.72 & 9.70 & 1.50 it/s\\
Ours & 0.23 & \textbf{-0.25} & \textbf{19.99} & \textbf{10.12} & 1.19 it/s\\
\bottomrule
\end{tabular}
\end{table}

\textbf{Time Consumption.} Since the preference guidance is formulated into a CFG-style term, it can be calculated within one forward pass of diffusion model. As a consequence, if the reward model also supports batch inference, the only additional computational consumption for each step is to update the negative embedding. Considering that the size of negative embedding is comparatively small, times consumption should be acceptable. Results are in Tab. \ref{ablation_neg}, where we provide two additional settings. A larger interval between adjacent reward signal leads to significant accelerations while our method still outperforms baselines, showing the efficacy of our proposal.

%% file: section/conclusion.tex
\section{Conclusion}
In this paper, we propose Preference Score Distillation, which basically implies that preference optimization in score distillation can be regarded as a type of guidance. To link preference with guidance, we start by deriving preference score guidance under our modified definition of RLHF. By constructing win-lose pair on-the-fly, we achieve effective guidance to the optimization process of score distillation. We also develop an adaptive update strategy to unleash the potential of preference score guidance, noticing the parameters of pretrained diffusion models are not updated. During the entire procedure, we successfully avoid additional training of reward models using 3d data, and demonstrate significant improvements on generating highly photorealistic, human preference aligned 3d objects.
\newpage

%% file: section/appendix.tex
\appendix
\section*{Appendix}
\section{Table of Notations}

We use consistent notations across the main paper and supplementary materials, which are listed in Tab. \ref{notations}

\begin{table}[h]
\centering
\caption{List of notations and their descriptions.}
\label{notations}
\begin{tabular}{ll}
\toprule
\textbf{Notation} & \textbf{Description} \\
\midrule
$\boldsymbol{x}_{t}$ & State variable at timestep $t$ \\
$\alpha_t,\sigma_t$ & Time-dependent coefficients \\
$\epsilon, \epsilon'$ & Noise sampled from Gaussian distribution \\
$\boldsymbol{\epsilon}_\phi(\cdot)$ & Diffusion models parameterized by $\phi$ \\
$y$ & Conditioning text prompt embedding \\
$n$ & Negative prompt embedding \\
$\gamma$ & Scaling factor in CFG \\
$\beta$ & Scaling factor in RLHF \\
$r,\, \Delta r_t$ & Reward, $\Delta r_t=r\left(y, \boldsymbol{x}_t^{\text{w}} \right) - r\left(y, \boldsymbol{x}_t^{\text{l}} \right)$ \\
$g_{\theta}(\cdot)$ & Differential 3D representation parameterized by $\theta$  \\
$q_{\theta}(\cdot)$ & Marginal distribution determined by parameter $\theta$  \\
$\boldsymbol{c}$ & Rendering camera view  \\
$\mathbb{D}_{\text{KL}}[\cdot]$ & KL-divergence \\
$\mathcal{S}_{\text{pref}}$ & Binary variable represents human preference \\
$\boldsymbol{x}_c$ & Non-noisy sample \\
$\hat{\boldsymbol{x}}_0$ & One-step prediction sample based on Tweedie's formula $\hat{\boldsymbol{x}}_{0} = \frac{\boldsymbol{x}_t - \sigma_t \boldsymbol{\epsilon}_\phi(\boldsymbol{x}_t, y, t)}{\alpha_{t}}$\\
$\boldsymbol{x}_t^w,\boldsymbol{x}_t^l$ & Win-lose sample pair at timestep $t$\\
$\beta_r$ & Adaptive scaling factor $\beta_{r}=\gamma\frac{||\delta_{cls}||_2}{||\delta_{pref}||_2}\cdot\sigma\left(-\Delta r_t\right)$ \\
\bottomrule
\end{tabular}
\end{table}

\section{Related Works}

\subsection{Diffusion Models}
 Diffusion model \citep{diffusion1,score,diffusion3} is a class of generative models that learn to reverse a diffusion process which gradually adding noise to a data distribution. With the introduction of latent space \citep{sd}, DM has proven its scaling ability to generate high-dimensional, perceptual data such as image \cite{sd3, sdt} and videos \cite{svd}. Among the dense theory behind, interrelating the diffusion process into a \textit{Probability Flow Ordinary Differential Equation} (PF-ODE) or \textit{Stochastic Differential Equation} (SDE) \citep{score} is an essential step towards a unified framework in pursuit of mathematically guaranteed high-quality generation. In representation, \textit{Stable Diffusion} \citep{sd} has developed a stack of variations that significantly promote the quality and efficiency of photorealistic generation. 

\subsection{Preference Alignment of Diffusion Models}
Although web-scale pretraining enables promising performance to diffusion models, they may deviate users' preference. To overcome the issue, a line of works focuses on using \textit{Reinforcement Learning from Human Feedback} (RLHF) for fine-tuning. For example,  \cite{hps} trained a human preference classifier to employ supervised fine-tuning with preference-based reward model. \cite{imagereward} also trained reward model but they directly maximize reward through backpropagation of differentiable scores. DPOK \citep{dpok} and DDPO \citep{ddpo} apply policy gradient to the sampling process of diffusion models modeled by Markov decision process. However, the major drawback reward over-optimization \citep{test, hacking} which may harm generation quality and diversity. Then, impacted by the success of Direct Preference Optimization (DPO) \citep{dpo} in large language models (LLMs), Diffusion-DPO \citep{diffusiondpo, d3po} and D3PO adopt denoising steps of diffusion models to perform DPO. Furthermore, several recent works adjust the DPO objective with the essential characteristics of diffusion models. DSPO \citep{dspo} modifies the time-dependent score matching objectives to distill the score function of human-preferred image distributions into pretrained score functions. InPO \cite{inpo} and SmPO-Diffusion \citep{smooth} applies DDIM inversion technique in response to the challenge of the implicit rewards allocation in the long-chain denoising process. Diffusion-NPO \citep{npo} and Self-NPO \citep{self} address the efficacy of classifier-free guidance (CFG) and train a model attuned to negative preferences to bias away from the negative-conditional inputs.

Another line of work focuses on test-time alignment. DOODL \citep{doodl} directly  optimizes the diffusion latents at each timestep through the backpropagation of the reward model. \cite{demon} seeks to synthesize theocratically optimal noises based on multiple evaluations. DNO \cite{dno} turns to optimize the initial noise and develops a zeroth-order optimization algorithm for non-differential rewards. Unfortunately, all these improvements come with significant overload. Due to the special property of score distillation, our work provides a new perspective for preference alignment.

\subsection{Text-to-3D Generation}
\label{background}
The area this work belongs is distilling 2D into 3D. Based on the success of text-to-image diffusion models, Score Distillation Sampling (SDS) \citep{sds} was first proposed to distill a pretrained diffusion model $\boldsymbol{\epsilon}_\phi$ to generate 3D assets. Instead of guiding the optimization of differentiable 3D shape representation with PF-ODE (SDE) in the main paper, SDS aims to find modes of the score functions across all noise levels. Following the notations in the main paper, it can be expressed as updating the 3D representation with

\begin{equation}
\begin{aligned}
    \nabla_{\theta}\mathcal{L}_{\text{SDS}}(\theta) &= \nabla_{\theta}\mathbb{E}_{t,\boldsymbol{c},\boldsymbol{\epsilon}}\left[\sigma_{t}/\alpha_{t}w(t)\text{KL}\left(q(\mathbf{z}_{t}|g_{\theta}(\boldsymbol{c}); y, t) \| p_{\phi}(\mathbf{z}_{t}; y, t)\right)\right] \\
    &=  \mathbb{E}_{t,\boldsymbol{c},\boldsymbol{\epsilon}} \left[ w(t) (\boldsymbol{\epsilon}_\phi(\boldsymbol{x}_t,y,t) - \boldsymbol{\epsilon}) \frac{\partial g_{\theta}(\boldsymbol{c})}{\partial \theta} \right].
\end{aligned}
\label{sds}
\end{equation}

Many followup works devote to improve the behavior of SDS from many aspects, including improving view-consistency (multi-face Janus problem) via introducing stronger diffusion priors \citep{jointdreamer, richdreamer}, boosting geometry quality through coarse-to-fine training \citep{fantasia,magic3d,DreamMesh}, and accelerating generation process by applying more advanced 3D representation \citep{tet,3dgs1} or parallelization \citep{DreamPropeller}. Moreover, despite advancing technically, several recent works build connections
between score distillation and PF-ODE through deterministic noising \citep{consistent3d,cfd,sdi,fsd} or consistency training \citep{connect3d, segment}. Comparing with mode-seeking objective in Eq. \ref{sds}, these methods yield significant improvement on fidelity and diversity. In this perspective, our work aims to seek minimum conflicts to these advancements while aligning with preference, but for other relevant existing works, they basically only consider the derivation from Eq. \ref{sds}. DreamReward \citep{dreamreward,dreamrewardx} fine-tunes a reward model from ImageReward \citep{imagereward} to approximate the shift towards an ideal noise prediction network aligned with human preference. DreamAlign \citep{dreamalign} trains a reference noise prediction network using proposed D-3DPO algorithm and derive a preference contrastive loss. DreamDPO \citep{dreamdpo} completely eliminate the use of reference model, but we find it unstable and will do harm to fidelity. Concurrent work DreamCS \citep{dreamcs} address the geometry alignment issue through training a new reward model, but it is still under the framework of DreamReward. 

Despite generating 3D assets through distilling 2D diffusion priors, other prevailing paradigms such as leveraging large reconstruction models \citep{lrm,lgm} or native 3D generation \citep{trellis} also shows attractive performance especially on speed and geometry. However, due to the lack of high quality data and expensive computational demands, preference alignment have barely been explored in these fields.

\section{Derivation Details of Preference Score Guidance}
\label{derivation}

\subsection{From Eq.\ref{preference} to Eq. \ref{objective}}
To get the expansion in Eq. \ref{objective}, the core is to handle  term $\nabla_{x_t}\left(r\left(y, \boldsymbol{x}_t \right)-r\left(y, \boldsymbol{x}_t' \right)\right)$. Our solution is to use the implicit reward rewritten by the global optimal solution $p_{\phi}^*$ in Eq. \ref{reward}, which is 

\begin{equation}
    \nabla_{\boldsymbol{x}_t}(\Delta r_t)=\nabla_{\boldsymbol{x}_t}\left(r\left(y, \boldsymbol{x}_t^w \right)-r\left(y, \boldsymbol{x}_t^l \right)\right)=\beta\nabla_{\boldsymbol{x}_t} ( \log \frac{p^*_{\phi}(\boldsymbol{x}_{t}^w|y)}{q_{\theta}(\boldsymbol{x}_{t}^w|\boldsymbol{x}_{c}=g_{\theta}(\boldsymbol{c}))} -\log \frac{p^*_{\phi}(\boldsymbol{x}_{t}^l|y)}{q_{\theta}(\boldsymbol{x}_{t}^l|\boldsymbol{x}_{c}=g_{\theta}(\boldsymbol{c}))}).
    \label{grad}
\end{equation}
 The tricky part of the above expression is the terms related to $q_{\theta}$. Rigorously, since we construct the noising samples by adding noise, they can be expressed with terms related to noise. However, as mentioned in many previous works \citep{vsd,asd,cfd}, it is problematic because the rendering images especially at the early timesteps are out of the distribution of natural images. In our case, it will result in an inaccurate reference distribution. Therefore, following the practice of \citep{dreamdpo}, we neglect them in our final ODE. Eventually, if we approximate $p_{\phi}^*$ with $p_{\phi}$ for each step,  then put everything together in Eq. \ref{preference}, we have
\begin{equation}
    \begin{aligned}
        \mathrm{d}(\frac{\boldsymbol{x}_t}{\alpha_t}) &=\mathrm{d}(\frac{\sigma_t}{\alpha_t})\cdot(\boldsymbol{\epsilon}_\phi(\boldsymbol{x}_t,y,t)-\sigma_t(1-\sigma( \Delta r_t )\nabla_{\boldsymbol{x}_t}(\Delta r_t ))\\
        &\approx\mathrm{d}(\frac{\sigma_t}{\alpha_t})(\boldsymbol{\epsilon}_\phi(\boldsymbol{x}_t,y,t)-\beta\sigma_t \sigma( -\Delta r_t)(-\frac{\boldsymbol{\epsilon}_\phi(\boldsymbol{x}_t^{\text{w}},y,t)}{\sigma_t}-(-\frac{\boldsymbol{\epsilon}_\phi(\boldsymbol{x}_t^{\text{l}},y,t)}{\sigma_t})))\\
        &=\underbrace{\mathrm{d}(\frac{\sigma_t}{\alpha_t})}_{-lr}\cdot\underbrace{(\boldsymbol{\epsilon}_\phi(\boldsymbol{x}_t,y,t)+\beta\sigma(- \Delta r_t)(\boldsymbol{\epsilon}_\phi(\boldsymbol{x}_t^{\text{w}} ,y,t)-\boldsymbol{\epsilon}_\phi(\boldsymbol{x}_t^{\text{l}} ,y,t)))}_{\nabla\mathcal{L}}.
    \end{aligned}
\end{equation}

Mechanistically speaking,  our preference score guidance perfectly align with DPO \citep{dpo} where gradients are in the direction of increasing the likelihood of preferred samples, but more importantly, we successfully build connection between preference and guidance so that our PSD is performed without additional training. Also, thanks to the flexibility of score distillation, we are able to get access to different noise at each denoising step, which enables to implement the crucial win-lose score prediction in our preference score guidance.

\textbf{Change-of-variable technique.} Considering the chain rule in Eq. \ref{objective}, it is intuitive to replace $\boldsymbol{x}_t$ with $g_{\theta}(\boldsymbol{c})$ and directly use $\nabla\mathcal{L}$ to update 3D representation. However, as discussed in many previous works \citep{sdi, cfd, consistent3d}, since the rendering images are non-noisy, it will suffer form out-of-distribution issue. Therefore, they develop several noising techniques including fixed noise,  DDIM inversion and integral noise. Discussing these noising techniques is out of the scope of this paper, but it is necessary to point out the change-of-variable techniques in our PSD is applied to Eq. \ref{insight} and results in affecting the unconditional term so that our objective Eq. \ref{objective} can be formulated in a general form without the need of further discussion. Meanwhile, based on the above discussion, a great advantage of our PSD is it can combine with these noising techniques seamlessly which has been shown previously in the ablation of preference score for 2D image generation. For 3D generation, we test with integral noise \citep{kwak2024geometry,integral} modified by CFD \citep{cfd} in the 2-stage NeRF and 3-stage DMTet generation pipeline where wining noise (one of $\boldsymbol{\epsilon},\boldsymbol{\epsilon}'$) is replaced with

\begin{equation}
G(\mathbf{p}) = \frac{1}{\sqrt{|\Omega_{\mathbf{p}}|}} \sum_{A_i \in \Omega_{\mathbf{p}}} W(A_i).
\end{equation}
For the query pixel $\mathbf{p}$, $\Omega_{\mathbf{p}} = \mathcal{T}^{-1}(ctw_c(\mathbf{p}))$ is covered by after $\mathbf{p}$ is warped to a pre-set constant reference noise. For more details of this algorithm, interested readers may refer to the original paper of CFD \citep{cfd}, what's important is our experiments shown in Tab. \ref{complex} proves our compatibility with these nosing techniques. 

\subsection{Justification of Definition in Eq. \ref{rlhf_ours}}
\label{justification}

Our derivation heavily rely on the definition in Eq. \ref{rlhf_ours}. Therefore, discussing its validity is of first priority. We justify it by showing two representative works can also be related to this definition. 

\textbf{DreamReward \citep{dreamreward}:}
\begin{equation}
    \nabla_{\theta} \mathcal{L}_{\text{DreamReward}}(\theta) = \mathbb{E}_{t,\boldsymbol{c},\boldsymbol{\epsilon}} \left[ \omega(t) \left( \epsilon_{\phi}(\boldsymbol{x}_{t}, y, t) - \lambda_{r} \frac{\partial r_{Reward3D}(y, \hat{\boldsymbol{x}}_{0}, \boldsymbol{c})}{\partial g_{\theta}(\boldsymbol{c}))} - \epsilon \right) \frac{\partial g_{\theta}(\boldsymbol{c})}{\partial \theta}  \right].
    \label{dreamreward}
\end{equation}

DreamReward fine-tunes a view-dependent reward model Reward3D which approximates the difference between optimal diffusion model $\boldsymbol{\epsilon}_\phi^*$ and current diffusion model $\boldsymbol{\epsilon}_\phi$ with
\begin{equation}
    \boldsymbol{\epsilon}_\phi^*(\boldsymbol{x}_t,y,t) )-\boldsymbol{\epsilon}_\phi(\boldsymbol{x}_t,y,t) )=\frac{\partial r_{Reward3D}(\boldsymbol{x}_t,y,t)}{\partial g_{\theta}(\boldsymbol{c})}=-\frac{\partial r(\boldsymbol{x}_t,y,t)}{\partial g_{\theta}(\boldsymbol{c})}.
\end{equation}

Note that the purpose of our definition in Eq. \ref{rlhf_ours} is to formulate the preference guidance through implicit reward based on the connection between score distillation and diffusion process built by recent works \citep{kwak2024geometry, cfd, sdi}. However, DreamReward follows the framework of SDS, so in order to directly formulate a objective to optimize 3D representation, we make a revision to Eq. \ref{rlhf_ours}:

\begin{equation}
    \max_{\theta} \mathbb{E}_{t,\boldsymbol{c},\boldsymbol{\epsilon}} [r(y, \boldsymbol{x}_{t})] - \beta \mathbb{D}_{\text{KL}} [p_{\phi}(\boldsymbol{x}_{t}|y)||q_{\theta}(\boldsymbol{x}_{t}|\boldsymbol{x}_{0}=g_{\theta}(\boldsymbol{c})) ],
\end{equation}

Difference is the expectation we take is with respect to the added noise and parameters we tune is 3D representation $\theta$. This modification enables the objective to seek modes of the score functions, which fits in the framework of original SDS. After that, calculating its gradients directly (apply Sticking-the-Landing \citep{strike} type gradient suggested in \citep{sds} and set $\lambda_r=1/\beta$) leads to

\begin{equation}
    \begin{aligned}
     \nabla_{\theta}&\mathbb{E}_{t} [r(y, \boldsymbol{x}_{t})] - \beta\nabla_{\theta} \mathbb{D}_{\text{KL}} [p_{\phi}(\boldsymbol{x}_{t}|y)||q_{\theta}(\boldsymbol{x}_{t}|\boldsymbol{x}_{0}=g_{\theta}(\boldsymbol{c})) ] \\
     &=\mathbb{E}_{t} [\frac{\partial r(y, \boldsymbol{x}_{t})}{\partial g_{\theta}(\boldsymbol{c})}\frac{\partial g_{\theta}(\boldsymbol{c})}{\partial \theta}]- \beta \mathbb{E}_{t,\boldsymbol{c}} \left[ w(t) (\boldsymbol{\epsilon}-\boldsymbol{\epsilon}_\phi(\boldsymbol{x}_t,y,t) ) \frac{\partial g_{\theta}(\boldsymbol{c})}{\partial \theta} \right] \\
     &=\mathbb{E}_{t,\boldsymbol{c}} \left[ w(t) (\boldsymbol{\epsilon}_\phi(\boldsymbol{x}_t,y,t)+\lambda_r\frac{\partial r(y, \boldsymbol{x}_{t})}{\partial g_{\theta}(\boldsymbol{c})} -\boldsymbol{\epsilon}) \frac{\partial g_{\theta}(\boldsymbol{c})}{\partial \theta} \right]\\
     &=\mathbb{E}_{t,\boldsymbol{c}} \left[ w(t) (\boldsymbol{\epsilon}_\phi(\boldsymbol{x}_t,y,t)-\lambda_r\frac{\partial r_{Reward3D}(\boldsymbol{x}_t,y,t)}{\partial g_{\theta}(\boldsymbol{c})} -\boldsymbol{\epsilon}) \frac{\partial g_{\theta}(\boldsymbol{c})}{\partial \theta} \right].
     \label{dreamreward_ours}
    \end{aligned}
\end{equation}

\textbf{DreamDPO \citep{dreamdpo}:}
\begin{equation}
    \nabla_{\theta} \mathcal{L}_{\text{DreamDPO}}(\theta) = \mathbb{E}_{t,\boldsymbol{c}} \left[ w(t) \left( (\epsilon_{\phi}(\boldsymbol{x}_{t}^{\text{w}}, y, t) - \epsilon^{\text{w}}) - (\epsilon_{\phi}(\boldsymbol{x}_{t}^{\text{l}}, y, t) - \epsilon^{\text{l}}) \right) \frac{\partial g_{\theta}(\boldsymbol{c})}{\partial \theta} \right]
\label{dreamdpo}
\end{equation}

We ignore the hyperparameter $\tau$ in \citep{dreamdpo} since it is used to resolve numerical instability. Similar to the idea of Diffusion-DPO \citep{diffusiondpo}, $r\left(y, \boldsymbol{x}_t \right)$ can also be parameterized by a neural network $\Phi$ and estimated via maximum likelihood training for binary classification 

\begin{equation}
    \mathcal{L}_{\text{BT}}(\Phi) = -\mathbb{E}_{y, \boldsymbol{x}_{t}^{\text{win}}, \boldsymbol{x}_{t}^{\text{lose}}} \left[ \log \sigma \left( r_{\Phi}(y, \boldsymbol{x}_{t}^{\text{win}}) - r_{\Phi}(y, \boldsymbol{x}_{t}^{\text{lose}}) \right) \right],
\end{equation}
and plug the implicit reward rewritten by the global optimal solution $p_{\phi}^*$ in Eq. \ref{reward}, so that we get

\begin{equation}
\begin{aligned}
    \nabla_{\theta}\mathcal{L}_{\text{BT}}(\theta)&=-\mathbb{E}_{y, \boldsymbol{x}_{t}^{\text{w}}, \boldsymbol{x}_{t}^{\text{l}}}[\sigma\left( r\left(y, \boldsymbol{x}_t^{\text{l}} \right) - r\left(y, \boldsymbol{x}_t^{\text{w}} \right)\right) \\
    &\cdot\beta\nabla_{\theta} ( \log \frac{p^*_{\phi}(\boldsymbol{x}_{t}^w|y)}{q_{\theta}(\boldsymbol{x}_{t}^w|\boldsymbol{x}_{c}=g_{\theta}(\boldsymbol{c}))} -\log \frac{p^*_{\phi}(\boldsymbol{x}_{t}^l|y)}{q_{\theta}(\boldsymbol{x}_{t}^l|\boldsymbol{x}_{c}=g_{\theta}(\boldsymbol{c}))})]\\
    &=-\mathbb{E}_{y, \boldsymbol{x}_{t}^{\text{w}}, \boldsymbol{x}_{t}^{\text{l}}}[\sigma( r(y, \boldsymbol{x}_t^{\text{l}} ) - r(y, \boldsymbol{x}_t^{\text{w}} ))(-\frac{\boldsymbol{\epsilon}_\phi(\boldsymbol{x}_t^{\text{w}},y,t)}{\sigma_t}-(-\frac{\boldsymbol{\epsilon}_\phi(\boldsymbol{x}_t^{\text{l}},y,t)}{\sigma_t}))]\\
    &=\mathbb{E}_{y, \boldsymbol{x}_{t}^{\text{w}}, \boldsymbol{x}_{t}^{\text{l}}}[\sigma( r(y, \boldsymbol{x}_t^{\text{l}} ) - r(y, \boldsymbol{x}_t^{\text{w}} ))(\boldsymbol{\epsilon}_\phi(\boldsymbol{x}_t^{\text{w}},y,t)-\boldsymbol{\epsilon}_\phi(\boldsymbol{x}_t^{\text{l}},y,t))].
\end{aligned}
\end{equation}
Then, as suggested by \citep{sds}, if we use Sticking-the-Landing type gradient to control variate by keeping the noise added in Eq. \ref{win-lose}, we have an objective similar to Eq. \ref{dreamdpo}. The only difference is the weight $\sigma( r(y, \boldsymbol{x}_t^{\text{l}} ) - r(y, \boldsymbol{x}_t^{\text{w}} ))$.

\section{Theoretical Analysis of Negative Embedding Update Strategy}
\label{appendix_proof}

In the main paper, we have claimed the motivation of our negative embedding update strategy is to approach the optimal score function derived directly from our specific RLHF formulation Eq. \ref{rlhf_ours}. In this section, we provide a mathematical analysis of it. Although the following proof is not completely rigorous, it can still demonstrate the efficacy and self-consistency of our approach.

\begin{proposition}
\label{prop:neg_opt}
Let $p_\phi(\boldsymbol{x}_t|y)$ be the distribution of a frozen pretrained diffusion model and $r(\boldsymbol{x})$ be a differentiable reward function. Under our definition Eq. \ref{rlhf_ours}, optimizing the negative embedding $n$ to maximize the reward $r(\hat{\boldsymbol{x}}_0)$ is approximately equivalent to minimizing the Fisher Divergence between the effective score function induced by CFG and the optimal score function of the reward-tilted distribution.
\end{proposition}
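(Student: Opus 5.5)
The plan is to first write down the optimal solution of Eq. \ref{rlhf_ours} explicitly and read off its score. As in the derivation of Eq. \ref{reward}, the maximizer is the reward-tilted distribution
\begin{equation*}
p^*_\phi(\boldsymbol{x}_t|y)=\frac{1}{Z}\,q_\theta(\boldsymbol{x}_t|\boldsymbol{x}_c)\exp\bigl(r(y,\boldsymbol{x}_t)/\beta\bigr).
\end{equation*}
Taking $\nabla_{\boldsymbol{x}_t}\log$ of both sides eliminates $Z$. In the $\boldsymbol{\epsilon}$-parameterization $\boldsymbol{\epsilon}=-\sigma_t\nabla\log p$, this gives the optimal score
\begin{equation*}
\boldsymbol{\epsilon}^*(\boldsymbol{x}_t)=\boldsymbol{\epsilon}_{\text{ref}}(\boldsymbol{x}_t)-\frac{\sigma_t}{\beta}\nabla_{\boldsymbol{x}_t}r(y,\boldsymbol{x}_t),
\end{equation*}
where $\boldsymbol{\epsilon}_{\text{ref}}$ is the reference score. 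Following the same approximation used in Appendix \ref{derivation}, I would replace the reference score by the frozen conditional prediction, so that $\boldsymbol{\epsilon}^*\approx\boldsymbol{\epsilon}_\phi(\boldsymbol{x}_t,y,t)-\frac{\sigma_t}{\beta}\nabla r$. The reward at noisy $\boldsymbol{x}_t$ is evaluated through $r(y,\hat{\boldsymbol{x}}_0)$, as the paper does in practice.

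Next I would write the effective CFG score as a function of the only trainable quantity, $n$:
\begin{equation*}
\tilde{\boldsymbol{\epsilon}}_n=\boldsymbol{\epsilon}_\phi(\boldsymbol{x}_t,n,t)+\gamma\bigl(\boldsymbol{\epsilon}_\phi(\boldsymbol{x}_t,y,t)-\boldsymbol{\epsilon}_\phi(\boldsymbol{x}_t,n,t)\bigr).
\end{equation*}
The target is the Fisher divergence $\mathcal{F}(n)=\mathbb{E}_{\boldsymbol{x}_t}\|\tilde{\boldsymbol{\epsilon}}_n-\boldsymbol{\epsilon}^*\|_2^2/\sigma_t^2$, and I would compare its gradient with that of $\mathcal{L}_{\text{Emb}}(n)$ from Eq. \ref{embedding}.

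For the reward, Tweedie's formula gives $\hat{\boldsymbol{x}}_0=(\boldsymbol{x}_t-\sigma_t\tilde{\boldsymbol{\epsilon}}_n)/\alpha_t$. The chain rule then yields
\begin{equation*}
\nabla_n r(\hat{\boldsymbol{x}}_0)=-\frac{\sigma_t}{\alpha_t}\Bigl(\frac{\partial\tilde{\boldsymbol{\epsilon}}_n}{\partial n}\Bigr)^{\!\top}\nabla_{\hat{\boldsymbol{x}}_0}r.
\end{equation*}
For the Fisher divergence,
\begin{equation*}
\nabla_n\mathcal{F}=\frac{2}{\sigma_t^2}\Bigl(\frac{\partial\tilde{\boldsymbol{\epsilon}}_n}{\partial n}\Bigr)^{\!\top}\bigl(\tilde{\boldsymbol{\epsilon}}_n-\boldsymbol{\epsilon}^*\bigr).
\end{equation*}
The residual $\tilde{\boldsymbol{\epsilon}}_n-\boldsymbol{\epsilon}^*$ splits into two pieces: the guidance mismatch $\tilde{\boldsymbol{\epsilon}}_n-\boldsymbol{\epsilon}_\phi(\boldsymbol{x}_t,y,t)$, and the reward term $\frac{\sigma_t}{\beta}\nabla r$.

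The key step is to argue that the reward term dominates, or equivalently that the mismatch is treated as a zeroth-order term. This can be justified near initialization with hand-crafted negatives, or by absorbing the mismatch into the regularizers of Appendix \ref{regularizer}. I would also relate $\nabla_{\boldsymbol{x}_t}r$ to $\nabla_{\hat{\boldsymbol{x}}_0}r$ by $\partial\hat{\boldsymbol{x}}_0/\partial\boldsymbol{x}_t\approx\frac{1}{\alpha_t}I$, i.e.\ neglecting the Jacobian of $\boldsymbol{\epsilon}_\phi$, the standard SDS-style simplification. Under these approximations,
\begin{equation*}
\nabla_n\mathcal{F}\approx\frac{2}{\beta\sigma_t}\Bigl(\frac{\partial\tilde{\boldsymbol{\epsilon}}_n}{\partial n}\Bigr)^{\!\top}\nabla r=-\frac{2\alpha_t}{\beta\sigma_t^2}\nabla_n r(\hat{\boldsymbol{x}}_0).
\end{equation*}
So gradient ascent on $\mathcal{L}_{\text{Emb}}$ is gradient descent on $\mathcal{F}$ up to the positive, $t$-dependent factor $2\alpha_t/(\beta\sigma_t^2)$, which can be absorbed into the learning rate. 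Taking the expectation over $\boldsymbol{c}$ and $t$ then matches Eq. \ref{embedding}.

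The main obstacle is this dominance/linearization step. The cross term $(\partial\tilde{\boldsymbol{\epsilon}}_n/\partial n)^\top(\tilde{\boldsymbol{\epsilon}}_n-\boldsymbol{\epsilon}_\phi(\boldsymbol{x}_t,y,t))$ does not vanish in general; for $\gamma>1$ it is exactly the CFG push itself. I would first try to handle it by treating the CFG-shifted prediction as the reference, consistent with the paper applying CFG inside $\delta_{pref}$, so that only the reward-induced discrepancy remains. Failing that, I would state the equivalence as holding to first order in $1/\beta$. That is the sense of "approximately" in Proposition \ref{prop:neg_opt}.
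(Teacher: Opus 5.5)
Your argument is sound in spirit, but it follows a different route from the paper's. The paper never differentiates a Fisher divergence. It writes $\boldsymbol{\epsilon}^*\approx\boldsymbol{\epsilon}_\phi(\boldsymbol{x}_t,y,t)-\sigma_t\beta\nabla_{\boldsymbol{x}_t}r$ and CFG as $\boldsymbol{\epsilon}_\phi(y)+\gamma(\boldsymbol{\epsilon}_\phi(y)-\boldsymbol{\epsilon}_\phi(n))$, and equates the two to get the zero-divergence condition $\boldsymbol{\epsilon}_\phi(\boldsymbol{x}_t,n)\approx\boldsymbol{\epsilon}_\phi(\boldsymbol{x}_t,y)+\frac{\sigma_t\beta}{\gamma}\nabla_{\boldsymbol{x}_t}r$. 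It then uses $\partial\hat{\boldsymbol{x}}_0/\partial\tilde{\boldsymbol{\epsilon}}=-\frac{\sigma_t}{\alpha_t}\mathbf{I}$ and $\partial\tilde{\boldsymbol{\epsilon}}/\partial\boldsymbol{\epsilon}_\phi(n)=-\gamma\mathbf{I}$ to show that $\nabla_n r(\hat{\boldsymbol{x}}_0)$ carries the positive coefficient $\gamma\sigma_t/\alpha_t$. Hence reward ascent moves $\boldsymbol{\epsilon}_\phi(n)$ along $+\nabla r$, i.e.\ toward that condition. That is only a sign-consistency check in output space.

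Your gradient-matching version is stronger in two ways. First, the same Jacobian $\partial\tilde{\boldsymbol{\epsilon}}_n/\partial n$ appears in both $\nabla_n\mathcal{F}$ and $\nabla_n r(\hat{\boldsymbol{x}}_0)$, so you get proportionality of the two gradients in $n$-space with no sign bookkeeping on the CFG coefficients. Second, you make visible the cross term $(\partial\tilde{\boldsymbol{\epsilon}}_n/\partial n)^{\top}(\tilde{\boldsymbol{\epsilon}}_n-\boldsymbol{\epsilon}_\phi(\boldsymbol{x}_t,y,t))$, which the paper's argument passes over. Actual Fisher descent would also pull $\boldsymbol{\epsilon}_\phi(n)$ back toward $\boldsymbol{\epsilon}_\phi(y)$, shrinking the CFG term, and reward ascent does not do this.

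Of your proposed ways of disposing of that cross term, only one works.
\begin{itemize}
\item Near initialization with hand-crafted negatives, $\boldsymbol{\epsilon}_\phi(n)$ differs from $\boldsymbol{\epsilon}_\phi(y)$, and the cross term is the $(\gamma-1)$-scaled CFG push. At the large guidance scales used in score distillation this is typically the dominant signal, not a negligible one.
\item Appendix \ref{regularizer} contains no regularizer that could absorb it. The paper states it adds none; that section only gives learning-rate schedules.
\item An expansion to first order in $1/\beta$ cannot remove it, since it is the zeroth-order part.
\end{itemize}
What does close the step is your reinterpretation: take the reference to be the CFG output at the current iterate, and freeze the whole target $\boldsymbol{\epsilon}^*$ (reference and $\nabla r$) under a stop-gradient. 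Then the residual at the current $n$ is exactly $\frac{\sigma_t}{\beta}\nabla r$, and your proportionality holds exactly at that point. This is in the spirit of the dynamic reference in Eq.~\ref{rlhf_ours} and of the paper's per-step replacement of $p^*_\phi$ by $p_\phi$. Without the freeze, the divergence collapses to $\beta^{-2}\|\nabla r\|_2^2$, and minimizing that is not reward ascent.

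Two harmless slips:
\begin{itemize}
\item With $\partial\hat{\boldsymbol{x}}_0/\partial\boldsymbol{x}_t\approx\alpha_t^{-1}\mathbf{I}$, the constant is $2/(\beta\sigma_t^2)$, not $2\alpha_t/(\beta\sigma_t^2)$.
\item Your tilt $\exp(r/\beta)$ is the one consistent with Eq.~\ref{reward}, whereas the paper's proof writes $\exp(\beta r)$.
\end{itemize}
Both affect only positive constants.
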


\begin{proof}
\textbf{1. The Optimal Score Function.}
Following standard RLHF in diffusion models, the optimal human-aligned distribution $p^*(\boldsymbol{x}|y)$ as the pretrained distribution tilted by the reward function with inverse temperature $\beta$ is
\begin{equation}
\label{eq:proof_opt_p_star}
    p^*(\boldsymbol{x}_t | y) \propto q_\theta(\boldsymbol{x}_t) \exp\left( \beta r(y, \boldsymbol{x}_t) \right).
\end{equation}
The score function $\nabla_{\boldsymbol{x}_t} \log p^*(\boldsymbol{x}_t | y)$ decomposes into an (unknown) noise and the reward gradient. Similar to the dilemma we have encountered in \ref{grad}, we cannot express term $q_\theta(\boldsymbol{x}_t)$ precisely. However, since we have a pretrained diffusion model, we can approximate it using the noise prediction. Therefore, the optimal noise prediction $\boldsymbol{\epsilon}^*$ is:
\begin{equation}
\label{eq:optimal_eps}
    \begin{aligned}
    \boldsymbol{\epsilon}^*(\boldsymbol{x}_t, y) &= -\sigma_t \nabla_{\boldsymbol{x}_t} \log p^*(\boldsymbol{x}_t | y) \\
    &= -\sigma_t \left( \nabla_{\boldsymbol{x}_t} \log q_{\theta}[(\boldsymbol{x}_t | y) + \beta \nabla_{\boldsymbol{x}_t} r(\boldsymbol{x}_t) \right) \\
    &\approx \boldsymbol{\epsilon}_\phi(\boldsymbol{x}_t, y, t) - \sigma_t \beta \nabla_{\boldsymbol{x}_t} r(\boldsymbol{x}_t).
    \end{aligned}
\end{equation}
Eq. \ref{eq:optimal_eps} indicates that to sample from the optimal distribution, the noise prediction must shift against the direction of the reward gradient.

\textbf{2. The CFG Approximation.}
With a frozen model parameter $\phi$, we employ CFG with a learnable negative embedding $n$. The effective noise output $\tilde{\boldsymbol{\epsilon}}$ is given by:
\begin{equation}
\label{eq:cfg_def}
    \tilde{\boldsymbol{\epsilon}}(\boldsymbol{x}_t, y, n) = \boldsymbol{\epsilon}_\phi(\boldsymbol{x}_t, y) + \gamma \left( \boldsymbol{\epsilon}_\phi(\boldsymbol{x}_t, y) - \boldsymbol{\epsilon}_\phi(\boldsymbol{x}_t, n) \right).
\end{equation}
To minimize the the Fisher Divergence, we have to enforce $\tilde{\boldsymbol{\epsilon}} = \boldsymbol{\epsilon}^*$, so we equate Eq. \ref{eq:cfg_def} and Eq. \ref{eq:optimal_eps}:
\begin{equation}
    \boldsymbol{\epsilon}_\phi(y) + \gamma (\boldsymbol{\epsilon}_\phi(y) - \boldsymbol{\epsilon}_\phi(n)) \approx \boldsymbol{\epsilon}_\phi(y) - \sigma_t \beta \nabla_{\boldsymbol{x}_t} r(\boldsymbol{x}_t).
\end{equation}
Solving for the term involving the negative embedding $\boldsymbol{\epsilon}_\phi(\boldsymbol{x}_t, n)$:
\begin{equation}
    \gamma (\boldsymbol{\epsilon}_\phi(y) - \boldsymbol{\epsilon}_\phi(n)) \approx - \sigma_t \beta \nabla_{\boldsymbol{x}_t} r(\boldsymbol{x}_t)
\end{equation}
\begin{equation}
\label{eq:target_condition}
    \boldsymbol{\epsilon}_\phi(\boldsymbol{x}_t, n) \approx \boldsymbol{\epsilon}_\phi(\boldsymbol{x}_t, y) + \frac{\sigma_t \beta}{\gamma} \nabla_{\boldsymbol{x}_t} r(\boldsymbol{x}_t).
\end{equation}
Eq.~\ref{eq:target_condition} reveals the necessary condition: for the CFG output to match the optimal score, the negative embedding $n$ must induce a noise prediction that aligns with the \textit{positive} direction of the reward gradient relative to the positive prompt.

\textbf{3. Gradient Analysis of Our Optimization.}
We now analyze the gradient direction used in our method (Algorithm 1). We maximize the reward $J(n) = r(\hat{\boldsymbol{x}}_0(n))$ with respect to $n$. Using Tweedie's formula $\hat{\boldsymbol{x}}_0 = (\boldsymbol{x}_t - \sigma_t \tilde{\boldsymbol{\epsilon}})/\alpha_t$, we apply the chain rule:
\begin{equation}
    \nabla_n J(n) = \left( \nabla_{\hat{\boldsymbol{x}}_0} r \right)^\top \frac{\partial \hat{\boldsymbol{x}}_0}{\partial \tilde{\boldsymbol{\epsilon}}} \frac{\partial \tilde{\boldsymbol{\epsilon}}}{\partial \boldsymbol{\epsilon}_\phi(n)} \frac{\partial \boldsymbol{\epsilon}_\phi(n)}{\partial n}.
\end{equation}
Substituting the partial derivatives:
\begin{itemize}
    \item From Tweedie's formula: $\frac{\partial \hat{\boldsymbol{x}}_0}{\partial \tilde{\boldsymbol{\epsilon}}} = -\frac{\sigma_t}{\alpha_t} \mathbf{I}$.
    \item From CFG definition (Eq.~\ref{eq:cfg_def}): $\frac{\partial \tilde{\boldsymbol{\epsilon}}}{\partial \boldsymbol{\epsilon}_\phi(n)} = -\gamma \mathbf{I}$.
\end{itemize}
Substituting these back into the gradient equation:
\begin{equation}
\label{eq:final_grad}
    \begin{aligned}
    \nabla_n J(n) &= \left( \nabla_{\hat{\boldsymbol{x}}_0} r \right)^\top \left( -\frac{\sigma_t}{\alpha_t} \right) \cdot (-\gamma) \cdot \nabla_n \boldsymbol{\epsilon}_\phi(n) \\
    &= \frac{\gamma \sigma_t}{\alpha_t} \left[ \left( \nabla_{\hat{\boldsymbol{x}}_0} r \right)^\top \nabla_n \boldsymbol{\epsilon}_\phi(n) \right].
    \end{aligned}
\end{equation}
Since $\alpha_t, \sigma_t, \gamma > 0$, the coefficient is positive. This implies that performing gradient ascent $n \leftarrow n + \eta \nabla_n J(n)$ updates $n$ such that the output $\boldsymbol{\epsilon}_\phi(\boldsymbol{x}_t, n)$ moves in the direction of $\nabla_{\hat{\boldsymbol{x}}_0} r$.
\end{proof}

\section{Comparisons with Existing Methods}
\label{comparision}

\textbf{Comparison with SDS.} Although our derivation is from the perspective of PF-ODE in Eq. \ref{ode}, as already discussed in many previous paper \cite{sdi,cfd,fsd,dreamsampler} SDS can be treated as a special case. Therefore, Eq. \ref{objective} can cover all these variants. The main difference is we derive an additional preference score guidance while SDS only consists $\delta_{gen}$ and $\delta_{cls}$. Also, in Eq. \ref{grad}, we don't additionally apply a Sticking-the-Landing type gradient since it is only a guidance.

\textbf{Comparison with DreamReward.} In Eq. \ref{dreamreward_ours}, DreamReward can be interpenetrated as the gradient of our defined RLHF objective , which implies it's also introducing additional guidance to the generation process. In the formulation of ODE, it can be not strictly written as (omit time-dependent coefficients as well)

\begin{equation}
\mathrm{d}(\frac{\boldsymbol{x}_t}{\alpha_t}) =\mathrm{d}(\frac{\sigma_t}{\alpha_t})\cdot\left(\boldsymbol{\epsilon}_\phi(\boldsymbol{x}_t,y,t)-\nabla_{\boldsymbol{x}_t}r_{Reward3D}(y, \boldsymbol{x}_t, \boldsymbol{c})\right).
\label{ode_dreamreward}
\end{equation}

Obviously, comparing Eq. \ref{ode_dreamreward} with classifier guidance in \citep{image1}, the reward model provides extra guidance that requires  pixel-wise gradient directly operating on $\boldsymbol{x}_t$. This enforces retraining of the reward model and is defective when a) 3d data is rare, b) intermediate noisy steps. In contrast, our PSD overcomes this issue completely. At each timestep, two terms in preference score guidance are both the output of the pretrained diffusion models, and the negative embedding optimization strategy also avoids directly operating on $\boldsymbol{x}_t$. Consequently, our results presents much less artifacts. In addition, this perspective directly reveal the source of reward hacking in DreamReward.

\textbf{Comparison with DreamDPO.} Apart from the mainstream derivation we present in the main paper, the novelty of our approach can be supported by another intuition given by the connection between CSD \citep{csd} and DreamDPO under our framework. While comparing with DreamReward, we can easily write Eq.\ref{ode_dreamreward}, but for DreamDPO, its connection with denoising trajectory is ambiguous. The breakthrough is from the motivation of CSD. CSD notices SDS is heavily relied on high CFG value, so they investigate and discover that using term $\delta_{cls}$ ($p(y|\boldsymbol{x}_t)$) alone is enough to generate 3D assets. While under our derivation, DreamDPO is actually $p\left(\mathcal{S}_{\text{pref}} \mid \boldsymbol{x}_t, y \right)$, so this explains the mechanism of DreamDPO in another important perspective. 

\begin{figure}[h] 
\centering 
\includegraphics[width=0.9\textwidth]{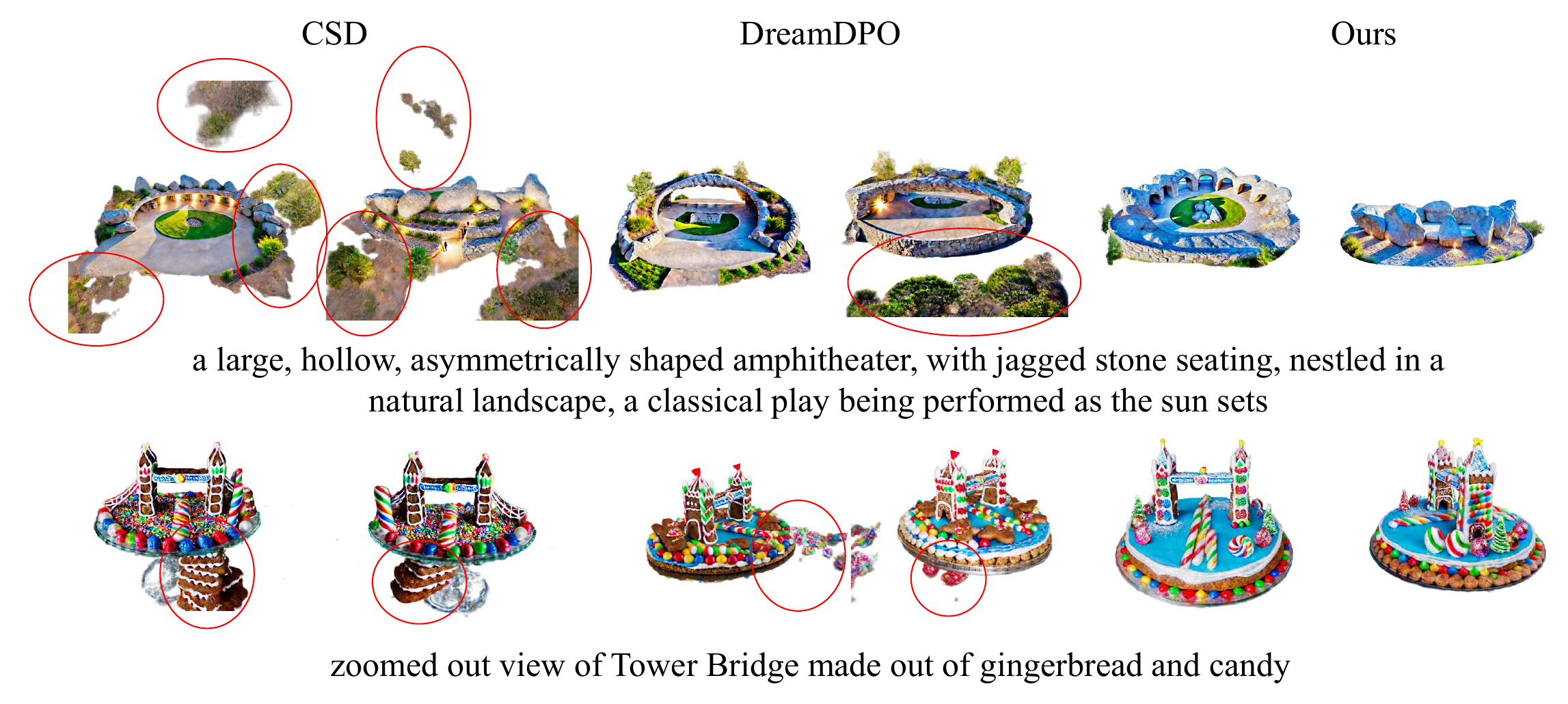} 
\caption{Comaprsion with CSD \citep{csd} and DreamDPO \citep{dreamdpo}. Artifacts are marked with red circles.} 
\label{csd} 
\end{figure}

To verify this claim, we perform a simple comparison. A major drawbacks of CSD is it will also result in artifacts \citep{csd,bridge}. Shown in Fig. \ref{csd}, CSD and DreamDPO produces similar pattern of artifacts, while our PSD behaves normally. Therefore, based on the above analysis, our work is fundamentally different from DreamDPO even not considering the negative embedding optimization strategy.

\textbf{Comparison with other works applying prompt embedding optimization.} There has been several previous also applying prompt embedding update but for completely different reasons. DiverseDream \citep{diverse} employ HiPer token inversion to augment diversity and update the last several token of prompt embedding $y$ to achieve similar effect of VSD \cite{vsd} in a memory-efficient way. LODS \citep{lods} also optimize null (negative) embedding, but in order to reduce CFG value via "normalizing" SDS. On the other side, we incorporate CFG to update negative embedding in order to improve network towards higher rewards, which is novel to existing methods.

\section{Implementation Details}
\label{details}

In this appendix, we describe the missing details of implementation in the main paper.

\subsection{Configuration}
\label{regularizer}

For fair comparison, we do not propose any new regularizer and maintain same configuration for all methods in each experiment.

\textbf{Single-stage distillation of MVDream.} The configuration of this experiment follows baseline method MVDream \citep{mvdream}, where only orientation loss proposed in \citep{sds} is used. The optimization takes 10000 steps with the weight of orientation loss linearly increasing from 10 to 1000 in the first 5000 steps. Training resolution is set to be $64\times64$ in first 5000 steps and $256\times256$ in the latter. For our negative embedding optimization strategy, learning rate is set as $1e^{-4}$ constantly. Due to memory limitation, Lambertian shading is not used in this setting. 

\textbf{2-stage NeRF generation.} The configuration of this experiment is adapted provided in  baseline method CFD \citep{cfd}, which is performed based on previous experiment. The optimization takes 20000 steps in resolution $512\times512$ with the weight of "z-varience" loss proposed in \citep{hifa} being 10 and normal smooth loss being 1000. In the latter stage, negative embedding is optimized with a linearly decreasing learning rate from $1e^{-4}$ to 0 in the first 1000 step.

\textbf{3-stage DMTet generation.} This experiment is also performed after single-stage distillation of MVDream. Geometry optimization takes 15000 steps and texture takes another 20000 steps, with both resolution being $1024\times1024$. Since reward models are trained in RGB space, we only PSD in the texture space with  learning rate being $1e^{-5}$ in the first 1000 step.

\subsection{Metrics}

\textbf{Human preference reward models.} For ImageReward \citep{imagereward}, PickScore \citep{pickscore}, Aesthetic scores \citep{laion}, and Multi-dimensional Preference Score \citep{mps} we evaluate the average of the scores across 60 equally spaced views and their corresponding text prompts.

\textbf{VQA models.} We apply Qwen2.5-VL-7B \citep{qwen} to calculate the text-3d alignment score. The question-answer pair we utilize is generated in Eval3d \citep{eval3d}. We evaluate across 12 renderings. For more details, please refer to \citep{eval3d}.

In order to compare geometry quality, we use the following metrics newly proposed from Eval3d:

\textbf{Geometric Consistency.} It measures the consistency between the surface normals analytically derived from the 3D representation and the normals predicted by a dense estimation model from 2D images. Our analytic normals are calculated using PyTorch auto-differentiation and estimated normals are calculated by converting from depth estimation from Depth Anything \citep{anything}, same as the original paper. The metric is calculated as follows:

\begin{equation}
    \text{Geometric consistency} = \frac{1}{N_p} \sum_{p} \mathbb{1}[\arccos(\mathbf{n}_p^{\text{anal}} \cdot \mathbf{n}_p^{\text{pred}}) < \delta^{\text{norm}}]
\end{equation}

where $N_p$ is number of valid pixel $p$, $\mathbb{1}(\cdot)$ is indicator function, $\mathbf{n}_p^{\text{anal}}, \mathbf{n}_p^{\text{pred}}$ are analytical and estimated normals respectively. $\delta^{\text{norm}}=23^{\circ}$ is a threshold.

\textbf{Semantic Consistency.} This metric measures the change of the underlying content and semantics. It projects 3D point $x$ to 2D DINO feature $\{\mathcal{F}_i^{\text{DINO}}$ of each rendered image via projection $\pi$ to retrieve its corresponding features. Then calculate

\begin{equation}  \label{eq:semantic_consistency}
    \text{Semantic consistency} = \frac{1}{N_{\text{vert}}} \sum_{x^{\text{vert}}} \mathbb{1}[\operatorname{mean}(\operatorname{Var}(\{\mathcal{F}_i^{\text{DINO}}(\pi_{v_i}(x^{\text{vert}}))\})) < \delta^{\text{DINO}}]
\end{equation}

where $x^{\text{vert}}$ is vertices of the 3D mesh. Following the original paper, $\delta^{\text{DINO}}$ is set to be the 70th percentile of average DINO variance.

\textbf{Structural Consistency.} Comparing with semantic consistency, structural consistency measures whether the overall structure of the generated asset is coherent and plausible. It uses  diffusion-based novel view synthesis model Stable-Zero123 \citep{zero} to predict the image at unobserved viewpoint. Then it applies perceptual metric DreamSim \citep{DreamSim} for similarity measurement. This metric can be formulated as 

\begin{equation}
\text{Structural consistency} = \max_{i} \frac{1}{N} \sum_{j=1}^{N} \left( 1 - f^{\text{DreamSim}}(\mathcal{I}_{i \to j}^{\text{pred}}, \mathcal{I}_{j}) \right)
\end{equation}
where $\mathcal{I}_{i \to j}$ is image predicted from viewpoint $i$.

For interested readers, please refer to the original paper of Eval3d \citep{eval3d} for more details. We choose these metrics because they are especially suitable to localize visual artifacts.

\begin{figure}[h] 
\centering 
\includegraphics[width=0.9\textwidth]{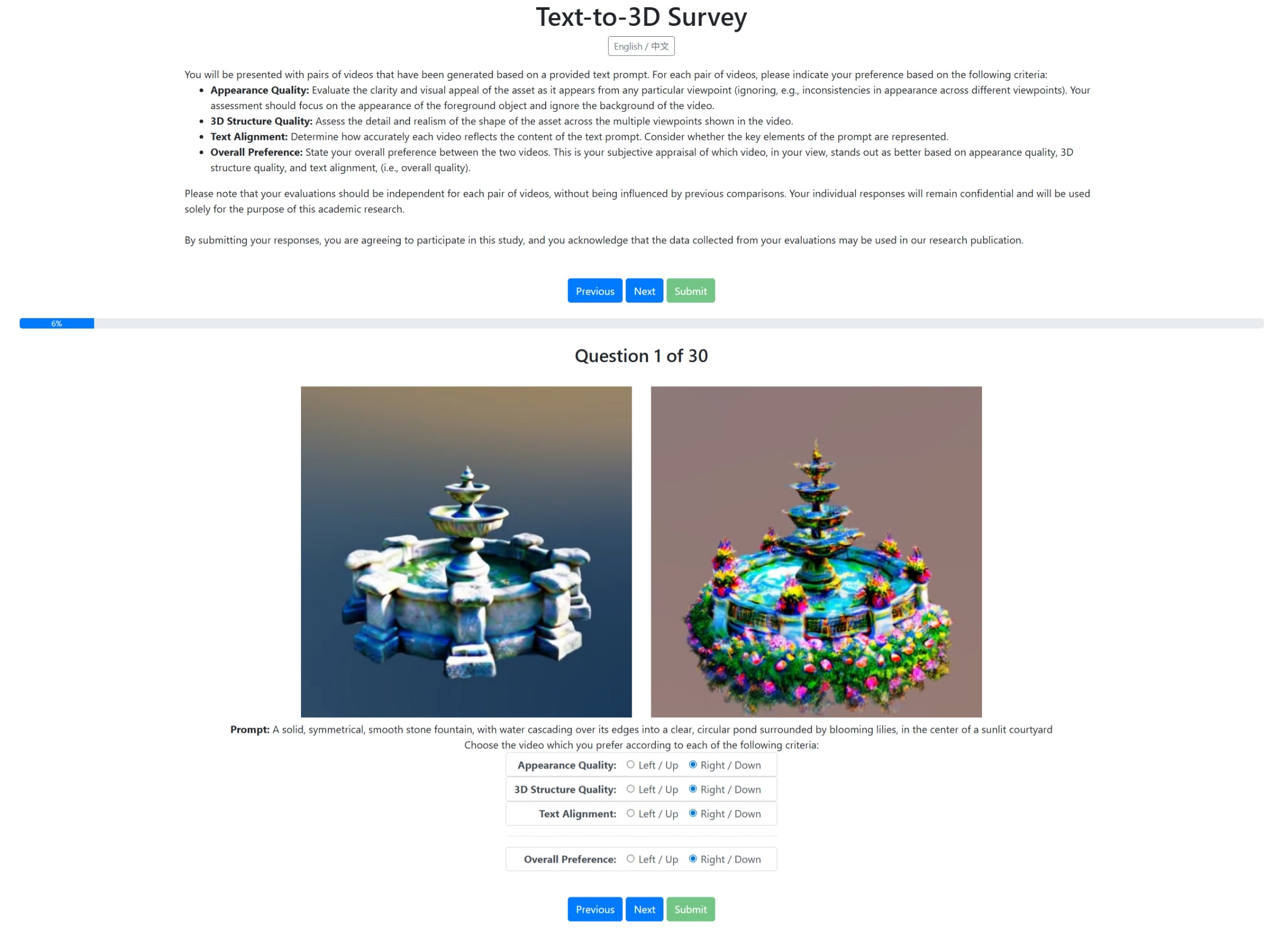} 
\caption{Screen shot of our survey web.} 
\label{screen} 
\end{figure}

\subsection{User Study \label{app_user}}

To validate that our results are truly preferred by human users, 23 participants are involved to make judgments over anonymous 30 rendered videos generated by our PSD against MVDream, DreamDPO and DreamReward. The instructions are:
\begin{itemize}
    \item Appearance Quality: Evaluate the clarity and visual appeal of the asset as it appears from any particular viewpoint (ignoring, e.g., inconsistencies in appearance across different viewpoints). Your assessment should focus on the appearance of the foreground object and ignore the background of the video.
    \item 3D Structure Quality: Assess the detail and realism of the shape of the asset across the multiple viewpoints shown in the video.
    \item Text Alignment: Determine how accurately each video reflects the content of the text prompt. Consider whether the key elements of the prompt are represented.
    \item Overall Preference: State your overall preference between the two videos. This is your subjective appraisal of which video, in your view, stands out as better based on appearance quality, 3D structure quality, and text alignment, (i.e., overall quality).
\end{itemize}
A screen shot of our survey web is shown in Fig. \ref{screen}.

\subsection{Algorithm}
In this section, we present the algorithm of PSD.

\begin{algorithm}[H]
\caption{Preference Score Distillation}
\small
\label{alg_psd}
\begin{algorithmic}[1]

\REQUIRE 
Prompt $y$; negative descriptors $y_{neg}$; pretrained text-to-image diffusion model $\boldsymbol{\epsilon}_{\phi}$; 
reward model $r$; learning rates $lr_1, lr_2$ for 3D representation $\theta$ and negative embeddings $n$; 
annealing schedule $t(\tau)$; classifier-free guidance weight $\gamma$.

\STATE Initialize 3D representation $\theta$ and negative embeddings $n$ using $y_{neg}$.

\WHILE{not converged}
    \STATE Randomly sample camera pose $\boldsymbol{c}$ and noise pairs 
    $\boldsymbol{\epsilon}^1, \boldsymbol{\epsilon}^2 \sim \mathcal{N}(\mathbf{0}, \mathbf{I})$.
    
    \STATE Render 3D representation $\theta$ at pose $\boldsymbol{c}$ to obtain 
    $\boldsymbol{x}_{c} = g_{\theta}(\boldsymbol{c})$.
    
    \STATE Compute diffusion timestep $t(\tau)$.
    
    \STATE Add noise to obtain noisy samples $\boldsymbol{x}_{t}^1$ and $\boldsymbol{x}_{t}^2$.
    
    \STATE Predict one-step samples $(\hat{\boldsymbol{x}}_{0}^1, \hat{\boldsymbol{x}}_{0}^2)$ and rank them using
    reward model $(r(y, \hat{\boldsymbol{x}}_{0}^1), r(y, \hat{\boldsymbol{x}}_{0}^2))$ to obtain
    winner $\boldsymbol{x}_t^w$ and loser $\boldsymbol{x}_t^l$ with corresponding noises
    $(\boldsymbol{\epsilon}, \boldsymbol{\epsilon}')$.
    
    \STATE Compute
    $\delta_{gen} \leftarrow \boldsymbol{\epsilon}_{\phi}(\boldsymbol{x}_t^w, t) - \boldsymbol{\epsilon}$,
    $\delta_{cls} \leftarrow \boldsymbol{\epsilon}_{\phi}(\boldsymbol{x}_t^w, y, t) -
    \boldsymbol{\epsilon}_{\phi}(\boldsymbol{x}_t^w, t)$,
    $\delta_{pref} \leftarrow \tilde{\boldsymbol{\epsilon}}_{\phi}(\boldsymbol{x}_t^w, y, t) -
    \tilde{\boldsymbol{\epsilon}}_{\phi}(\boldsymbol{x}_t^l, y, t)$.
    
    \STATE Compute
    \[
    \beta_r \leftarrow \gamma
    \frac{\|\delta_{cls}\|_2}{\|\delta_{pref}\|_2}
    \cdot
    \sigma\!\left(
    r(y, \hat{\boldsymbol{x}}_{0}^l) -
    r(y, \hat{\boldsymbol{x}}_{0}^w)
    \right)
    \]
    
    \STATE Update 3D representation:
    \[
    \theta \leftarrow \theta - lr_1 \cdot
    \mathbb{E}_{\boldsymbol{c}}
    \left[
    \left(
    \delta_{gen} + \gamma \delta_{cls} + \beta_r \delta_{pref}
    \right)
    \frac{\partial g_{\theta}(\boldsymbol{c})}{\partial \theta}
    \right]
    \]
    
    \STATE Update negative embeddings:
    \[
    n \leftarrow n - lr_2 \cdot
    \nabla_{n} \mathbb{E}_{\boldsymbol{c}}
    \left[
    -r(y, \hat{\boldsymbol{x}}_0)
    \right]
    \]
\ENDWHILE

\RETURN Optimized 3D representation $\theta$ and negative embeddings $n$.

\end{algorithmic}
\end{algorithm}

\subsection{Details of Toy Experiments on Image Generation}
\label{2d_details}
In the main paper, we set up a toy experiments to directly illustrate the effect of our proposed preference score guidance. To simulate the configuration in DDIM, we follow \citep{sdi} to perform a 50-step update of the image parameterized by $\theta$, as presented in Algorithm \ref{alg:2d}.

\section{Limitations}

Inheriting computational demands from prior score distillation methods, PSD optimization requires one to several hours per generation, limiting real-time applicability. While this work primarily investigates differentiable rewards, future work should explore non-differentiable objectives (e.g., vision-language model ensembles). Although PSD elevates aesthetic metrics, its performance remains bounded by the reward model’s capabilities, risking distribution shift or reward hacking. Moreover, as PSD’s output depends directly on the reward model, implementing content filtering modules is essential to prevent malicious content generation.

Additionally, we present failure cases of our proposed PSD in this appendix. While we successfully overcome the floating artifacts caused by pixel-level conflicts between reward gradients and diffusion dynamics, our method still fails to outperform  at certain cases. Besides, Janus problem still exists when distilling Stable Diffusion 2.1. Developing better rewards may become a new solution to this problem.

\begin{algorithm}[H]
\caption{2D Image Generation using Score Distillation}
\small
\label{alg:2d}
\begin{algorithmic}[1]

\REQUIRE 
Prompt $y$; pretrained text-to-image diffusion model $\boldsymbol{\epsilon}_{\phi}$;
learning rate $lr$; number of optimization steps $N$;
time shift interval $[T_1, T_2]$.

\STATE Initialize random latent $\boldsymbol{x}_T$ parameterized by $\theta \sim \mathcal{N}(\mathbf{0}, \mathbf{I})$.

\WHILE{not converged}
    \STATE Compute update timesteps $\{t_i\}_{i=1}^{N}$.
    
    \FOR{$i = 1$ to $N$}
        \STATE Sample noise $\boldsymbol{\epsilon}^2 \sim \mathcal{N}(\mathbf{0}, \mathbf{I})$
        and time shift $\tau \sim \mathcal{U}(T_1, T_2)$.
        
        \STATE Add noise to obtain noisy samples
        $\boldsymbol{x}_{t_i+\tau}^1$ and $\boldsymbol{x}_{t_i+\tau}^2$.
        
        \STATE Predict one-step samples
        $(\hat{\boldsymbol{x}}_{0}^1, \hat{\boldsymbol{x}}_{0}^2)$
        and rank them using reward scores
        $(r(y, \hat{\boldsymbol{x}}_{0}^1), r(y, \hat{\boldsymbol{x}}_{0}^2))$
        to obtain winner $\boldsymbol{x}_{t_i+\tau}^w$ and loser
        $\boldsymbol{x}_{t_i+\tau}^l$ with corresponding noises
        $(\boldsymbol{\epsilon}, \boldsymbol{\epsilon}')$.
        
        \STATE Compute
        \[
        \delta_{gen} \leftarrow
        \boldsymbol{\epsilon}_{\phi}(\boldsymbol{x}_{t_i+\tau}^w, t_i+\tau)
        - \boldsymbol{\epsilon}
        \]
        
        \STATE Compute
        \[
        \delta_{cls} \leftarrow
        \boldsymbol{\epsilon}_{\phi}(\boldsymbol{x}_{t_i+\tau}^w, y, t_i+\tau)
        -
        \boldsymbol{\epsilon}_{\phi}(\boldsymbol{x}_{t_i+\tau}^w, t_i+\tau)
        \]
        
        \STATE Compute
        \[
        \delta_{pref} \leftarrow
        \tilde{\boldsymbol{\epsilon}}_{\phi}(\boldsymbol{x}_{t_i+\tau}^w, y, t_i+\tau)
        -
        \tilde{\boldsymbol{\epsilon}}_{\phi}(\boldsymbol{x}_{t_i+\tau}^l, y, t_i+\tau)
        \]
        
        \STATE Compute
        \[
        \beta_r \leftarrow
        \gamma
        \frac{\|\delta_{cls}\|_2}{\|\delta_{pref}\|_2}
        \cdot
        \sigma\!\left(
        r(y, \hat{\boldsymbol{x}}_{0}^l) -
        r(y, \hat{\boldsymbol{x}}_{0}^w)
        \right)
        \]
        
        \STATE Update parameters:
        \[
        \theta \leftarrow \theta - lr \cdot
        \mathbb{E}
        \left[
        \left(
        \delta_{gen} + \gamma \delta_{cls} + \beta_r \delta_{pref}
        \right)
        \frac{\partial g_{\theta}(\boldsymbol{c})}{\partial \theta}
        \right]
        \]
    \ENDFOR
\ENDWHILE

\RETURN Optimized 2D image representation.

\end{algorithmic}
\end{algorithm}

\begin{figure}[h] 
\centering 
\includegraphics[width=0.9\textwidth]{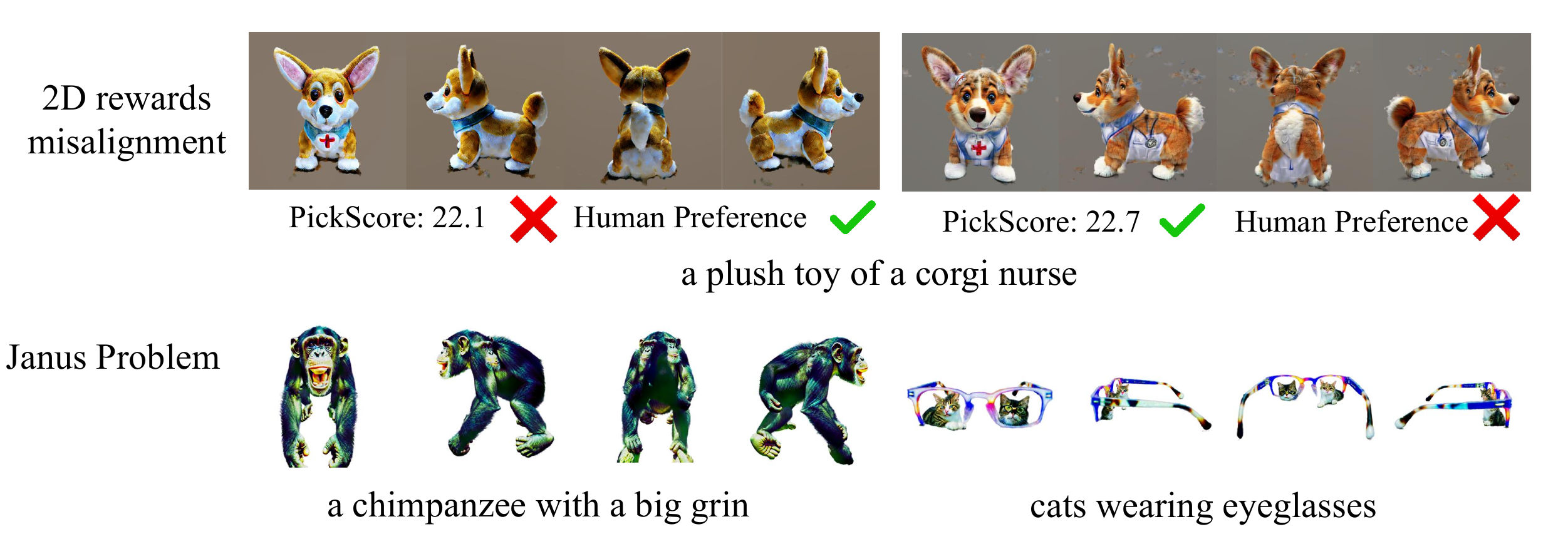} 
\caption{Visual examples of failure cases.}
\label{failure} 
\end{figure}

\subsection{Test Prompts}
\label{prompt}
For evaluations of single stage distillation of MVDream, we use 200-prompt croup set in Eval3D. For more complex 2-stage NeRF and 3-stage DMTet pipelines, we filter a harder subset consisting 40 prompts with lowest PickScore of MVDream, which is listed in Tab. \ref{tab:prompt}.

\begin{figure}[t] 
\centering 
\includegraphics[width=0.85\textwidth]{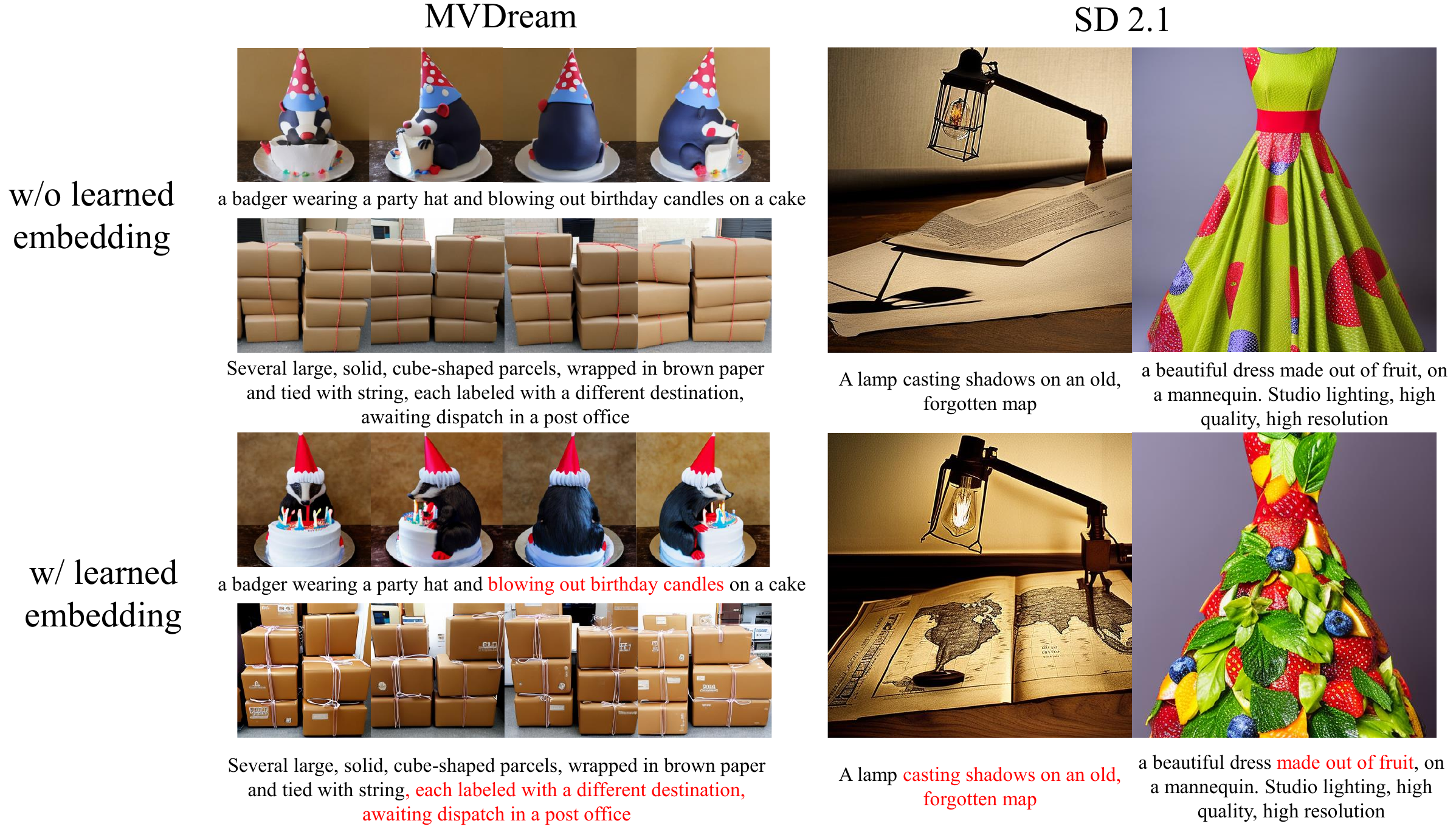} 
\caption{Visualization of learned negative embeddings on diffusion priors.}
\label{visual_neg} 
\end{figure}

\section{Supplementary Results and Comparisons}

\subsection{Visualization of learned negative embeddings on diffusion priors}

In this section, we conduct an experiment to visualize the difference of sampling diffusion priors with or without our learned negative embedding to illustrate the impact of learned negative embeddings on diffusion priors. Results are list in Tab. \ref{tab_neg} and visualized in Fig. \ref{visual_neg}. Note that although the learned negative embeddings we use here is are from the single stage distillation of MVDream, they are able to transfer to SD 2.1 because they use the same text encoder. This observation is consistent with ReNeg.

\begin{table}[h]
    \centering
    \small
    {
\caption{Impact of learned negative embeddings on diffusion priors. Metrics are evaluated across our 40-prompt subset on the average over 5 random seeds. \label{tab_neg} }
}
    \begin{tabular}{lcccc}
\toprule
 \multirow{2}*{\textbf{Experiments}} & \multicolumn{2}{c}{\textbf{MVDream}} & \multicolumn{2}{c}{\textbf{Stable Diffusion 2.1}}  \\
\cmidrule(lr){2-3} \cmidrule(lr){4-5} 
 & \textbf{Pick. $\uparrow$} & \textbf{I.R. $\uparrow$} & \textbf{Pick. $\uparrow$} & \textbf{I.R. $\uparrow$}   \\
\midrule
w/o $n$  & 20.23 & -0.31 & 21.07 & 0.30\\
w/ $n$ & \textbf{20.31} & \textbf{-0.11} & \textbf{21.12} & \textbf{0.41} \\
\bottomrule
\end{tabular}
\end{table}

\subsection{Geometry Comparison}
\label{geometry}

One of our major advantage is to avoid the artifacts introduced by directly guiding the 3D representation with gradients produced by reward models. In this section, we present the geometry comparison between DreamReward \citep{dreamreward} and our proposed PSD using HPSv2.1 as reward model.

Through Tab. \ref{tab_geometry} and visual examples in Fig. \ref{fig_geometry} and \ref{geometry2}, it's easy to conclude that our results have better geometry, which supports our motivation and claims.   

\begin{table}[h]
\centering
\small
\caption{Geometry assessment. Higher values are better ($\uparrow$). Better results are in bold. \label{tab_geometry}}
\begin{adjustbox}{max width=\textwidth}
\begin{tabular}{lccc}
\toprule
\textbf{Algorithm} & \textbf{Geometric Consistency $\uparrow$} & \textbf{Semantic Consistency $\uparrow$} & \textbf{Structural Consistency $\uparrow$}\\
\midrule
DreamReward & 70.39 & 70.74 & 82.83  \\
Ours & \textbf{80.97} & \textbf{75.83 }& \textbf{84.72} \\
\bottomrule
\end{tabular}
\end{adjustbox}

\end{table}

\begin{figure}[!h] 
\centering 
\includegraphics[width=0.8\textwidth]{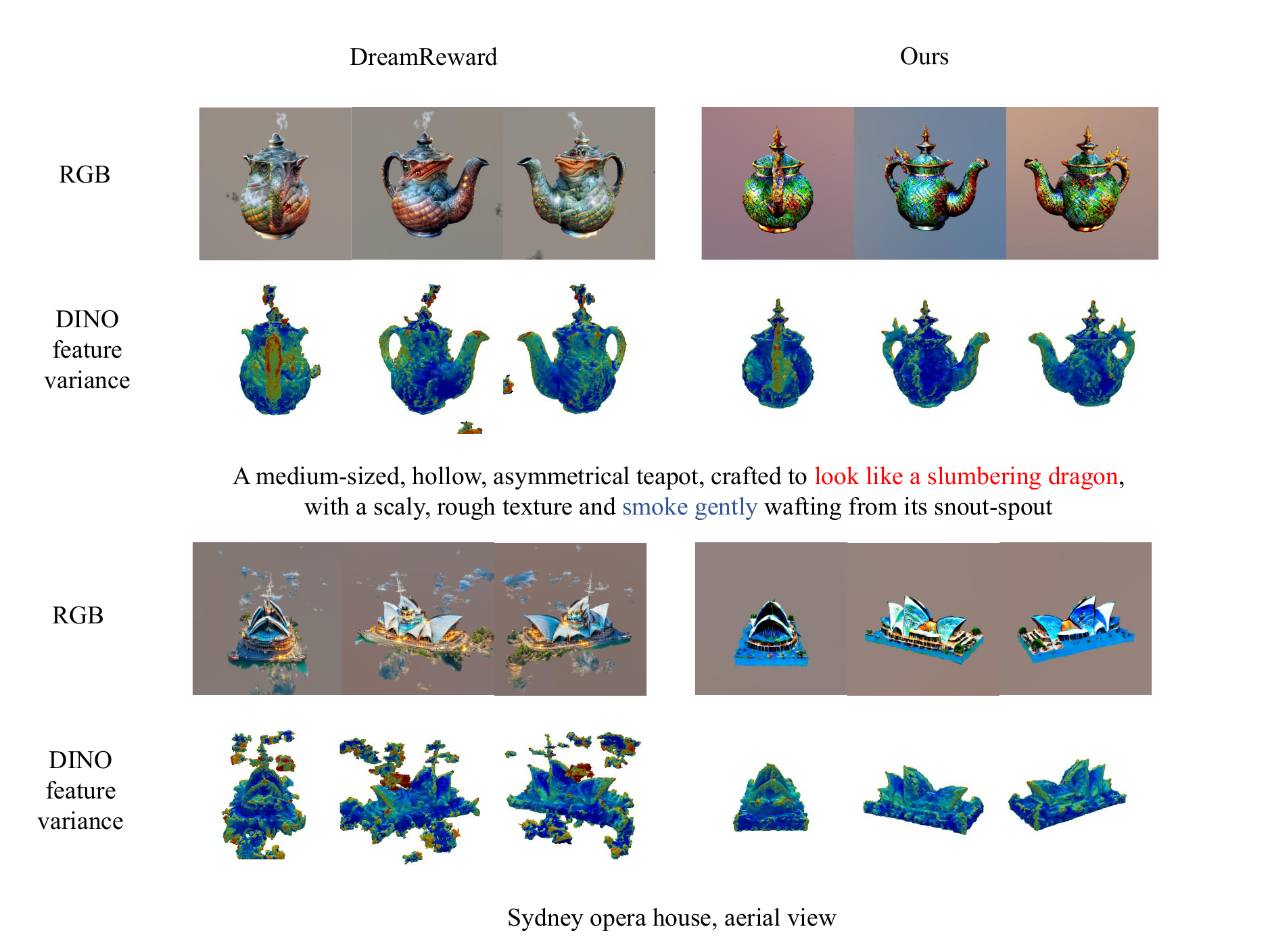} 
\caption{Visual examples of semantic consistency. DINO feature variance is clipped with threshold 0.20 (red).} 
\label{fig_geometry} 
\end{figure}

\begin{figure}[!h] 
\centering 
\includegraphics[width=0.8\textwidth]{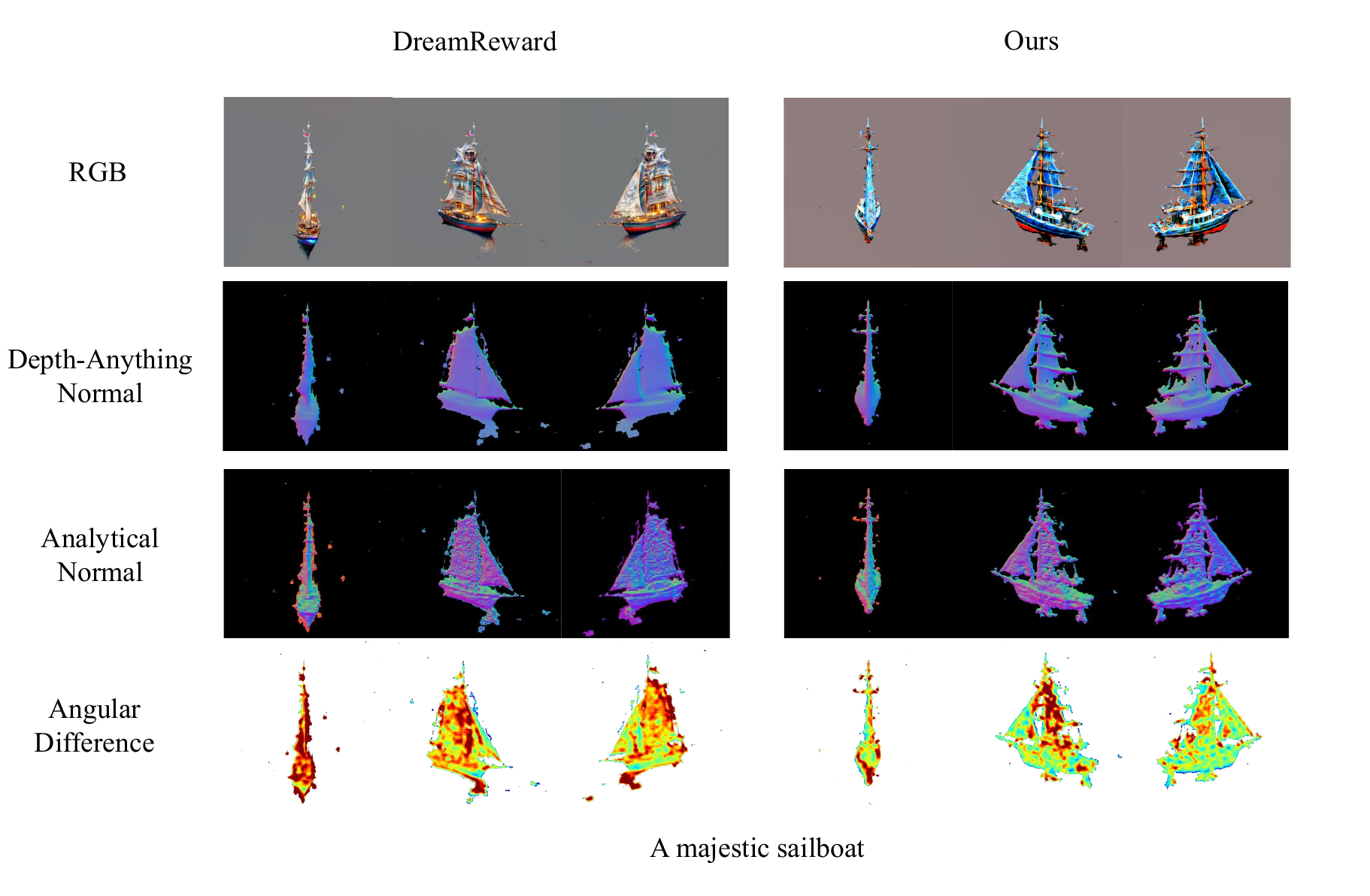} 
\caption{Visual example of geometric consistency. Angular difference is clipped with threshold $23^{\circ}$ (red).} 
\label{geometry2} 
\end{figure}

\subsection{More Qualitative Comparisons}

\begin{table}[h]
    \centering
    \small
    \begin{tabular}{p{6cm}p{6cm}}
 \toprule
\textbf{Column 1} & \textbf{Column 2} \\  
\midrule
1. A DSLR photo of a plate of fried chicken and waffles with maple syrup on them & 21. A tiger playing the violin \\  
2. A beautiful dress made out of garbage bags, on a mannequin. Studio lighting, high quality, high resolution & 22. A wide angle DSLR photo of a colorful rooster \\  
3. A zoomed out DSLR photo of an astronaut chopping vegetables in a sunlit kitchen & 23. A lone, ancient tree stands tall in the middle of a quiet field \\  
4. A wide angle zoomed out DSLR photo of A red dragon dressed in a tuxedo and playing chess. The chess pieces are fashioned after robots & 24. A squirrel dressed like Henry VIII king of England \\  
5. A zoomed out DSLR photo of a pita bread full of hummus and falafel and vegetables & 25. A zoomed out DSLR photo of A punk rock squirrel in a studded leather jacket shouting into a microphone while standing on a stump and holding a beer \\  
6. Several large, solid, cube-shaped parcels, wrapped in brown paper and tied with string, each labeled with a different destination, awaiting dispatch in a post office & 26. A dragon-cat hybrid \\  
7. A large, multi-layered, symmetrical wedding cake, with smooth fondant, delicate piping, and lifelike sugar flowers in full bloom, displayed on a silver stand & 27. A large, hollow, asymmetrically shaped amphitheater, with jagged stone seating, nestled in a natural landscape, a classical play being performed as the sun sets \\  
8. Jellyfish with bioluminescent tentacles shaped like lightning bolts & 28. A compact, cylindrical, vintage pepper mill, with a polished, ornate brass body, slightly worn from use, placed beside a porcelain plate on a checkered tablecloth \\  
9. A Panther De Ville car & 29. A zoomed out DSLR photo of a badger wearing a party hat and blowing out birthday candles on a cake \\  
10. A beagle in a detective's outfit & 30. A mug filled with steaming coffee \\  
11. A wide angle DSLR photo of a squirrel in samurai armor wielding a katana & 31. A zoomed out DSLR photo of a pair of floating chopsticks picking up noodles out of a bowl of ramen \\  
12. A zoomed out DSLR photo of a kangaroo sitting on a bench playing the accordion & 32. A wide angle zoomed out DSLR photo of a skiing penguin wearing a puffy jacket \\  
13. A pair of hiking boots caked with mud at the doorstep of a cabin & 33. A zoomed out DSLR photo of a fox working on a jigsaw puzzle \\  
14. A zoomed out DSLR photo of cats wearing eyeglasses & 34. A zoomed out DSLR photo of a beagle eating a donut \\  
15. A wide angle zoomed out DSLR photo of zoomed out view of Tower Bridge made out of gingerbread and candy & 35. A red panda \\  
16. A beautiful dress made out of fruit, on a mannequin. Studio lighting, high quality, high resolution & 36. A zoomed out DSLR photo of a kingfisher bird \\  
17. A zoomed out DSLR photo of a bear playing electric bass & 37. Clownfish peeking out from sea anemone tendrils \\  
18. A lamp casting shadows on an old, forgotten map & 38. A zoomed out DSLR photo of a rainforest bird mating ritual dance \\  
19. An erupting volcano, aerial view & 39. A chimpanzee with a big grin \\  
20. A tiger karate master & 40. A group of vibrant, chattering parrots perched together \\
\bottomrule
\end{tabular}
    \caption{40 prompts for evaluation in 2-stage NeRF and 3-stage DMTet.}
    \label{tab:prompt}
\end{table}

\begin{figure}[h] 
\centering 
\includegraphics[width=0.6\textwidth]{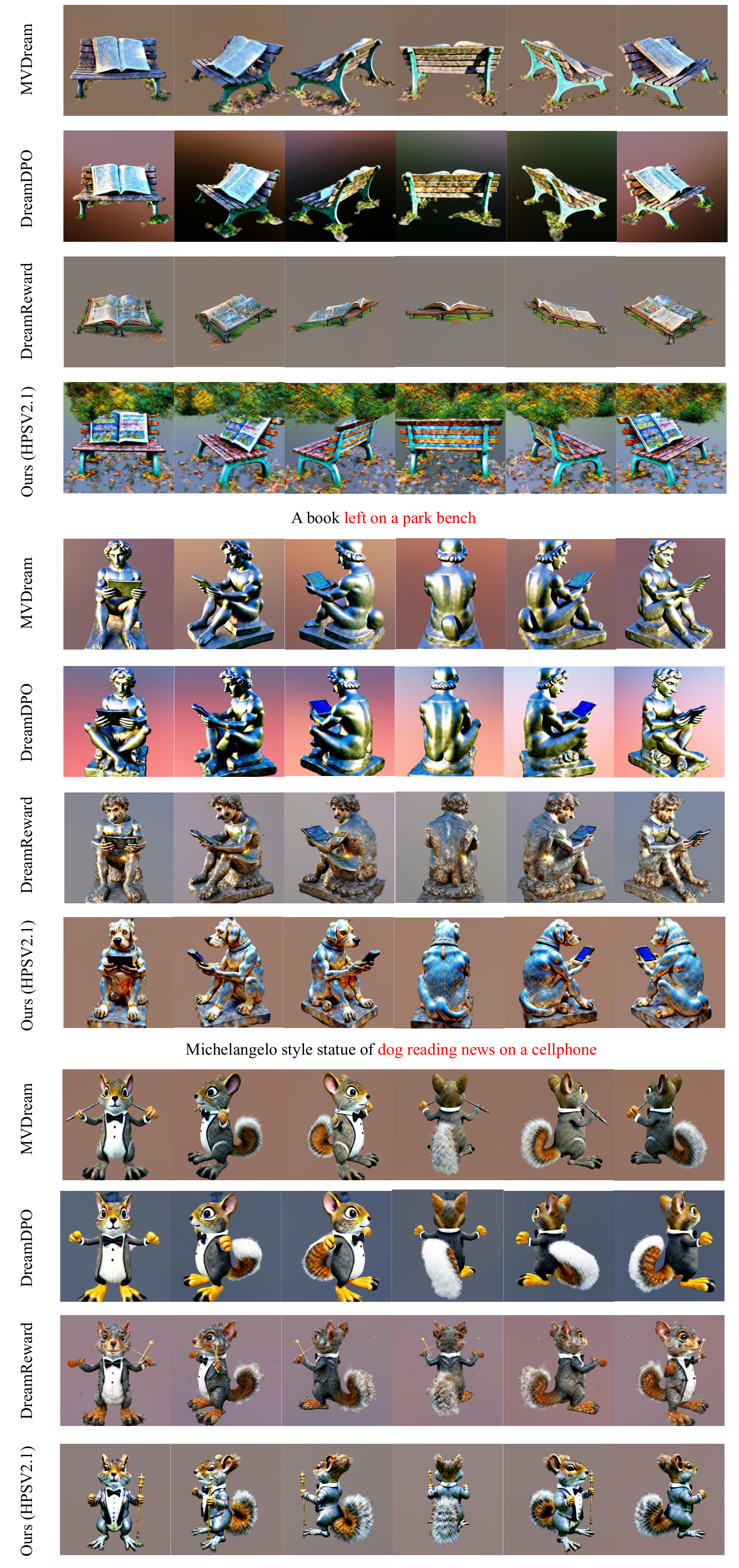} 
\caption{More results of single-stage distillation of MVDream} 
\label{mvdream_appendix1} 
\end{figure}

\begin{figure}[h] 
\centering 
\includegraphics[width=0.65\textwidth]{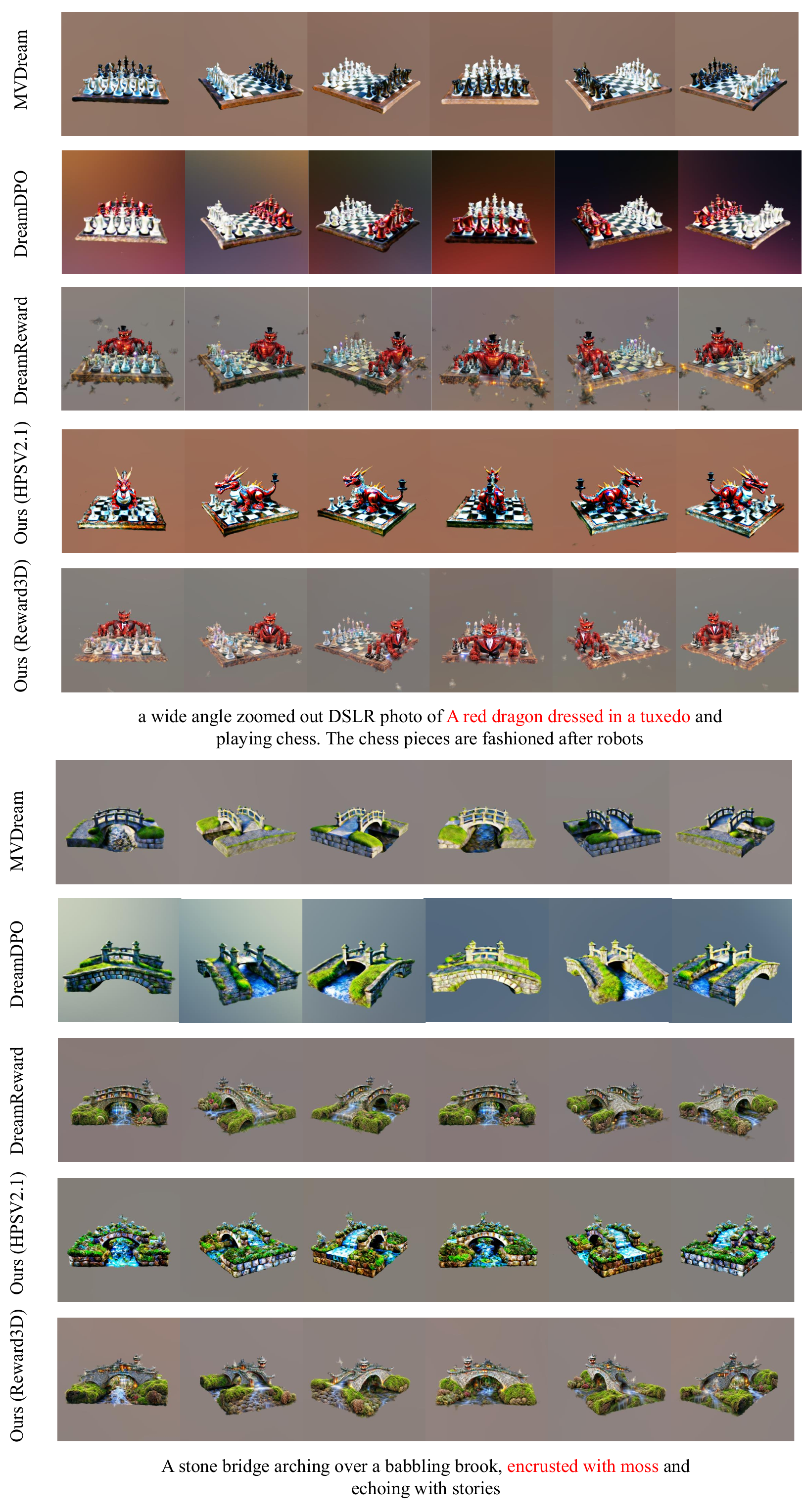} 
\caption{More results of single-stage distillation of MVDream} 
\label{mvdream_appendix2} 
\end{figure}

\begin{figure}[h] 
\centering 
\includegraphics[width=0.85\textwidth]{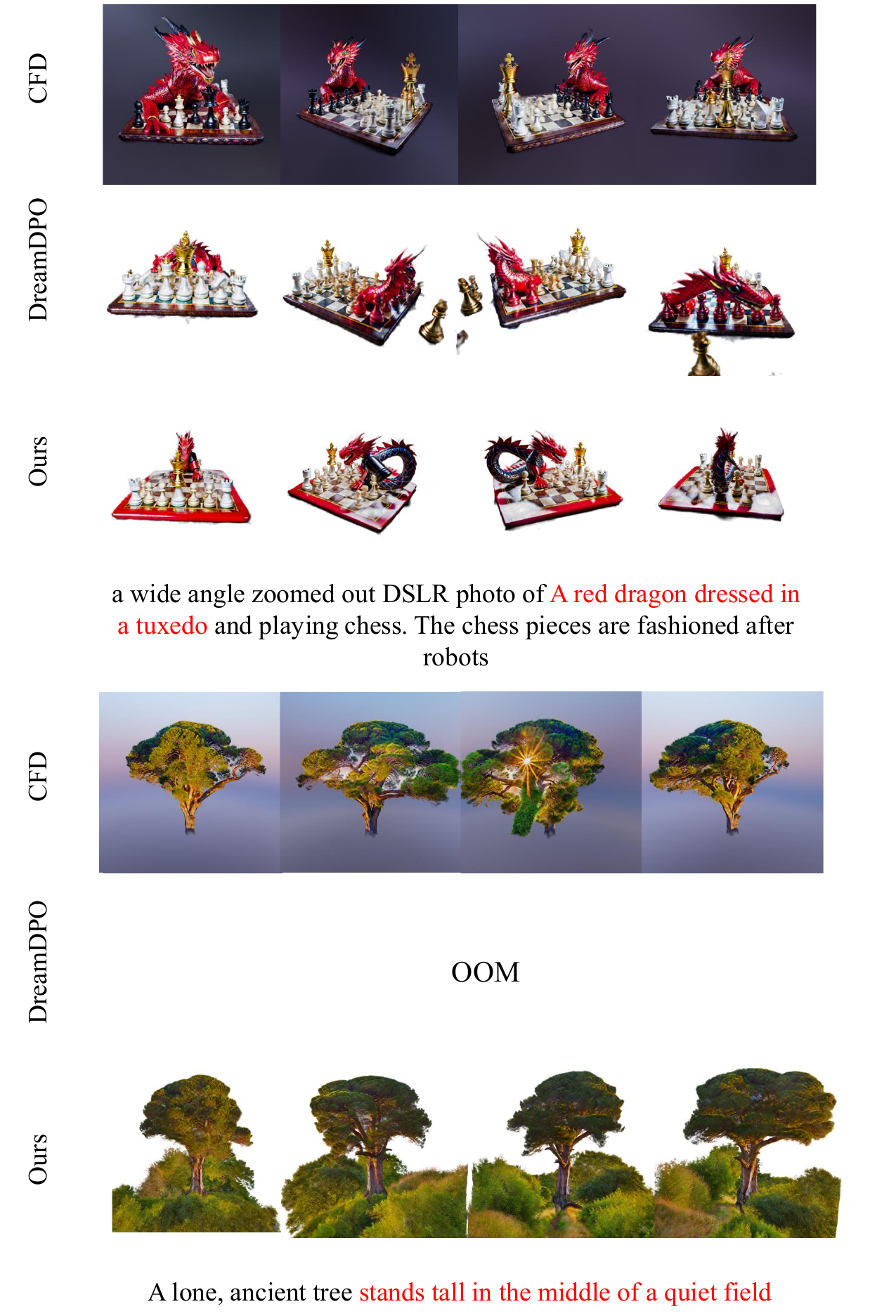} 
\caption{More results of 2-stage NeRF generation.} 
\label{2stage_appendix} 
\end{figure}

\begin{figure}[h] 
\centering 
\includegraphics[width=0.85\textwidth]{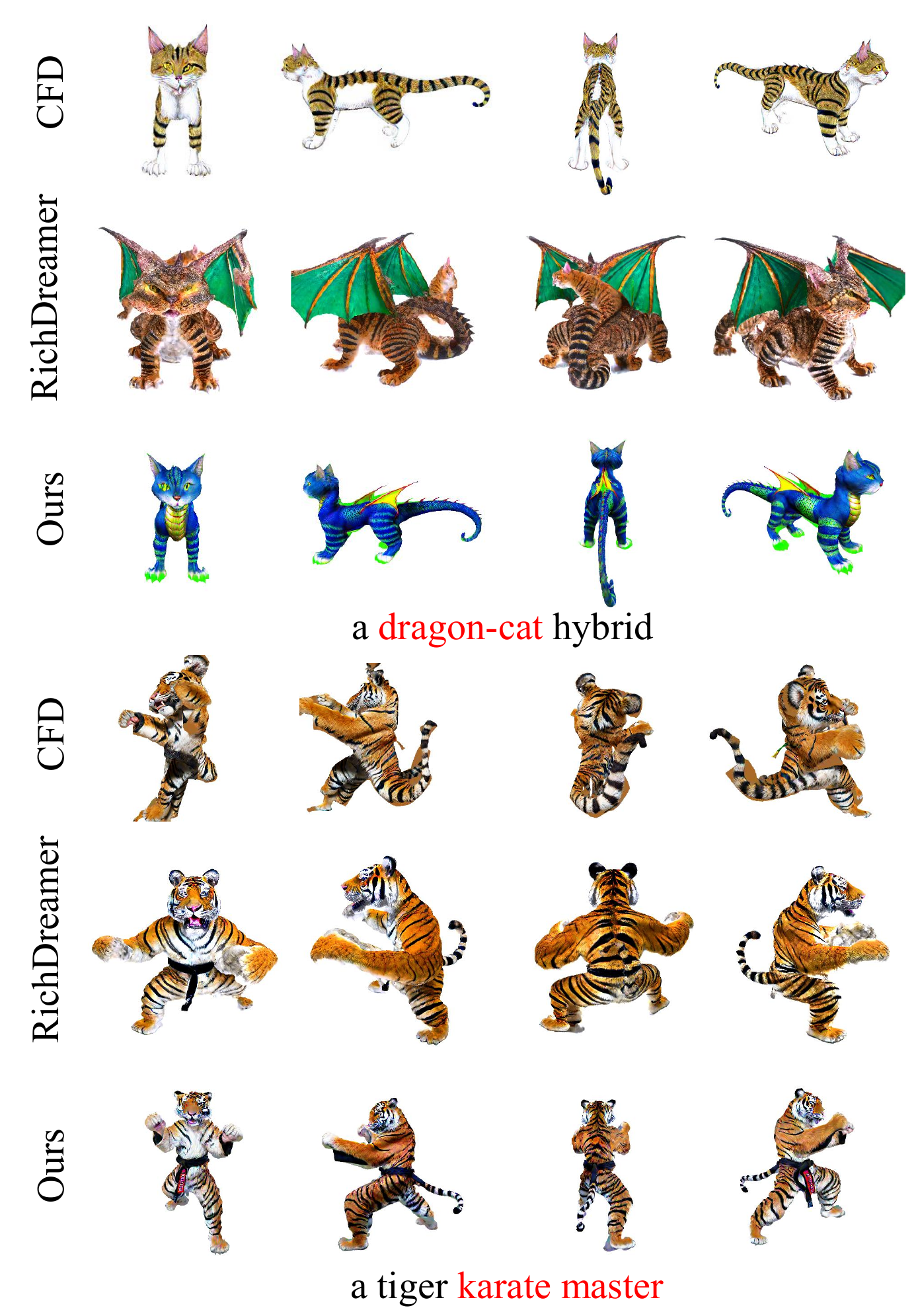} 
\caption{More results of 3-stage DMTet generation.} 
\label{3stage_appendix} 
\end{figure}

\begin{figure}[h] 
\centering 
\includegraphics[width=0.7\textwidth]{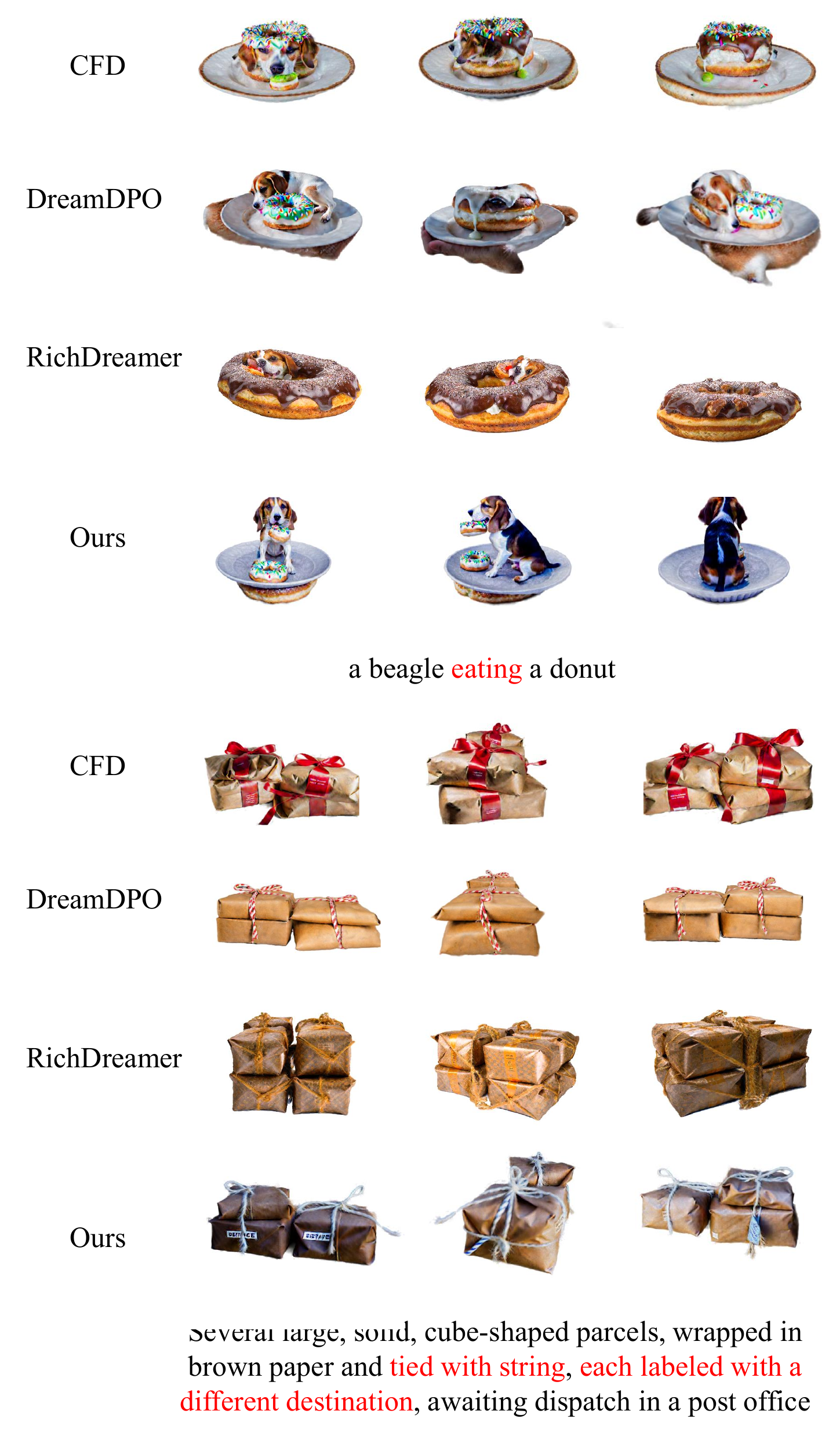} 
\caption{Extended comparison of 2-stage NeRF generation.} 
\label{2stage_extend} 
\end{figure}

%% file: main.bib
@inproceedings{diffusion1,
 author = {Ho, Jonathan and Jain, Ajay and Abbeel, Pieter},
 booktitle = {Advances in Neural Information Processing Systems},
 pages = {6840--6851},
 publisher = {Curran Associates, Inc.},
 title = {Denoising Diffusion Probabilistic Models},
 volume = {33},
 year = {2020}
}

@inproceedings{score,
  title={Score-Based Generative Modeling through Stochastic Differential Equations},
  author={Song, Yang and Sohl-Dickstein, Jascha and Kingma, Diederik P and Kumar, Abhishek and Ermon, Stefano and Poole, Ben},
  booktitle={International Conference on Learning Representations},
year={2021}
}

@inproceedings{diffusion3,
  title={Denoising Diffusion Implicit Models},
  author={Song, Jiaming and Meng, Chenlin and Ermon, Stefano},
  booktitle={International Conference on Learning Representations},
year={2021}
}

@inproceedings{sds,
  title={DreamFusion: Text-to-3D using 2D Diffusion},
  author={Poole, Ben and Jain, Ajay and Barron, Jonathan T and Mildenhall, Ben},
  booktitle={International Conference on Learning Representations},
year={2023}
}

@inproceedings{image1,
 author = {Dhariwal, Prafulla and Nichol, Alexander},
 booktitle = {Advances in Neural Information Processing Systems},
 pages = {8780--8794},
 publisher = {Curran Associates, Inc.},
 title = {Diffusion Models Beat GANs on Image Synthesis},
 volume = {34},
 year = {2021}
}

@InProceedings{sd,
    author    = {Rombach, Robin and Blattmann, Andreas and Lorenz, Dominik and Esser, Patrick and Ommer, Bj\"orn},
    title     = {High-Resolution Image Synthesis With Latent Diffusion Models},
    booktitle = {Proceedings of the IEEE/CVF Conference on Computer Vision and Pattern Recognition (CVPR)},
    month     = {June},
    year      = {2022},
    pages     = {10684-10695}
}

@inproceedings{vsd,
 author = {Wang, Zhengyi and Lu, Cheng and Wang, Yikai and Bao, Fan and LI, Chongxuan and Su, Hang and Zhu, Jun},
 booktitle = {Advances in Neural Information Processing Systems},
 pages = {8406--8441},
 publisher = {Curran Associates, Inc.},
 title = {ProlificDreamer: High-Fidelity and Diverse Text-to-3D Generation with Variational Score Distillation},
 volume = {36},
 year = {2023}
}

@inproceedings{sdi,
 author = {Lukoianov, Artem and de Oc\'{a}riz Borde, Haitz S\'{a}ez and Greenewald, Kristjan and Guizilini, Vitor Campagnolo and Bagautdinov, Timur and Sitzmann, Vincent and Solomon, Justin},
 booktitle = {Advances in Neural Information Processing Systems},
 pages = {26011--26044},
 publisher = {Curran Associates, Inc.},
 title = {Score Distillation via Reparametrized DDIM},
 volume = {37},
 year = {2024}
}

@inproceedings{bridge,
 author = {McAllister, David and Ge, Songwei and Huang, Jia-Bin and Jacobs, David W. and Efros, Alexei A. and Holynski, Aleksander and Kanazawa, Angjoo},
 booktitle = {Advances in Neural Information Processing Systems},
 pages = {33779--33804},
 publisher = {Curran Associates, Inc.},
 title = {Rethinking Score Distillation as a Bridge Between Image Distributions},
 volume = {37},
 year = {2024}
}

@InProceedings{connect3d,
author="Li, Zongrui
and Hu, Minghui
and Zheng, Qian
and Jiang, Xudong",
title="Connecting Consistency Distillation to Score Distillation for Text-to-3D Generation",
booktitle="Computer Vision -- ECCV 2024",
year="2025",
publisher="Springer Nature Switzerland",
address="Cham",
pages="274--291",
isbn="978-3-031-72775-7"
}

@InProceedings{jointdreamer,
author="Jiang, Chenhan
and Zeng, Yihan
and Hu, Tianyang
and Xu, Songcun
and Zhang, Wei
and Xu, Hang
and Yeung, Dit-Yan",
title="JointDreamer: Ensuring Geometry Consistency and Text Congruence in Text-to-3D Generation via Joint Score Distillation",
booktitle="Computer Vision -- ECCV 2024",
year="2025",
publisher="Springer Nature Switzerland",
address="Cham",
pages="439--456",
isbn="978-3-031-73347-5"
}

@InProceedings{DreamMesh,
author="Yang, Haibo
and Chen, Yang
and Pan, Yingwei
and Yao, Ting
and Chen, Zhineng
and Wu, Zuxuan
and Jiang, Yu-Gang
and Mei, Tao",
title="DreamMesh: Jointly Manipulating and Texturing Triangle Meshes for Text-to-3D Generation",
booktitle="Computer Vision -- ECCV 2024",
year="2025",
publisher="Springer Nature Switzerland",
address="Cham",
pages="162--178",
isbn="978-3-031-73202-7"
}

@InProceedings{dmd,
    author    = {Yin, Tianwei and Gharbi, Micha\"el and Zhang, Richard and Shechtman, Eli and Durand, Fr\'edo and Freeman, William T. and Park, Taesung},
    title     = {One-step Diffusion with Distribution Matching Distillation},
    booktitle = {Proceedings of the IEEE/CVF Conference on Computer Vision and Pattern Recognition (CVPR)},
    month     = {June},
    year      = {2024},
    pages     = {6613-6623}
}

@inproceedings{dmd2,
 author = {Yin, Tianwei and Gharbi, Micha\"{e}l and Park, Taesung and Zhang, Richard and Shechtman, Eli and Durand, Fr\'{e}do and Freeman, William T.},
 booktitle = {Advances in Neural Information Processing Systems},
 pages = {47455--47487},
 publisher = {Curran Associates, Inc.},
 title = {Improved Distribution Matching Distillation for Fast Image Synthesis},
 volume = {37},
 year = {2024}
}

@InProceedings{dmd_viedo,
    author    = {Yin, Tianwei and Zhang, Qiang and Zhang, Richard and Freeman, William T. and Durand, Fredo and Shechtman, Eli and Huang, Xun},
    title     = {From Slow Bidirectional to Fast Autoregressive Video Diffusion Models},
    booktitle = {Proceedings of the IEEE/CVF Conference on Computer Vision and Pattern Recognition (CVPR)},
    month     = {June},
    year      = {2025},
    pages     = {22963-22974}
}

@InProceedings{character,
    author    = {Li, Xuan and Ma, Qianli and Lin, Tsung-Yi and Chen, Yongxin and Jiang, Chenfanfu and Liu, Ming-Yu and Xiang, Donglai},
    title     = {Articulated Kinematics Distillation from Video Diffusion Models},
    booktitle = {Proceedings of the IEEE/CVF Conference on Computer Vision and Pattern Recognition (CVPR)},
    month     = {June},
    year      = {2025},
    pages     = {17571-17581}
}

@InProceedings{depth,
    author    = {Pham, Duc-Hai and Do, Tung and Nguyen, Phong and Hua, Binh-Son and Nguyen, Khoi and Nguyen, Rang},
    title     = {SharpDepth: Sharpening Metric Depth Predictions Using Diffusion Distillation},
    booktitle = {Proceedings of the IEEE/CVF Conference on Computer Vision and Pattern Recognition (CVPR)},
    month     = {June},
    year      = {2025},
    pages     = {17060-17069}
}

@InProceedings{dreamreward,
author="Ye, JunLiang
and Liu, Fangfu
and Li, Qixiu
and Wang, Zhengyi
and Wang, Yikai
and Wang, Xinzhou
and Duan, Yueqi
and Zhu, Jun",
title="DreamReward: Text-to-3D Generation with Human Preference",
booktitle="Computer Vision -- ECCV 2024",
year="2025",
publisher="Springer Nature Switzerland",
address="Cham",
pages="259--276",
isbn="978-3-031-72897-6"
}

@inproceedings{dreamalign,
  title={DreamAlign: Dynamic text-to-3D optimization with human preference alignment},
  author={Liu, Gaofeng and Ma, Zhiyuan and Fang, Tao},
  booktitle={Proceedings of the AAAI Conference on Artificial Intelligence},
  volume={39},
  number={5},
  pages={5424--5432},
  year={2025}
}

@article{dreamdpo,
  title={Dreamdpo: Aligning text-to-3d generation with human preferences via direct preference optimization},
  author={Zhou, Zhenglin and Xia, Xiaobo and Ma, Fan and Fan, Hehe and Yang, Yi and Chua, Tat-Seng},
  journal={arXiv preprint arXiv:2502.04370},
  year={2025}
}

@ARTICLE{dreamrewardx,
  author={Liu, Fangfu and Ye, Junliang and Wang, Yikai and Wang, Hanyang and Wang, Zhengyi and Zhu, Jun and Duan, Yueqi},
  journal={IEEE Transactions on Pattern Analysis and Machine Intelligence}, 
  title={DreamReward-X: Boosting High-Quality 3D Generation with Human Preference Alignment}, 
  year={2025},
  volume={},
  number={},
  pages={1-14},

  doi={10.1109/TPAMI.2025.3609680}}

@article{dreamcs,
  title={DreamCS: Geometry-Aware Text-to-3D Generation with Unpaired 3D Reward Supervision},
  author={Zou, Xiandong and Xia, Ruihao and Wang, Hongsong and Zhou, Pan},
  journal={arXiv preprint arXiv:2506.09814},
  year={2025}
}

@article{cfg,
  title={Classifier-free diffusion guidance},
  author={Ho, Jonathan and Salimans, Tim},
  journal={arXiv preprint arXiv:2207.12598},
  year={2022}
}

@InProceedings{neg1,
author="Ban, Yuanhao
and Wang, Ruochen
and Zhou, Tianyi
and Cheng, Minhao
and Gong, Boqing
and Hsieh, Cho-Jui",
title="Understanding the Impact of Negative Prompts: When and How Do They Take Effect?",
booktitle="Computer Vision -- ECCV 2024",
year="2025",
publisher="Springer Nature Switzerland",
address="Cham",
pages="190--206",
isbn="978-3-031-73024-5"
}

@InProceedings{neg2,
  title = 	 {On Discrete Prompt Optimization for Diffusion Models},
  author =       {Wang, Ruochen and Liu, Ting and Hsieh, Cho-Jui and Gong, Boqing},
  booktitle = 	 {Proceedings of the 41st International Conference on Machine Learning},
  pages = 	 {50992--51011},
  year = 	 {2024},
  volume = 	 {235},
  series = 	 {Proceedings of Machine Learning Research},
  month = 	 {21--27 Jul},
  publisher =    {PMLR},
}

@InProceedings{reneg,
    author    = {Li, Xiaomin and Liu, Yixuan and Isobe, Takashi and Jia, Xu and Cui, Qinpeng and Zhou, Dong and Li, Dong and He, You and Lu, Huchuan and Wang, Zhongdao and Barsoum, Emad},
    title     = {ReNeg: Learning Negative Embedding with Reward Guidance},
    booktitle = {Proceedings of the IEEE/CVF Conference on Computer Vision and Pattern Recognition (CVPR)},
    month     = {June},
    year      = {2025},
    pages     = {23636-23645}
}

@inproceedings{edm,
 author = {Karras, Tero and Aittala, Miika and Aila, Timo and Laine, Samuli},
 booktitle = {Advances in Neural Information Processing Systems},
 pages = {26565--26577},
 publisher = {Curran Associates, Inc.},
 title = {Elucidating the Design Space of Diffusion-Based Generative Models},
 volume = {35},
 year = {2022}
}

@article{fsd,
  title={Flow Score Distillation for Diverse Text-to-3D Generation},
  author={Yan, Runjie and Wu, Kailu and Ma, Kaisheng},
  journal={arXiv preprint arXiv:2405.10988},
  year={2024}
}

@inproceedings{cfd,
  title={Consistent Flow Distillation for Text-to-3D Generation},
  author={Yan, Runjie and Chen, Yinbo and Wang, Xiaolong},
  booktitle={International Conference on Learning Representations},
year={2025}
}

@inproceedings{rlhf,
 author = {Christiano, Paul F and Leike, Jan and Brown, Tom and Martic, Miljan and Legg, Shane and Amodei, Dario},
 booktitle = {Advances in Neural Information Processing Systems},
 pages = {},
 publisher = {Curran Associates, Inc.},
 title = {Deep Reinforcement Learning from Human Preferences},
 volume = {30},
 year = {2017}
}

@inproceedings{test,
  title={Test-time Alignment of Diffusion Models without Reward Over-optimization},
  author={Kim, Sunwoo and Kim, Minkyu and Park, Dongmin},
  booktitle={International Conference on Learning Representations},
year = {2025}
}

@inproceedings{dpo,
 author = {Rafailov, Rafael and Sharma, Archit and Mitchell, Eric and Manning, Christopher D and Ermon, Stefano and Finn, Chelsea},
 booktitle = {Advances in Neural Information Processing Systems},
 pages = {53728--53741},
 publisher = {Curran Associates, Inc.},
 title = {Direct Preference Optimization: Your Language Model is Secretly a Reward Model},
 volume = {36},
 year = {2023}
}

@article{bt,
  title={Rank analysis of incomplete block designs: I. the method of paired comparisons},
  author={Bradley, Ralph Allan and Terry, Milton E},
  journal={Biometrika},
  volume={39},
  number={3/4},
  pages={324--345},
  year={1952},
  publisher={JSTOR}
}

@article{tweedie,
  title={Tweedie’s formula and selection bias},
  author={Efron, Bradley},
  journal={Journal of the American Statistical Association},
  volume={106},
  number={496},
  pages={1602--1614},
  year={2011},
  publisher={Taylor \& Francis}
}

@inproceedings{csd,
  title={Text-to-3D with Classifier Score Distillation},
  author={Yu, Xin and Guo, Yuan-Chen and Li, Yangguang and Liang, Ding and Zhang, Song-Hai and QI, XIAOJUAN},
  booktitle={International Conference on Learning Representations},
year={2024},
}

@article{segment,
  title={SegmentDreamer: Towards High-fidelity Text-to-3D Synthesis with Segmented Consistency Trajectory Distillation},
  author={Zhu, Jiahao and Chen, Zixuan and Wang, Guangcong and Xie, Xiaohua and Zhou, Yi},
  journal={arXiv preprint arXiv:2507.05256},
  year={2025}
}

@article{nerf,
author = {Mildenhall, Ben and Srinivasan, Pratul P. and Tancik, Matthew and Barron, Jonathan T. and Ramamoorthi, Ravi and Ng, Ren},
title = {NeRF: representing scenes as neural radiance fields for view synthesis},
year = {2021},
issue_date = {January 2022},
publisher = {Association for Computing Machinery},
address = {New York, NY, USA},
volume = {65},
number = {1},
issn = {0001-0782},
doi = {10.1145/3503250},
journal = {Commun. ACM},
month = dec,
pages = {99–106},
numpages = {8}
}

@inproceedings{dmtet,
 author = {Shen, Tianchang and Gao, Jun and Yin, Kangxue and Liu, Ming-Yu and Fidler, Sanja},
 booktitle = {Advances in Neural Information Processing Systems},
 pages = {6087--6101},
 publisher = {Curran Associates, Inc.},
 title = {Deep Marching Tetrahedra: a Hybrid Representation for High-Resolution 3D Shape Synthesis},
 volume = {34},
 year = {2021}
}

@inproceedings{lora,
  title={LoRA: Low-Rank Adaptation of Large Language Models},
  author={Hu, Edward J and Wallis, Phillip and Allen-Zhu, Zeyuan and Li, Yuanzhi and Wang, Shean and Wang, Lu and Chen, Weizhu and others},
  booktitle={International Conference on Learning Representations},
year={2022},
}

@inproceedings{mvdream,
  title={MVDream: Multi-view Diffusion for 3D Generation},
  author={Shi, Yichun and Wang, Peng and Ye, Jianglong and Mai, Long and Li, Kejie and Yang, Xiao},
  booktitle={International Conference on Learning Representations},
year={2024},
}

@misc{threestudio,
  month = {06},
  title = {threestudio-project/threestudio},
  url = {https://github.com/threestudio-project/threestudio},
  year = {2024},
  organization = {GitHub}
}

@InProceedings{eval3d,
    author    = {Duggal, Shivam and Hu, Yushi and Michel, Oscar and Kembhavi, Aniruddha and Freeman, William T. and Smith, Noah A. and Krishna, Ranjay and Torralba, Antonio and Farhadi, Ali and Ma, Wei-Chiu},
    title     = {Eval3D: Interpretable and Fine-grained Evaluation for 3D Generation},
    booktitle = {Proceedings of the IEEE/CVF Conference on Computer Vision and Pattern Recognition (CVPR)},
    month     = {June},
    year      = {2025},
    pages     = {13326-13336}
}

@inproceedings{imagereward,
 author = {Xu, Jiazheng and Liu, Xiao and Wu, Yuchen and Tong, Yuxuan and Li, Qinkai and Ding, Ming and Tang, Jie and Dong, Yuxiao},
 booktitle = {Advances in Neural Information Processing Systems},
 pages = {15903--15935},
 publisher = {Curran Associates, Inc.},
 title = {ImageReward: Learning and Evaluating Human Preferences for Text-to-Image Generation},
 volume = {36},
 year = {2023}
}

@inproceedings{pickscore,
 author = {Kirstain, Yuval and Polyak, Adam and Singer, Uriel and Matiana, Shahbuland and Penna, Joe and Levy, Omer},
 booktitle = {Advances in Neural Information Processing Systems},
 pages = {36652--36663},
 publisher = {Curran Associates, Inc.},
 title = {Pick-a-Pic: An Open Dataset of User Preferences for Text-to-Image Generation},
 volume = {36},
 year = {2023}
}

@misc{laion,
  author = {Schuhmann, Christoph},
  title = {LAION-Aesthetics | LAION},
  url = {https://laion.ai/blog/laion-aesthetics/},
  organization = {laion.ai}
}

@article{qwen,
  title={Qwen2. 5-vl technical report},
  author={Bai, Shuai and Chen, Keqin and Liu, Xuejing and Wang, Jialin and Ge, Wenbin and Song, Sibo and Dang, Kai and Wang, Peng and Wang, Shijie and Tang, Jun and others},
  journal={arXiv preprint arXiv:2502.13923},
  year={2025}
}

@InProceedings{mps,
    author    = {Zhang, Sixian and Wang, Bohan and Wu, Junqiang and Li, Yan and Gao, Tingting and Zhang, Di and Wang, Zhongyuan},
    title     = {Learning Multi-Dimensional Human Preference for Text-to-Image Generation},
    booktitle = {Proceedings of the IEEE/CVF Conference on Computer Vision and Pattern Recognition (CVPR)},
    month     = {June},
    year      = {2024},
    pages     = {8018-8027}
}

@InProceedings{richdreamer,
    author    = {Qiu, Lingteng and Chen, Guanying and Gu, Xiaodong and Zuo, Qi and Xu, Mutian and Wu, Yushuang and Yuan, Weihao and Dong, Zilong and Bo, Liefeng and Han, Xiaoguang},
    title     = {RichDreamer: A Generalizable Normal-Depth Diffusion Model for Detail Richness in Text-to-3D},
    booktitle = {Proceedings of the IEEE/CVF Conference on Computer Vision and Pattern Recognition (CVPR)},
    month     = {June},
    year      = {2024},
    pages     = {9914-9925}
}

@article{adam,
  title={Adam: A method for stochastic optimization},
  author={Kingma, Diederik P and Ba, Jimmy},
  journal={arXiv preprint arXiv:1412.6980},
  year={2014}
}

@InProceedings{dreamsampler,
author="Kim, Jeongsol
and Park, Geon Yeong
and Ye, Jong Chul",
title="DreamSampler: Unifying Diffusion Sampling and Score Distillation for Image Manipulation",
booktitle="Computer Vision -- ECCV 2024",
year="2025",
publisher="Springer Nature Switzerland",
address="Cham",
pages="398--414",
isbn="978-3-031-73007-8"
}

@article{centric,
  title={Reward-Instruct: A Reward-Centric Approach to Fast Photo-Realistic Image Generation},
  author={Luo, Yihong and Hu, Tianyang and Luo, Weijian and Kawaguchi, Kenji and Tang, Jing},
  journal={arXiv preprint arXiv:2503.13070},
  year={2025}
}

@article{parti,
  title={Scaling autoregressive models for content-rich text-to-image generation},
  author={Yu, Jiahui and Xu, Yuanzhong and Koh, Jing Yu and Luong, Thang and Baid, Gunjan and Wang, Zirui and Vasudevan, Vijay and Ku, Alexander and Yang, Yinfei and Ayan, Burcu Karagol and others},
  journal={arXiv preprint arXiv:2206.10789},
  year={2022}
}

@InProceedings{trellis,
    author    = {Xiang, Jianfeng and Lv, Zelong and Xu, Sicheng and Deng, Yu and Wang, Ruicheng and Zhang, Bowen and Chen, Dong and Tong, Xin and Yang, Jiaolong},
    title     = {Structured 3D Latents for Scalable and Versatile 3D Generation},
    booktitle = {Proceedings of the IEEE/CVF Conference on Computer Vision and Pattern Recognition (CVPR)},
    month     = {June},
    year      = {2025},
    pages     = {21469-21480}
}

@InProceedings{sd3,
  title = 	 {Scaling Rectified Flow Transformers for High-Resolution Image Synthesis},
  author =       {Esser, Patrick and Kulal, Sumith and Blattmann, Andreas and Entezari, Rahim and M\"{u}ller, Jonas and Saini, Harry and Levi, Yam and Lorenz, Dominik and Sauer, Axel and Boesel, Frederic and Podell, Dustin and Dockhorn, Tim and English, Zion and Rombach, Robin},
  booktitle = 	 {Proceedings of the 41st International Conference on Machine Learning},
  pages = 	 {12606--12633},
  year = 	 {2024},
  volume = 	 {235},
  series = 	 {Proceedings of Machine Learning Research},
  month = 	 {21--27 Jul},
  publisher =    {PMLR},
}

@inproceedings{sdt,
author = {Sauer, Axel and Boesel, Frederic and Dockhorn, Tim and Blattmann, Andreas and Esser, Patrick and Rombach, Robin},
title = {Fast High-Resolution Image Synthesis with Latent Adversarial Diffusion Distillation},
year = {2024},
isbn = {9798400711312},
publisher = {Association for Computing Machinery},
address = {New York, NY, USA},
doi = {10.1145/3680528.3687625},
booktitle = {SIGGRAPH Asia 2024 Conference Papers},
articleno = {106},
numpages = {11},
location = {Tokyo, Japan},
series = {SA '24}
}

@misc{svd,
      title={Stable Video Diffusion: Scaling Latent Video Diffusion Models to Large Datasets}, 
      author={Andreas Blattmann and Tim Dockhorn and Sumith Kulal and Daniel Mendelevitch and Maciej Kilian and Dominik Lorenz and Yam Levi and Zion English and Vikram Voleti and Adam Letts and Varun Jampani and Robin Rombach},
      year={2023},
      journal={arXiv preprint arXiv:2311.15127},
}

@misc{hps,
      title={Human Preference Score v2: A Solid Benchmark for Evaluating Human Preferences of Text-to-Image Synthesis}, 
      author={Xiaoshi Wu and Yiming Hao and Keqiang Sun and Yixiong Chen and Feng Zhu and Rui Zhao and Hongsheng Li},
      year={2023},
      journal={arXiv preprint arXiv:2306.09341},
}

@inproceedings{dpok,
 author = {Fan, Ying and Watkins, Olivia and Du, Yuqing and Liu, Hao and Ryu, Moonkyung and Boutilier, Craig and Abbeel, Pieter and Ghavamzadeh, Mohammad and Lee, Kangwook and Lee, Kimin},
 booktitle = {Advances in Neural Information Processing Systems},
 pages = {79858--79885},
 publisher = {Curran Associates, Inc.},
 title = {DPOK: Reinforcement Learning for Fine-tuning Text-to-Image Diffusion Models},
 volume = {36},
 year = {2023}
}

@inproceedings{ddpo,
  title={Training Diffusion Models with Reinforcement Learning},
  author={Black, Kevin and Janner, Michael and Du, Yilun and Kostrikov, Ilya and Levine, Sergey},
  booktitle={International Conference on Learning Representations},
year = {2024}
}

@inproceedings{hacking,
  title={Scaling laws for reward model overoptimization},
  author={Gao, Leo and Schulman, John and Hilton, Jacob},
  booktitle={International Conference on Machine Learning},
  pages={10835--10866},
  year={2023},
  organization={PMLR}
}

@InProceedings{diffusiondpo,
    author    = {Wallace, Bram and Dang, Meihua and Rafailov, Rafael and Zhou, Linqi and Lou, Aaron and Purushwalkam, Senthil and Ermon, Stefano and Xiong, Caiming and Joty, Shafiq and Naik, Nikhil},
    title     = {Diffusion Model Alignment Using Direct Preference Optimization},
    booktitle = {Proceedings of the IEEE/CVF Conference on Computer Vision and Pattern Recognition (CVPR)},
    month     = {June},
    year      = {2024},
    pages     = {8228-8238}
}

@InProceedings{d3po,
    author    = {Yang, Kai and Tao, Jian and Lyu, Jiafei and Ge, Chunjiang and Chen, Jiaxin and Shen, Weihan and Zhu, Xiaolong and Li, Xiu},
    title     = {Using Human Feedback to Fine-tune Diffusion Models without Any Reward Model},
    booktitle = {Proceedings of the IEEE/CVF Conference on Computer Vision and Pattern Recognition (CVPR)},
    month     = {June},
    year      = {2024},
    pages     = {8941-8951}
}

@inproceedings{dspo,
  title={DSPO: Direct score preference optimization for diffusion model alignment},
  author={Zhu, Huaisheng and Xiao, Teng and Honavar, Vasant G},
  booktitle={International Conference on Learning Representations},
  year={2025}
}

@InProceedings{consistent3d,
    author    = {Wu, Zike and Zhou, Pan and Yi, Xuanyu and Yuan, Xiaoding and Zhang, Hanwang},
    title     = {Consistent3D: Towards Consistent High-Fidelity Text-to-3D Generation with Deterministic Sampling Prior},
    booktitle = {Proceedings of the IEEE/CVF Conference on Computer Vision and Pattern Recognition (CVPR)},
    month     = {June},
    year      = {2024},
    pages     = {9892-9902}
}

@InProceedings{inpo,
    author    = {Lu, Yunhong and Wang, Qichao and Cao, Hengyuan and Wang, Xierui and Xu, Xiaoyin and Zhang, Min},
    title     = {InPO: Inversion Preference Optimization with Reparametrized DDIM for Efficient Diffusion Model Alignment},
    booktitle = {Proceedings of the IEEE/CVF Conference on Computer Vision and Pattern Recognition (CVPR)},
    month     = {June},
    year      = {2025},
    pages     = {28629-28639}
}

@article{smooth,
  title={Smoothed Preference Optimization via ReNoise Inversion for Aligning Diffusion Models with Varied Human Preferences},
  author={Lu, Yunhong and Wang, Qichao and Cao, Hengyuan and Xu, Xiaoyin and Zhang, Min},
  journal={arXiv preprint arXiv:2506.02698},
  year={2025}
}

@inproceedings{npo,
  title={Diffusion-NPO: Negative Preference Optimization for Better Preference Aligned Generation of Diffusion Models},
  author={Wang, Fu-Yun and Shui, Yunhao and Piao, Jingtan and Sun, Keqiang and Li, Hongsheng},
  booktitle={International Conference on Learning Representations},
year={2025}
}

@article{self,
  title={Self-NPO: Negative Preference Optimization of Diffusion Models by Simply Learning from Itself without Explicit Preference Annotations},
  author={Wang, Fu-Yun and Sun, Keqiang and Teng, Yao and Liu, Xihui and Song, Jiaming and Li, Hongsheng},
  journal={arXiv preprint arXiv:2505.11777},
  year={2025}
}

@InProceedings{doodl,
    author    = {Wallace, Bram and Gokul, Akash and Ermon, Stefano and Naik, Nikhil},
    title     = {End-to-End Diffusion Latent Optimization Improves Classifier Guidance},
    booktitle = {Proceedings of the IEEE/CVF International Conference on Computer Vision (ICCV)},
    month     = {October},
    year      = {2023},
    pages     = {7280-7290}
}

@inproceedings{demon,
  title={Training-Free Diffusion Model Alignment with Sampling Demons},
  author={Yeh, Po-Hung and Lee, Kuang-Huei and Chen, Jun-cheng},
  booktitle={International Conference on Learning Representations},
year      = {2025},
}

@article{dno,
  title={Inference-Time Alignment of Diffusion Models with Direct Noise Optimization},
  author={Tang, Zhiwei and Peng, Jiangweizhi and Tang, Jiasheng and Hong, Mingyi and Wang, Fan and Chang, Tsung-Hui},
  journal={arXiv preprint arXiv:2405.18881},
  year={2024}
}

@InProceedings{fantasia,
    author    = {Chen, Rui and Chen, Yongwei and Jiao, Ningxin and Jia, Kui},
    title     = {Fantasia3D: Disentangling Geometry and Appearance for High-quality Text-to-3D Content Creation},
    booktitle = {Proceedings of the IEEE/CVF International Conference on Computer Vision (ICCV)},
    month     = {October},
    year      = {2023},
    pages     = {22246-22256}
}

@InProceedings{magic3d,
    author    = {Lin, Chen-Hsuan and Gao, Jun and Tang, Luming and Takikawa, Towaki and Zeng, Xiaohui and Huang, Xun and Kreis, Karsten and Fidler, Sanja and Liu, Ming-Yu and Lin, Tsung-Yi},
    title     = {Magic3D: High-Resolution Text-to-3D Content Creation},
    booktitle = {Proceedings of the IEEE/CVF Conference on Computer Vision and Pattern Recognition (CVPR)},
    month     = {June},
    year      = {2023},
    pages     = {300-309}
}

@inproceedings{tet,
 author = {Gu, Chun and Yang, Zeyu and Pan, Zijie and Zhu, Xiatian and Zhang, Li},
 booktitle = {Advances in Neural Information Processing Systems},
 pages = {80165--80190},
 publisher = {Curran Associates, Inc.},
 title = {Tetrahedron Splatting for 3D Generation},
 volume = {37},
 year = {2024}
}

@InProceedings{3dgs1,
    author    = {Chen, Zilong and Wang, Feng and Wang, Yikai and Liu, Huaping},
    title     = {Text-to-3D using Gaussian Splatting},
    booktitle = {Proceedings of the IEEE/CVF Conference on Computer Vision and Pattern Recognition (CVPR)},
    month     = {June},
    year      = {2024},
    pages     = {21401-21412}
}

@InProceedings{DreamPropeller,
    author    = {Zhou, Linqi and Shih, Andy and Meng, Chenlin and Ermon, Stefano},
    title     = {DreamPropeller: Supercharge Text-to-3D Generation with Parallel Sampling},
    booktitle = {Proceedings of the IEEE/CVF Conference on Computer Vision and Pattern Recognition (CVPR)},
    month     = {June},
    year      = {2024},
    pages     = {4610-4619}
}

@article{lrm,
  title={Lrm: Large reconstruction model for single image to 3d},
  author={Hong, Yicong and Zhang, Kai and Gu, Jiuxiang and Bi, Sai and Zhou, Yang and Liu, Difan and Liu, Feng and Sunkavalli, Kalyan and Bui, Trung and Tan, Hao},
  journal={arXiv preprint arXiv:2311.04400},
  year={2023}
}

@InProceedings{lgm,
author="Tang, Jiaxiang
and Chen, Zhaoxi
and Chen, Xiaokang
and Wang, Tengfei
and Zeng, Gang
and Liu, Ziwei",
title="LGM: Large Multi-view Gaussian Model for High-Resolution 3D Content Creation",
booktitle="Computer Vision -- ECCV 2024",
year="2025",
publisher="Springer Nature Switzerland",
address="Cham",
pages="1--18",
isbn="978-3-031-73235-5"
}

@inproceedings{integral,
  title={How I Warped Your Noise: a Temporally-Correlated Noise Prior for Diffusion Models},
  author={Chang, Pascal and Tang, Jingwei and Gross, Markus and Azevedo, Vinicius C},
  booktitle={International Conference on Learning Representations},
year="2025"
}

@InProceedings{anything,
    author    = {Yang, Lihe and Kang, Bingyi and Huang, Zilong and Xu, Xiaogang and Feng, Jiashi and Zhao, Hengshuang},
    title     = {Depth Anything: Unleashing the Power of Large-Scale Unlabeled Data},
    booktitle = {Proceedings of the IEEE/CVF Conference on Computer Vision and Pattern Recognition (CVPR)},
    month     = {June},
    year      = {2024},
    pages     = {10371-10381}
}

@article{zero,
  title={Zero123++: a single image to consistent multi-view diffusion base model},
  author={Shi, Ruoxi and Chen, Hansheng and Zhang, Zhuoyang and Liu, Minghua and Xu, Chao and Wei, Xinyue and Chen, Linghao and Zeng, Chong and Su, Hao},
  journal={arXiv preprint arXiv:2310.15110},
  year={2023}
}

@inproceedings{DreamSim,
author = {Fu, Stephanie and Tamir, Netanel Y. and Sundaram, Shobhita and Chai, Lucy and Zhang, Richard and Dekel, Tali and Isola, Phillip},
title = {DreamSim: learning new dimensions of human visual similarity using synthetic data},
year = {2023},
publisher = {Curran Associates Inc.},
address = {Red Hook, NY, USA},
booktitle = {Advances in Neural Information Processing Systems},
articleno = {2208},
numpages = {27},
location = {New Orleans, LA, USA},
}

@inproceedings{strike,
 author = {Roeder, Geoffrey and Wu, Yuhuai and Duvenaud, David K},
 booktitle = {Advances in Neural Information Processing Systems},
 pages = {},
 publisher = {Curran Associates, Inc.},
 title = {Sticking the Landing: Simple, Lower-Variance Gradient Estimators for Variational Inference},
 volume = {30},
 year = {2017}
}

@InProceedings{diverse,
author="Tran, Uy Dieu
and Luu, Minh
and Nguyen, Phong Ha
and Nguyen, Khoi
and Hua, Binh-Son",
title="Diverse Text-to-3D Synthesis with Augmented Text Embedding",
booktitle="Computer Vision -- ECCV 2024",
year="2025",
publisher="Springer Nature Switzerland",
address="Cham",
pages="217--235",
isbn="978-3-031-73226-3"
}

@InProceedings{lods,
author="Yang, Xiaofeng
and Chen, Yiwen
and Chen, Cheng
and Zhang, Chi
and Xu, Yi
and Yang, Xulei
and Liu, Fayao
and Lin, Guosheng",
title="Learn to Optimize Denoising Scores: A Unified and Improved Diffusion Prior for 3D Generation",
booktitle="Computer Vision -- ECCV 2024",
year="2025",
publisher="Springer Nature Switzerland",
address="Cham",
pages="136--152",
isbn="978-3-031-72784-9"
}

@inproceedings{hifa,
  title={HIFA: High-fidelity Text-to-3D Generation with Advanced Diffusion Guidance},
  author={Zhu, Junzhe and Zhuang, Peiye and Koyejo, Sanmi},
  booktitle={International Conference on Learning Representations},
year="2024",
}

@inproceedings{tu2024motioneditor,
  title={Motioneditor: Editing video motion via content-aware diffusion},
  author={Tu, Shuyuan and Dai, Qi and Cheng, Zhi-Qi and Hu, Han and Han, Xintong and Wu, Zuxuan and Jiang, Yu-Gang},
  booktitle={Proceedings of the IEEE/CVF Conference on Computer Vision and Pattern Recognition},
  pages={7882--7891},
  year={2024}
}

@article{tu2024motionfollower,
  title={Motionfollower: Editing video motion via lightweight score-guided diffusion},
  author={Tu, Shuyuan and Dai, Qi and Zhang, Zihao and Xie, Sicheng and Cheng, Zhi-Qi and Luo, Chong and Han, Xintong and Wu, Zuxuan and Jiang, Yu-Gang},
  journal={arXiv preprint arXiv:2405.20325},
  year={2024}
}

@inproceedings{tu2025stableanimator,
  title={Stableanimator: High-quality identity-preserving human image animation},
  author={Tu, Shuyuan and Xing, Zhen and Han, Xintong and Cheng, Zhi-Qi and Dai, Qi and Luo, Chong and Wu, Zuxuan},
  booktitle={Proceedings of the Computer Vision and Pattern Recognition Conference},
  pages={21096--21106},
  year={2025}
}

@article{tu2025stableanimator++,
  title={Stableanimator++: Overcoming pose misalignment and face distortion for human image animation},
  author={Tu, Shuyuan and Xing, Zhen and Han, Xintong and Cheng, Zhi-Qi and Dai, Qi and Luo, Chong and Wu, Zuxuan and Jiang, Yu-Gang},
  journal={arXiv preprint arXiv:2507.15064},
  year={2025}
}

@article{tu2025stableavatar,
  title={Stableavatar: Infinite-length audio-driven avatar video generation},
  author={Tu, Shuyuan and Pan, Yueming and Huang, Yinming and Han, Xintong and Xing, Zhen and Dai, Qi and Luo, Chong and Wu, Zuxuan and Jiang, Yu-Gang},
  journal={arXiv preprint arXiv:2508.08248},
  year={2025}
}

@article{kwak2024geometry,
  title={Geometry-Aware Score Distillation via 3D Consistent Noising and Gradient Consistency Modeling},
  author={Kwak, Min-Seop and Ahn, Donghoon and Kim, In{\`e}s Hyeonsu and Kim, Jin-Hwa and Kim, Seungryong},
  journal={arXiv preprint arXiv:2406.16695},
  year={2024}
}

@inproceedings{asd,
  title={Scaledreamer: Scalable text-to-3d synthesis with asynchronous score distillation},
  author={Ma, Zhiyuan and Wei, Yuxiang and Zhang, Yabin and Zhu, Xiangyu and Lei, Zhen and Zhang, Lei},
  booktitle="Computer Vision -- ECCV 2024",
year="2025",
publisher="Springer Nature Switzerland",
address="Cham",
  pages={1--19},
}
